\documentclass[11pt, a4paper]{article}

\usepackage[utf8]{inputenc}
\usepackage[T1]{fontenc}
\usepackage{geometry}
\geometry{top=1in, bottom=1in, left=1in, right=1in}

\usepackage{amsmath}
\usepackage{amssymb}
\usepackage{amsthm}
\usepackage{amsfonts}
\usepackage{mathtools} 

\usepackage{enumitem}  
\usepackage{booktabs}
\usepackage{multirow}
\usepackage{tabularx}

\usepackage{graphicx}
\usepackage[dvipsnames]{xcolor}

\usepackage[numbers,sort&compress]{natbib} 

\usepackage{hyperref}
\hypersetup{
    colorlinks=true,
    linkcolor=blue,
    citecolor=blue,
    urlcolor=blue,
    pdftitle={Task-Driven Kernel Flows},
    pdfauthor={Anonymous}
}

\newtheorem{theorem}{Theorem}
\newtheorem{proposition}[theorem]{Proposition}
\newtheorem{lemma}[theorem]{Lemma}

\newtheorem{remark}[theorem]{Remark}


\DeclareMathOperator{\Tr}{Tr}       
\DeclareMathOperator{\rank}{rank}   

\DeclareMathOperator{\Cov}{Cov}     
\DeclareMathOperator*{\argmin}{argmin} 



\title{\textbf{Task-Driven Kernel Flows:\\ Label Rank Compression and Laplacian Spectral Filtering}}
\author{
  \begin{minipage}[t]{0.45\textwidth}
    \centering
    \textbf{Hongxi Li} \\
    School of Computer Science and Engineering \\
    Sun Yat-sen University \\
    \texttt{lihx228@mail2.sysu.edu.cn}
  \end{minipage}
  \hfill 
  \begin{minipage}[t]{0.45\textwidth}
    \centering
    \textbf{Chunlin Huang} \\
    Boya Junior High School \\
    Guangxi Hope High School \\
    \texttt{7wgn0326@gmail.com}
  \end{minipage}
}

\date{} 
\begin{document}
\maketitle

\begin{abstract}
  We develop a kernel-centric theory of task-driven feature learning in wide neural networks
  with linear readout and $\ell_2$-regularization. Our analysis proceeds in two stages to bridge the
  gap between interpretable dynamics and structural guarantees. First, operating in a fast-
  readout (adiabatic) regime with squared loss, we derive a closed-form kernel ODE governed
  by the competition between a task-dependent drive operator and isotropic regularization
  decay. This reveals the precise mechanism of alignment: for supervised learning with $C$
  outputs, the drive is a rank-$C$ operator that compresses the kernel into a low-dimensional
  subspace and obeys an explicit ``water-filling'' spectral law.

  Second, we show that the resulting structural phenomena are not artifacts of the time-scale
  separation. Under standard $\ell_2$-regularization and a $C$-dimensional linear readout, any stable
  steady state of the coupled feature--readout dynamics necessarily exhibits label-driven rank
  compression ($\mathrm{rank}(K_\infty) \le C$), and for squared loss satisfies the same spectral truncation law.
  These results are algebraic consequences of the architecture and loss, and do not depend on
  the fast-readout approximation.

  Complementing this deterministic picture, we analyze SGD noise at the kernel level and show
  that, for any convex loss with $C$ outputs, the instantaneous noise covariance is confined to a
  low-dimensional subspace of rank at most $O(C)$, independent of the network width and the
  current parameter values. Thus stochasticity induces restricted diffusion within the task-
  relevant subspace rather than isotropic exploration.

  We further extend our framework in two idealized directions: (i) a population limit, where
  we relate spectral evolution of the kernel integral operator to the bias--variance trade-off;
  and (ii) a stylized self-supervised kernel model driven by a graph Laplacian and a log-det
  repulsion, which produces high-rank, Laplacian-aligned representations. Together, these
  results provide a unified spectral language that contrasts the compressive nature of
  supervised learning with the expansive behavior of self-supervision, while clarifying which
  aspects are rigorous architectural consequences and which arise within specific kernel models.
\end{abstract}

\section{Introduction}

Deep learning owes its empirical success largely to its ability to learn data-dependent representations, or \emph{features}, that adapt to the underlying task structure. Classical learning theory and the Neural Tangent Kernel (NTK) regime \cite{jacot2018neural, arora2019exact} typically describe networks in the infinite-width limit where features remain static (the ``lazy training'' regime \cite{chizat2019lazy}). While theoretically convenient, this perspective fails to capture the rich feature learning dynamics observed in practice, where the kernel evolves significantly to align with the target function \cite{yang2021feature, atanasov2020neural}. Understanding the mechanism governing this kernel evolution is a central challenge in the theory of deep learning.

In this work, we propose a \emph{kernel-centric} framework to analyze feature learning in wide neural networks with a $C$-dimensional linear readout and explicit $\ell_2$-regularization. Instead of tracking high-dimensional parameter trajectories, we focus on the dynamics of the empirical kernel matrix $K(t) \in \mathbb{R}^{N \times N}$, which encodes the pairwise similarities of data representations. We adopt a dual-perspective approach to dissect the learning process:

\textbf{1. Dynamics via adiabatic approximation.}
To gain analytical insight into the \emph{trajectory} of learning, we first adopt a ``fast--slow'' regime, where the linear readout evolves significantly faster than the features. This allows us to ``integrate out'' the readout, yielding a closed-form Ordinary Differential Equation (ODE) for the kernel in an idealized, squared-loss setting. This ODE reveals the physical forces at play: a task-specific drive that promotes alignment and a regularization term that induces decay.

\textbf{2. Structural properties beyond time-scale separation.}
We then show that the key \emph{structural} conclusions suggested by this ODE are not artifacts of the fast--slow approximation. Under standard $\ell_2$-regularization and a $C$-dimensional linear readout, we prove that the \textbf{steady-state topology} (label-driven rank compression and, for squared loss, a spectral truncation law) and the \textbf{geometric structure of SGD noise} are enforced by the loss landscape and network architecture. Our findings on rank compression provide a theoretical grounding for the widely observed phenomenon of \emph{Neural Collapse} \cite{papyan2020prevalence, han2021neural}, where within-class variability vanishes at the end of training.

Our framework thus provides both a mechanistic description of \emph{how} features align over time (via the kernel ODE in a fast-readout regime) and rigorous guarantees on \emph{what} they can converge to at steady state (via fixed-point analysis that does not rely on time-scale separation). We further extend this analysis in two idealized directions: (i) to the population limit, to discuss generalization via the evolution of the kernel integral operator; and (ii) to a stylized Self-Supervised Learning (SSL) model, to highlight the spectral contrast between supervised compression and contrastive or redundancy-reduction based expansion \cite{chen2020simple, jing2022understanding}.

\textbf{Contributions.} Under these modeling assumptions, our main contributions are:

\begin{itemize}
    \item \textbf{Derivation of feature learning dynamics.}
    In a fast-readout (adiabatic) regime, we derive an explicit kernel ODE for wide networks with a linear readout and $\ell_2$-regularization, valid for any convex loss at the level of the driving term. It characterizes feature learning as a thermodynamic competition between task-alignment forces and regularization-induced decay.

    \item \textbf{Rank compression and spectral truncation at steady state.}
    We prove that for $C$-class supervised learning with a $C$-dimensional linear head and $\ell_2$-regularization, any stable steady-state kernel has rank at most $C$, regardless of the relative learning rates of the readout and features. For squared loss, we derive an explicit spectral truncation (``water-filling'') law consistent with the spectral bias observed in deep networks \cite{rahaman2019spectral}. Importantly, these steady-state properties hold for general coupled dynamics, beyond the time-scale separation used to derive the ODE.

    \item \textbf{Intrinsic low-rank SGD noise.}
    We analyze the stochastic gradients at the kernel level and show that, for any convex loss with $C$ outputs, the instantaneous noise covariance matrix is confined to a low-rank subspace determined by the output dimension $C$. This architectural constraint acts as a built-in filter, forcing SGD noise to lie in (and thus diffuse within) the task-relevant subspace rather than exciting arbitrary kernel directions.

    \item \textbf{Population limit and generalization.}
    We extend our framework to the infinite-sample limit, defining the evolution of the associated kernel integral operator and showing how the spectral truncation mechanism directly shapes the bias–variance trade-off on unseen data, in contrast to fixed-kernel regimes such as NTK.

    \item \textbf{Spectral unification of supervised and self-supervised learning.}
    We analyze a stylized kernel model for SSL driven by a graph Laplacian \cite{belkin2003laplacian} and a log-determinant repulsion. In this model, the learned kernel has a high-rank, Laplacian-aligned spectrum with an explicit $(\nu_i + \mathrm{const})^{-1}$ shape, leading to ``whitened'' representations \cite{zbontar2021barlow}. This provides a unified spectral language to contrast the compressive, low-rank nature of supervision with the expansive, high-rank nature of self-supervision.
\end{itemize}

\paragraph{Assumptions and scope.}
Throughout the paper we work with a standard but idealized setting:

(i) \textbf{Linear readout with $\ell_2$-regularization.}
The network output is produced by a $C$-dimensional linear head on top of the features.
We apply explicit $\ell_2$-regularization to both the readout and (in the free-feature model)
the backbone features.

(ii) \textbf{Feature-learning / wide-backbone regime.}
We assume the backbone is sufficiently expressive that the representation dynamics can be
modeled directly at the level of the empirical feature matrix $\Phi$ and its kernel
$K = \Phi^\top \Phi$, without further architectural constraints; the NTK / lazy regime is
\emph{not} our focus.

(iii) \textbf{Squared loss for closed-form spectral laws.}
Our closed-form kernel ODE and water-filling spectral truncation law are derived for the
squared loss, which yields an autonomous dynamics in the eigenbasis of the label Gram
matrix. For more general convex losses (e.g.\ cross-entropy), the same rank-compression
mechanism applies at steady state, but we do not claim closed-form spectral trajectories.

(iv) \textbf{Stylized SSL objective.}
For self-supervised and semi-supervised settings we study a stylized kernel objective based
on a graph Laplacian and a log-det repulsion. This is intended as a canonical spectral
model that makes the contrast between supervised compression and SSL expansion analytically
transparent, rather than an exact derivation of any particular method such as SimCLR or
BYOL.

Within this setting, our \emph{rigorous structural results}---such as label-driven rank
compression $\mathrm{rank}(K_\infty) \le C$, the equivalence between weight decay and a
nuclear-norm bias on the end-to-end mapping, and the low-rank structure of SGD noise---are
algebraic consequences of the $C$-dimensional output bottleneck and $\ell_2$-regularization.
By contrast, the \emph{exact} water-filling spectrum in the supervised case and the
$(\nu_i + \mathrm{const})^{-1}$ spectral shape in our SSL model should be viewed as
behaviors of these idealized kernel flows, not literal descriptions of all practical
training setups with cross-entropy, batch normalization, or attention.

\section{Model and Two-Time-Scale Dynamics} \label{sec:model}

\subsection{Intuition: The Fast-Readout Hypothesis}
\label{sec:intuition}

Deep neural networks can be structurally decomposed into two parts: a non-linear feature extractor $\varphi(\cdot)$, which maps inputs to a high-dimensional embedding space, and a linear readout head $W$, which maps embeddings to predictions. The dynamics of these two components are often fundamentally different.

To build intuition, consider a simplified scenario where the feature extractor is frozen (i.e., $\Phi$ is constant). In this case, optimizing the network reduces to training a linear model (e.g., linear regression or logistic regression) on fixed features. Since the loss function is typically convex with respect to $W$ (and strictly convex with $\ell_2$ regularization), the gradient dynamics for $W$ are simple: $W$ converges exponentially fast to a unique optimum, denoted as $W^*(\Phi)$.

In reality, of course, $\Phi$ evolves alongside $W$. However, as training progresses, we often observe that deep representations stabilize much slower than the top linear layer. This separation is even more pronounced in regimes such as transfer learning or when specific learning rate schedules (large $\eta_W$, small $\eta_\Phi$) are employed.

Based on this observation, we proceed with a \emph{time-scale separation} ansatz. We assume that the readout dynamics are sufficiently fast relative to the feature dynamics such that $W$ effectively equilibrates instantaneously.
$$ W(t) \approx W^*(\Phi(t)) \quad \text{for all } t. $$
This ``adiabatic'' approximation allows us to eliminate $W$ from the equations of motion. Instead of tracking the coupled system $(\Phi, W)$, we can focus entirely on the effective dynamics of the features driven by the \emph{optimal} readout. While this is a bold simplification, it captures the essential feedback loop: features evolve to minimize the loss, assuming the classifier will always make the best use of them.

Mathematically, this reduces the original objective $\mathcal{L}(\Phi, W)$ to an effective functional $\widetilde{\mathcal{L}}(\Phi)$ depending solely on the kernel, which we formalize next.

\subsection{Mathematical Formulation}
\label{sec:math_setup}

We consider a supervised learning task with a dataset of $N$ samples $X = [x_1, \dots, x_N] \in \mathbb{R}^{d_{in} \times N}$ and corresponding targets $Y = [y_1, \dots, y_N]^\top \in \mathbb{R}^{N \times C}$, where $C$ is the number of classes (or output dimensions).

The neural network is modeled as a composition of a feature map $\Phi(\cdot)$ and a linear readout $W \in \mathbb{R}^{C \times k}$. Let $\Phi \in \mathbb{R}^{k \times N}$ denote the collective feature matrix where the $i$-th column is $\phi_i = \Phi(x_i)$. The network output is given by $\hat{Y} = (W\Phi)^\top = \Phi^\top W^\top \in \mathbb{R}^{N \times C}$.
We study the training dynamics under a regularized empirical risk minimization framework. The total objective function $\mathcal{J}(W, \Phi)$ is defined as:
\begin{equation}
    \label{eq:total_loss}
    \mathcal{J}(W, \Phi) = \mathcal{L}(\hat{Y}, Y) + \frac{\lambda}{2} \|W\|_F^2 + \frac{\mu}{2} \|\Phi\|_F^2,
\end{equation}
where $\mathcal{L}$ is a convex loss function (e.g., squared error or cross-entropy), $\lambda > 0$ is the regularization coefficient for the readout, and $\mu \ge 0$ represents the weight decay for the feature extractor.

Our primary object of study is the \textbf{empirical kernel matrix} (or Gram matrix) $K \in \mathbb{R}^{N \times N}$, defined as the inner product of features:
\begin{equation}
    K(t) = \Phi(t)^\top \Phi(t).
\end{equation}
The kernel $K$ captures the geometry of the data representation. Importantly, while the parameter space of $\Phi$ may be vast (and potentially infinite in the wide limit), the dynamics of learning on a finite dataset are entirely encapsulated by the evolution of this $N \times N$ matrix.

\section{Derivation of the Kernel ODE}
\label{sec:derivation}

In this section, we derive the exact differential equation governing the evolution of $K(t)$ under the fast-readout assumption.

\subsection{The Fast-Readout Limit}
Following the intuition in Section \ref{sec:intuition}, we assume the readout $W$ evolves on a sufficiently fast time scale such that it effectively minimizes the objective $\mathcal{J}$ for the current fixed features $\Phi$ at every instant $t$. We define the optimal readout $W^*(\Phi)$ as:
\begin{equation}
    W^*(\Phi) = \argmin_{W} \left( \mathcal{L}((W\Phi)^\top, Y) + \frac{\lambda}{2} \|W\|_F^2 \right).
\end{equation}
Substituting $W^*$ back into Eq. \eqref{eq:total_loss} yields the \emph{effective feature loss} $\widetilde{\mathcal{L}}(\Phi) = \mathcal{J}(W^*(\Phi), \Phi)$. Feature learning is then modeled as a gradient flow on this effective landscape:
\begin{equation}
    \label{eq:phi_flow}
    \dot{\Phi} = - \nabla_\Phi \widetilde{\mathcal{L}}(\Phi).
\end{equation}

\subsection{General Kernel Dynamics}
We first derive a general evolution equation valid for any convex loss function $\mathcal{L}$.
Applying the envelope theorem, the total derivative of the effective loss with respect to $\Phi$ is simply the partial derivative of the joint loss evaluated at the optimum $W^*$:
\begin{equation}
    \nabla_\Phi \widetilde{\mathcal{L}}(\Phi) = \nabla_\Phi \mathcal{J}(W, \Phi) \Big|_{W=W^*}.
\end{equation}
Using the chain rule on Eq. \eqref{eq:total_loss}, we obtain:
\begin{equation}
    \nabla_\Phi \mathcal{J} = W^\top \frac{\partial \mathcal{L}}{\partial \hat{Y}^\top} + \mu \Phi = -W^\top R^\top + \mu \Phi,
\end{equation}
where we define the \textbf{generalized residual matrix} $R \in \mathbb{R}^{N \times C}$ as the negative gradient of the loss with respect to predictions: $R \coloneqq -\nabla_{\hat{Y}} \mathcal{L}$. For squared loss, $R = Y - \hat{Y}$.
Substituting this into Eq. \eqref{eq:phi_flow}, the feature dynamics become:
\begin{equation}
    \label{eq:phi_dynamics}
    \dot{\Phi} = W^{*\top} R^\top - \mu \Phi.
\end{equation}
This equation reveals that features evolve via two forces: a \emph{driving force} that pulls features to align with the back-propagated residual signal ($W^{*\top} R^\top$), and a \emph{decay force} ($-\mu \Phi$) induced by regularization.

Now, we compute the time derivative of the kernel $K = \Phi^\top \Phi$:
\begin{equation}
    \dot{K} = \dot{\Phi}^\top \Phi + \Phi^\top \dot{\Phi}.
\end{equation}
Plugging in Eq. \eqref{eq:phi_dynamics} and noting that $\hat{Y} = \Phi^\top W^{*\top}$:
\begin{align}
    \dot{K} &= (R W^* \Phi - \mu \Phi^\top \Phi) + (\Phi^\top W^{*\top} R^\top - \mu \Phi^\top \Phi) \nonumber \\
            &= R \hat{Y}^\top + \hat{Y} R^\top - 2\mu K.
    \label{eq:general_ode}
\end{align}
Eq.~\eqref{eq:general_ode} is the \textbf{task-driven kernel ODE}. It states that the kernel's rate of change is determined by the alignment between the model's predictions $\hat{Y}$ and the task residuals $R$, opposed by a uniform decay.

\paragraph{Interpretation: Hebbian-like Feedback.} 
Equation \eqref{eq:general_ode} admits a compelling physical interpretation. The driving term $R \hat{Y}^\top$ is the outer product between the residual error $R$ and the current prediction $\hat{Y}$. This is analogous to a supervised form of \emph{Hebbian learning} (``fire together, wire together''): the kernel strength increases along directions where the model's predictions actively correlate with the error signal. In contrast, the $-2\mu K$ term acts as a uniform forgetting mechanism. Feature learning thus emerges as a selection process: the network reinforces directions useful for reducing error while decaying irrelevant components.

\subsection{Closed-Form Dynamics for Squared Loss}
\label{sec:kernel_riccati}

To perform spectral analysis, we specialize to the case of Mean Squared Error (MSE), $\mathcal{L}(\hat{Y}, Y) = \frac{1}{2} \|\hat{Y} - Y\|_F^2$.

\textbf{1. Explicit Readout.} For MSE, the optimal readout $W^*$ is the solution to a ridge regression problem. Using the matrix inversion lemma, the prediction $\hat{Y}$ can be written in a kernelized form independent of the feature dimension $k$:
\begin{equation}
    \hat{Y} = K (K + \lambda I)^{-1} Y.
\end{equation}
\textbf{2. Explicit Residual.} Consequently, the residual $R = Y - \hat{Y}$ becomes:
\begin{equation}
    R = Y - K (K + \lambda I)^{-1} Y = \lambda (K + \lambda I)^{-1} Y.
\end{equation}
\textbf{3. The Drive Operator.}
Substituting $\hat{Y}$ and $R$ into the general ODE (Eq. \eqref{eq:general_ode}), the driving term $R \hat{Y}^\top + \hat{Y} R^\top$ becomes:
\begin{equation}
    \mathcal{D}(K) = \lambda (K+\lambda I)^{-1} Y Y^\top (K+\lambda I)^{-1} K + \text{h.c.},
\end{equation}
where h.c. denotes the Hermitian conjugate (transpose) of the first term. Since $K$ and $(K+\lambda I)^{-1}$ commute, we can rearrange terms. Let $M_Y = Y Y^\top$ be the \emph{label kernel matrix}. We arrive at the final closed-form ODE:
\begin{equation}
    \label{eq:mse_ode}
    \dot{K}(t) = \lambda \left[ (K+\lambda I)^{-1} M_Y (K+\lambda I)^{-1} K + K (K+\lambda I)^{-1} M_Y (K+\lambda I)^{-1} \right] - 2\mu K.
\end{equation}
This equation is the foundation of our subsequent analysis. The driving term depends explicitly on the label structure $M_Y$, which has rank at most $C$. This rank bottleneck is the origin of the compression phenomenon we discuss in Section 4.

\subsection{Intuition: Scalar Dynamics and Spectral Filtering}
\label{sec:scalar_intuition}

To demystify the matrix ODE in Eq. \eqref{eq:mse_ode}, consider a simplified scalar case where the data consists of a single sample with label $y$ and kernel value $k(t) \in \mathbb{R}$. The equation simplifies to:
\begin{equation}
    \label{eq:scalar_ode}
    \dot{k} = \frac{2\lambda k}{(\lambda + k)^2} y^2 - 2\mu k.
\end{equation}
This scalar dynamics highlights a crucial \textbf{signal-to-noise filter mechanism}:
\begin{itemize}
    \item \textbf{Reinforcement:} The growth term $\frac{k}{(\lambda+k)^2}$ is non-monotonic. It vanishes when $k \to 0$ (no features) and $k \to \infty$ (saturation), peaking when $k \approx \lambda$. This implies that the network actively reinforces features that are ``just right''—neither too weak to be useful nor too strong to be unstable.
    \item \textbf{Thresholding:} Feature growth is only possible if the signal strength (proportional to $y^2$) overcomes the regularization barrier $\mu$. If the label signal is too weak ($y^2 \lesssim \mu \lambda$), the decay term dominates, and the feature $k$ collapses to zero.
\end{itemize}

\begin{figure}[t!]
    \centering
    \includegraphics[width=\linewidth]{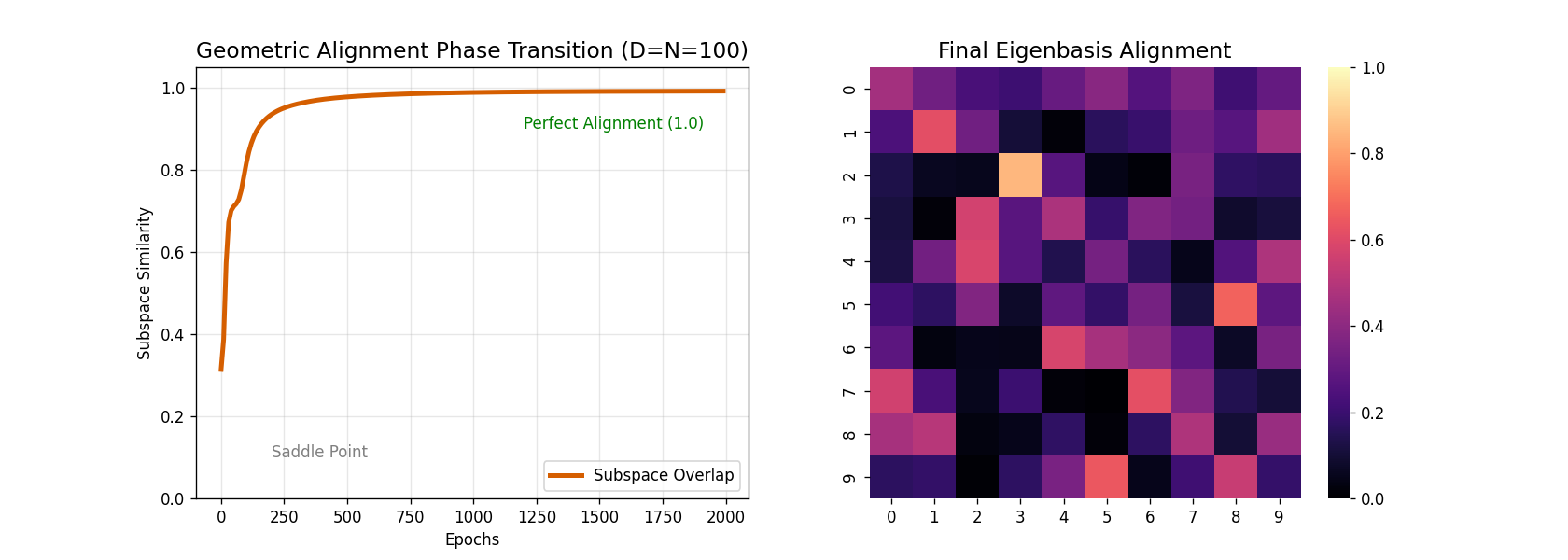} 
    \caption{\textbf{Dynamics of Geometric Alignment.} (Left) The evolution of the subspace overlap score during training (with $D=N=100$). The system exhibits a distinct phase transition, escaping a saddle point to achieve perfect alignment (Score $\approx 1.0$) with the target subspace. (Right) The heatmap of the final feature kernel $K_\infty$ entries. The checkerboard pattern confirms the commutativity structure $[K_\infty, M_Y] \approx 0$, validating that the learned features share the same eigenbasis as the labels.}
    \label{fig:geometric_alignment}
\end{figure}

As we show in the next section, this scalar intuition generalizes to the spectral domain: the matrix ODE applies a similar filter to the \emph{eigenvalues} of the kernel, selectively amplifying eigenmodes that align with the labels while truncating others (as visualized in Figure \ref{fig:geometric_alignment}).

\section{Convergence Analysis}
\label{sec:convergence}

Before characterizing the structural properties of the learned features, we must first establish that the learning dynamics are well-behaved mathematically. Although the matrix differential equation for $K(t)$ (Eq. \eqref{eq:mse_ode}) is non-linear and high-dimensional, its convergence properties follow directly from the construction of the effective loss function defined in Section \ref{sec:model}. Specifically, we prove that under the proposed dynamics, the kernel matrix $K(t)$ converges globally to a unique stationary state.

\subsection{Energy Landscape and Lyapunov Stability}

The stability analysis relies on identifying the effective loss $\widetilde{\mathcal{L}}(\Phi)$ as a Lyapunov function for the system. Since the dynamics are defined as a gradient flow, the system naturally seeks to minimize this energy function.

\begin{theorem}[Global Convergence]
\label{thm:convergence}
Assume the regularization coefficients satisfy $\lambda > 0$ and $\mu > 0$. For any initialization $\Phi(0)$, the feature trajectory $\Phi(t)$ remains bounded for all $t \ge 0$. Furthermore, the trajectory converges to a unique limit point $\Phi_\infty$, and consequently, the kernel matrix converges to a unique steady-state matrix $K_\infty$.
\end{theorem}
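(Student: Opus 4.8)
The plan is to recognize the effective feature loss $\eloss$ as a coercive, real-analytic Lyapunov function for the flow $\dot\Phi = -\nabla_\Phi\eloss(\Phi)$, and then to invoke the standard convergence theory for gradient flows of analytic functions. The first step is to make $\eloss$ explicit. For squared loss the inner ridge problem has a closed-form value: with $K=\Phi^\top\Phi$,
$$
\min_W\Bigl(\tfrac12\|(W\Phi)^\top - Y\|_F^2 + \tfrac{\lambda}{2}\|W\|_F^2\Bigr) = \tfrac{\lambda}{2}\,\tr\!\bigl(Y^\top (K+\lambda I)^{-1}Y\bigr),
$$
so that $\eloss(\Phi) = \tfrac{\lambda}{2}\tr\!\bigl(Y^\top(\Phi^\top\Phi+\lambda I)^{-1}Y\bigr) + \tfrac{\mu}{2}\|\Phi\|_F^2$. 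Since $\det(\Phi^\top\Phi+\lambda I)\ge\lambda^N>0$ for every $\Phi$ (here $\lambda>0$ is essential), this is a rational function with nowhere-vanishing denominator, hence real-analytic on all of $\R^{k\times N}$; moreover the label term is nonnegative and bounded above by $\tfrac12\|Y\|_F^2$. This closed form lets me work directly with $\eloss$, bypassing the envelope theorem used to derive the ODE.

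Boundedness is then immediate. Along the flow, $\tfrac{d}{dt}\eloss(\Phi(t)) = -\|\nabla_\Phi\eloss(\Phi(t))\|_F^2 \le 0$, so $\eloss(\Phi(t))\le\eloss(\Phi(0)) =: E_0$ for all $t\ge0$. Because the label term is nonnegative, $\tfrac{\mu}{2}\|\Phi(t)\|_F^2 \le \eloss(\Phi(t)) \le E_0$, i.e.\ $\|\Phi(t)\|_F\le\sqrt{2E_0/\mu}$; the hypothesis $\mu>0$ is precisely the coercivity that rules out escape to infinity. Since $-\nabla_\Phi\eloss$ is smooth, hence locally Lipschitz, a bounded trajectory extends to all of $[0,\infty)$, which is the first assertion.

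For convergence of the trajectory I would argue in two steps. By the usual LaSalle-type analysis of gradient flows, the $\omega$-limit set $\Omega = \omega(\Phi(0))$ is nonempty, compact, connected, invariant, contained in the critical set $\{\nabla_\Phi\eloss = 0\}$, and $\eloss$ is constant on $\Omega$. To upgrade this to convergence to a single point, I invoke the {\L}ojasiewicz gradient inequality: real-analyticity of $\eloss$ guarantees that near any $\bar\Phi$ there are constants $c>0$, $\theta\in(0,\tfrac12]$ with $|\eloss(\Phi)-\eloss(\bar\Phi)|^{1-\theta}\le c\,\|\nabla_\Phi\eloss(\Phi)\|_F$ locally. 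The classical consequence ({\L}ojasiewicz's theorem on analytic gradient flows, cf.\ the treatment by Absil, Mahony and Andrews) is that a bounded gradient trajectory of an analytic function has finite length, $\int_0^\infty\|\dot\Phi(t)\|_F\,dt<\infty$, and hence converges: $\Phi(t)\to\Phi_\infty$ for some critical point $\Phi_\infty$. Continuity of $\Phi\mapsto\Phi^\top\Phi$ then gives $K(t)\to\Phi_\infty^\top\Phi_\infty =: K_\infty$, the final assertion.

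The point needing most care is the word ``unique.'' Because $\eloss$ is invariant under the gauge action $\Phi\mapsto Q\Phi$ with $Q\in O(k)$, its critical points are never isolated, and in general it may have several local minima, so neither $\Phi_\infty$ nor $K_\infty$ can be independent of the initialization. The intended — and correct — reading is that each trajectory has a well-defined limit, i.e.\ its $\omega$-limit set is the single point $\Phi_\infty$; the genuine content is ruling out slow spiralling of $\Phi(t)$ around the critical manifold, which is exactly what the {\L}ojasiewicz inequality supplies. The remaining inputs, real-analyticity of $\eloss$ and boundedness of the trajectory, are the elementary verifications above, so the proof essentially reduces to citing the analytic-gradient-flow convergence theorem. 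The same scheme applies to any convex loss that is real-analytic and bounded below in $\hat Y$, since $\eloss$ stays coercive through the $\tfrac{\mu}{2}\|\Phi\|_F^2$ term.
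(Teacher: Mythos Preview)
Your proposal is correct and follows essentially the same three-step strategy as the paper: monotone decrease of $\eloss$ along its own gradient flow, coercivity via the $\tfrac{\mu}{2}\|\Phi\|_F^2$ term to obtain boundedness, and the {\L}ojasiewicz gradient inequality (from real-analyticity of $\eloss$) to upgrade compactness of the $\omega$-limit set to convergence to a single critical point. Your treatment is in fact slightly more careful than the paper's --- you explicitly verify real-analyticity via the closed-form ridge value, and you correctly flag that ``unique'' must be read as \emph{unique per trajectory} because the $O(k)$ gauge symmetry precludes isolated critical points in $\Phi$-space.
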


\begin{proof}
The proof proceeds in three steps: establishing monotonicity, proving boundedness, and invoking analytic convergence properties.

\textbf{1. Monotonicity.} By definition, the dynamics follow the gradient flow $\dot{\Phi} = - \nabla_\Phi \widetilde{\mathcal{L}}(\Phi)$. The time derivative of the effective loss along the trajectory is:
\begin{equation}
    \frac{d}{dt} \widetilde{\mathcal{L}}(\Phi(t)) = \left\langle \nabla_\Phi \widetilde{\mathcal{L}}, \dot{\Phi} \right\rangle = - \| \nabla_\Phi \widetilde{\mathcal{L}} \|_F^2 = - \| \dot{\Phi} \|_F^2 \le 0.
\end{equation}
Thus, the objective function is strictly non-increasing along the trajectory unless the system is at a critical point where $\dot{\Phi} = 0$.

\textbf{2. Boundedness (Coercivity).} The effective loss function consists of a non-negative data-fitting term (the minimum of the convex ridge regression problem) plus a regularization term on the features:
\begin{equation}
    \widetilde{\mathcal{L}}(\Phi) = \min_W \mathcal{J}(W, \Phi) \ge \frac{\mu}{2} \|\Phi\|_F^2 = \frac{\mu}{2} \Tr(K).
\end{equation}
Since the loss is non-increasing, we have $\widetilde{\mathcal{L}}(\Phi(t)) \le \widetilde{\mathcal{L}}(\Phi(0)) \coloneqq \mathcal{L}_0$ for all $t > 0$. This implies a bound on the Frobenius norm of the features:
\begin{equation}
    \|\Phi(t)\|_F^2 \le \frac{2 \mathcal{L}_0}{\mu}.
\end{equation}
Because the sublevel sets of the objective function are compact (guaranteed by the coercivity condition $\mu > 0$), the trajectory $\Phi(t)$ is confined to a bounded region in the parameter space $\mathbb{R}^{k \times N}$.

\textbf{3. Convergence.} While standard results such as LaSalle's Invariance Principle guarantee convergence to the \textit{set} of critical points, we can make a stronger statement. Since the objective function $\widetilde{\mathcal{L}}(\Phi)$ is real-analytic (it involves rational functions and polynomials of the matrix entries), the \textbf{\L ojasiewicz-Simon gradient inequality} applies. This inequality ensures that the trajectory has finite length and converges to a single unique critical point $\Phi_\infty$, rather than oscillating or drifting within a continuum of critical points. As a result, the limit $K_\infty = \Phi_\infty^\top \Phi_\infty$ is well-defined and unique.
\end{proof}

\subsection{The Fixed-Point Equation}

Having established that a limit exists, we now derive the condition that the steady-state kernel must satisfy. By setting the time derivative $\dot{K} = 0$ in Eq. \eqref{eq:mse_ode}, we obtain the algebraic fixed-point equation:

\begin{equation}
    \label{eq:fixed_point_algebraic}
     \lambda \left[ \Sigma_\infty M_Y \Sigma_\infty K_\infty + K_\infty \Sigma_\infty M_Y \Sigma_\infty \right] = 2\mu K_\infty,
\end{equation}
where we have defined the equilibrium \emph{resolvent matrix} as $\Sigma_\infty \coloneqq (K_\infty + \lambda I)^{-1}$ and the label correlation matrix as $M_Y \coloneqq Y Y^\top$.

This equation encapsulates the fundamental trade-off of the learning dynamics:
\begin{enumerate}
    \item The \textbf{Driving Force} (LHS): The term involving $M_Y$ represents the pressure from the labels to align the kernel with the target data structure.
    \item The \textbf{Regularization Force} (RHS): The term $2\mu K_\infty$ represents the penalty on feature complexity, which suppresses the eigenvalues of the kernel.
\end{enumerate}
In the next section, we will analyze the spectral properties of the solution $K_\infty$ to reveal how this balance leads to the phenomenon of rank compression.

\section{Spectral Analysis and Low-Rank Compression}
\label{sec:spectral_analysis}

In this section, we analyze the structural properties of the learned representation. We proceed in two steps:
\begin{enumerate}
    \item \textbf{Geometric Orientation (Structural):} For any convex loss within our $\ell_2$-regularized, $C$-output setting, we prove that the representation is compressed into a low-dimensional subspace determined by the labels.
    \item \textbf{Spectral Magnitude (Squared Loss):} For the squared loss, we derive the exact eigenvalues of the steady-state kernel, revealing a sharp phase transition (spectral truncation).
\end{enumerate}

\subsection{Label-Driven Rank Compression as an Architectural Law}
\label{subsec:rank_compression}

First, we characterize the "destination" of feature learning. Consider the steady-state equationderived from the general ODE (Eq. \ref{eq:general_ode}) by setting $\dot{K}=0$:
\begin{equation}
    \label{eq:steady_state_lyapunov}
    K_\infty M(K_\infty) + M(K_\infty) K_\infty = 2\lambda \mu K_\infty.
\end{equation}
Here, the driving matrix is $M(K) = B(K)B(K)^\top$. The crucial observation is dimensional: while $K_\infty \in \mathbb{R}^{N \times N}$, the residual matrix $B(K) \in \mathbb{R}^{N \times C}$ has only $C$ columns. Thus, the rank of the driving force is intrinsically bounded:
\begin{equation}
    \text{rank}(M(K_\infty)) = \text{rank}(B(K_\infty)) \le C.
\end{equation}

The following theorem proves that weight decay acts as a "dimensional guillotine," eliminating all feature dimensions not actively supported by this low-rank driving force.

\begin{theorem}[Label-Driven Rank Compression]

\label{thm:rank_compression}
For any regularization strength $\mu > 0$, the nullspace of the driving force $M(K_\infty)$ is contained in the nullspace of the learned kernel $K_\infty$. Consequently, the rank of the representation is bounded by the number of classes:
\begin{equation}
    \text{rank}(K_\infty) \le \text{rank}(M(K_\infty)) \le C.
\end{equation}
\end{theorem}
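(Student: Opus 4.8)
The plan is to treat Eq.~\eqref{eq:steady_state_lyapunov} as a Lyapunov-type identity and extract the nullspace inclusion by a short test-vector argument, using only two structural facts that hold at \emph{any} fixed point: the driving matrix $M(K_\infty) = B(K_\infty)B(K_\infty)^\top$ is symmetric positive semidefinite with $\rank \le C$, and $K_\infty = \Phi_\infty^\top \Phi_\infty$ is symmetric positive semidefinite by its Gram structure. No information about how $B(K_\infty)$ depends on $K_\infty$ is needed; the statement is purely algebraic, a consequence of the $C$-dimensional bottleneck in $M$ and of $\mu>0$.

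Concretely, I would proceed as follows. (i) Fix $v \in \ker M(K_\infty)$, so $M(K_\infty) v = 0$. (ii) Contract Eq.~\eqref{eq:steady_state_lyapunov} with $v^\top$ on the left and $v$ on the right: the term $v^\top K_\infty M(K_\infty) v$ vanishes because $M(K_\infty) v = 0$, and the term $v^\top M(K_\infty) K_\infty v = (M(K_\infty) v)^\top K_\infty v$ vanishes by symmetry of $M(K_\infty)$, leaving $2\lambda\mu\, v^\top K_\infty v = 0$. (iii) Since $\lambda,\mu > 0$ and $K_\infty \succeq 0$, this forces $v^\top K_\infty v = 0$; writing $v^\top K_\infty v = \|\Phi_\infty v\|^2$ gives $\Phi_\infty v = 0$, hence $K_\infty v = 0$. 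This proves $\ker M(K_\infty) \subseteq \ker K_\infty$. (iv) Because both matrices are symmetric, passing to orthogonal complements yields $\mathrm{range}(K_\infty) \subseteq \mathrm{range}(M(K_\infty))$, so rank--nullity gives $\rank(K_\infty) \le \rank(M(K_\infty))$. (v) Finally $\rank(M(K_\infty)) = \rank\big(B(K_\infty)B(K_\infty)^\top\big) = \rank(B(K_\infty)) \le C$, since $B(K_\infty) \in \R^{N\times C}$.

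There is no genuine obstacle here; the only points that deserve attention are where the hypotheses are actually used. The strict positivity $\mu>0$ is essential: with $\mu=0$ the right-hand side of Eq.~\eqref{eq:steady_state_lyapunov} disappears and step (ii) becomes vacuous (and indeed the rank bound can fail without weight decay). Positive semidefiniteness of $K_\infty$ is what upgrades the scalar identity $v^\top K_\infty v = 0$ to the operator identity $K_\infty v = 0$; without it one would only control a quadratic form. The self-referential appearance of $K_\infty$ inside $M(K_\infty)$ is irrelevant to the argument, since we invoke only the value, symmetry, and rank of $M$ at the fixed point — the substantive work of pinning down that value, and the exact steady-state eigenvalues, is deferred to the squared-loss spectral analysis that follows.
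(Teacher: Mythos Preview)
Your proposal is correct and follows essentially the same route as the paper: a test-vector argument that sandwiches the steady-state equation \eqref{eq:steady_state_lyapunov} with $v^\top(\cdot)v$ for $v\in\ker M(K_\infty)$, uses symmetry of $M$ to kill both drive terms, and then invokes $\lambda,\mu>0$ and $K_\infty\succeq 0$ to conclude $K_\infty v=0$. The only cosmetic difference is that the paper first right-multiplies by $v$ (obtaining $M(K_\infty)K_\infty v = 2\lambda\mu K_\infty v$) and then left-multiplies by $v^\top$, whereas you contract in one step; the content is identical.
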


\begin{proof}
Let $v \in \mathbb{R}^N$ be any vector in the nullspace of the driving matrix, i.e., $M(K_\infty) v = 0$. Since $M(K_\infty)$ is symmetric, $v$ is also orthogonal to the image of $M(K_\infty)$.
Right-multiplying the steady-state equation \eqref{eq:steady_state_lyapunov} by $v$, we obtain:
\begin{equation}
    K_\infty \underbrace{M(K_\infty) v}_{0} + M(K_\infty) K_\infty v = 2\lambda \mu K_\infty v \implies M(K_\infty) K_\infty v = 2\lambda \mu K_\infty v.
\end{equation}
Now, we left-multiply by $v^\top$:
\begin{equation}
    v^\top M(K_\infty) K_\infty v = 2\lambda \mu v^\top K_\infty v.
\end{equation}
Observe the Left Hand Side (LHS): since $M(K_\infty)$ is symmetric, $v^\top M(K_\infty) = (M(K_\infty)v)^\top = 0$. Thus, the LHS is strictly zero.
The equation reduces to:
\begin{equation}
    0 = 2\lambda \mu (v^\top K_\infty v).
\end{equation}
Since $\lambda > 0$ and $\mu > 0$, we must have $v^\top K_\infty v = 0$. Because the kernel matrix $K_\infty$ is Positive Semi-Definite (PSD), $v^\top K_\infty v = 0$ implies $K_\infty v = 0$.

\textbf{Conclusion:} We have shown that $M(K_\infty) v = 0 \implies K_\infty v = 0$. In set-theoretic terms, $\text{ker}(M(K_\infty)) \subseteq \text{ker}(K_\infty)$. Taking the orthogonal complement implies $\text{Im}(K_\infty) \subseteq \text{Im}(M(K_\infty))$. Therefore, $\text{rank}(K_\infty) \le \text{rank}(M(K_\infty)) \le C$.
\end{proof}

\paragraph{Physical Interpretation.} This result provides a rigorous justification for the Neural Collapse phenomenon. It asserts that weight decay forces the network to "forget" any variation in the data that is not correlated with the label residuals. The feature space collapses from dimension $N$ (number of samples) down to $C$ (number of classes), regardless of the network width.

\subsection{Exact Solution: The Squared Loss Case}
\label{subsec:squared_loss}

While the rank compression theorem (Theorem \ref{thm:convergence}) establishes the existence of a low-rank limit, it provides an upper bound rather than an explicit characterization. To determine the precise magnitude of the learned features and the exact threshold for collapse, we specialize our analysis to the \textbf{Squared Loss}. 

In this setting, the residual map becomes linear, allowing us to solve the fixed-point equation analytically. This yields a closed-form law for the spectrum of the learned kernel, revealing a sharp phase transition between "signal" and "noise."

\subsubsection{The Alignment Principle: Geometry from Energy Minimization}
\label{subsec:alignment_principle}

We begin by determining the geometric relationship between the learned kernel $K_\infty$ and the task structure $M_Y$. While the algebraic fixed-point equation (Eq. \ref{eq:fixed_point_algebraic}) admits a solution, we must verify that this solution represents a stable, energy-minimizing configuration.

A fundamental question is: \textit{Why should the internal features align with the external labels?} The answer lies in the variational structure of the problem.

\begin{lemma}[Variational Alignment Principle]
\label{lem:alignment}
Let $\mathcal{L}(K)$ be the effective potential (loss) of the system. The global minimizers of $\mathcal{L}(K)$, and thus the stable steady states of the kernel dynamics, are configurations where the feature kernel $K_\infty$ and the label correlation matrix $M_Y$ are \textbf{simultaneously diagonalizable} (commute). Furthermore, their eigenvectors are aligned.
\end{lemma}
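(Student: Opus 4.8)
The plan is to exhibit $\widetilde{\mathcal{L}}(K)$ explicitly for the squared loss, reduce its minimization to an eigenvalue problem, and use a rearrangement (von Neumann trace) argument to force simultaneous diagonalizability. First I would write the effective potential in closed form: using $\hat{Y} = K(K+\lambda I)^{-1}Y$ and $R = \lambda(K+\lambda I)^{-1}Y$, the ridge objective evaluated at $W^*$ gives a data term of the form $\tfrac{\lambda}{2}\Tr\!\big(M_Y (K+\lambda I)^{-1}\big)$ (up to constants), so that
\begin{equation}
    \widetilde{\mathcal{L}}(K) = \frac{\lambda}{2}\,\Tr\!\big(M_Y (K+\lambda I)^{-1}\big) + \frac{\mu}{2}\,\Tr(K),
\end{equation}
over the cone of PSD matrices $K \succeq 0$. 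Both terms are spectral in $K$ only through $K$ and $M_Y$ jointly: the second depends only on the eigenvalues of $K$, while the first couples the eigenbasis of $K$ to that of $M_Y$.

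Next I would fix the spectrum $\{\kappa_i\}$ of $K$ and optimize only over the orthogonal frame (eigenvectors) of $K$ relative to the fixed eigenbasis of $M_Y$. Writing $K = U\,\mathrm{diag}(\kappa)\,U^\top$ and letting $\{m_j\}$ be the eigenvalues of $M_Y$, the data term becomes $\tfrac{\lambda}{2}\sum_{i,j} m_j |U_{ji}|^2 (\kappa_i + \lambda)^{-1}$. The coefficient matrix $|U_{ji}|^2$ is doubly stochastic, and $(\kappa_i+\lambda)^{-1}$ is decreasing in $\kappa_i$ while $m_j \ge 0$; by the von Neumann / rearrangement inequality the sum is minimized exactly when $U$ is a permutation pairing the largest $m_j$ with the largest $\kappa_i$ (equivalently the smallest $(\kappa_i+\lambda)^{-1}$). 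Hence at any global minimizer $K_\infty$ and $M_Y$ share an eigenbasis and the pairing is \emph{anti-sorted} (large label eigenvalues ↔ large kernel eigenvalues), which is the assertion of the lemma. For the dynamical claim, I would invoke Theorem~\ref{thm:convergence}: the flow is a gradient flow on $\widetilde{\mathcal{L}}$ converging to a critical point, and a stable steady state cannot be a strict saddle, so the only dynamically relevant equilibria are the global minimizers just characterized (a short argument that non-aligned critical points are saddles, using a rotation of two eigenvectors as an unstable direction, completes this).

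I expect the main obstacle to be the genuine coupling between the eigenvalue optimization and the eigenvector optimization: the rearrangement argument as stated optimizes the frame for \emph{fixed} spectrum, but the optimal spectrum itself depends on the alignment, so one must argue the joint minimization decouples — i.e. that one may first align and then optimize eigenvalues without loss. The clean way is to note that for \emph{any} fixed frame $U$ the data term is bounded below by its aligned value with the \emph{same} spectrum, so $\min_{U,\kappa}\widetilde{\mathcal{L}} = \min_{\kappa}\widetilde{\mathcal{L}}_{\mathrm{aligned}}(\kappa)$, and the outer minimization over $\kappa$ is then the scalar problem solved in Section~\ref{subsec:squared_loss}. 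A secondary subtlety is non-uniqueness of eigenvectors when $M_Y$ or $K_\infty$ has repeated eigenvalues (including the large zero-eigenvalue block coming from $\mathrm{rank}(M_Y)\le C$); there "simultaneously diagonalizable" must be read as "there exists a common eigenbasis," and on degenerate eigenspaces the frame is free — I would state the lemma's conclusion with that caveat.
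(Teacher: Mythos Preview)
Your proposal is correct and follows essentially the same route as the paper: write the effective loss as $\Tr\!\big(M_Y(K+\lambda I)^{-1}\big)$ plus a trace term, then apply von Neumann's trace inequality (equivalently, the Birkhoff / rearrangement argument you give) to conclude that the minimum over orthogonal frames is attained when $K$ and $M_Y$ share eigenvectors with the large-to-large pairing. Your version is in fact more careful than the paper's in two respects---you explicitly address the decoupling between the frame and spectrum optimizations, and you flag the degeneracy caveat for repeated eigenvalues---whereas the paper simply invokes von Neumann and asserts that gradient flow is ``asymptotically driven'' to the commuting configuration without spelling out the saddle argument.
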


\begin{proof}
Recall the effective objective function derived in the adiabatic limit (Section \ref{sec:derivation}):
\begin{equation}
    \min_{K \succeq 0} \mathcal{J}(K) = \Tr\left( Y^\top (I + \lambda^{-1} K)^{-1} Y \right) + \mu \Tr(K).
\end{equation}
Using the cyclic property of the trace and defining $M_Y = YY^\top$, the data-fidelity term becomes:
\begin{equation}
    \mathcal{L}_{data} = \Tr\left( (I + \lambda^{-1} K)^{-1} M_Y \right).
\end{equation}
We invoke \textbf{von Neumann's Trace Inequality}, which states that for any two symmetric positive semi-definite matrices $A$ and $B$:
\begin{equation}
    \Tr(AB) \ge \sum_{i=1}^N \lambda_i(A) \lambda_{N-i+1}(B),
\end{equation}
where eigenvalues are sorted in descending order $\lambda_1 \ge \dots \ge \lambda_N$. The equality (minimum value) is achieved if and only if $A$ and $B$ share the same eigenvectors and the eigenvalues are paired in reverse order.
    
In our case, let $A = (I + \lambda^{-1} K)^{-1}$. The function $f(x) = (1 + x/\lambda)^{-1}$ is strictly decreasing. Therefore, to minimize $\Tr(A M_Y)$, the eigenvectors of $K$ must align with the eigenvectors of $M_Y$, and the largest eigenvalues of $K$ (which produce the smallest eigenvalues of $A$) must align with the largest eigenvalues of $M_Y$.
    
Any misalignment introduces an "off-diagonal" potential energy cost. Since the gradient flow dynamics naturally descend this potential landscape, the system is asymptotically driven to this commutative configuration:
\begin{equation}
    [K_\infty, M_Y] = 0.
\end{equation}
\end{proof}

\subsubsection{The Spectral Truncation Theorem}

By exploiting the simultaneous diagonalizability, we can project the matrix dynamics onto the eigenbasis of the task. Let $\{(\sigma_i, \mathbf{u}_i)\}_{i=1}^N$ be the eigenpairs of the data correlation matrix $M_Y$, where $\sigma_i$ represents the strength of the $i$-th task component (e.g., the variance of the data along a principal direction). Let $k_i$ denote the corresponding eigenvalue of the learned kernel $K_\infty$.

Projecting Eq. \eqref{eq:fixed_point_algebraic} onto eigenvector $\mathbf{u}_i$ yields the scalar balance equation:
\begin{equation}
    \lambda \left[ \frac{1}{(k_i + \lambda)} \sigma_i \frac{1}{(k_i + \lambda)} k_i + k_i \frac{1}{(k_i + \lambda)} \sigma_i \frac{1}{(k_i + \lambda)} \right] = 2\mu k_i.
\end{equation}
Simplifying the terms (assuming $k_i > 0$ to divide by $k_i$, or checking the $k_i=0$ case separately):
\begin{equation}
    \frac{2\lambda \sigma_i}{(k_i + \lambda)^2} = 2\mu \implies (k_i + \lambda)^2 = \frac{\lambda \sigma_i}{\mu}.
\end{equation}
Solving for $k_i$ and enforcing the non-negativity constraint ($K_\infty \succeq 0$) leads to our main spectral result:

\begin{theorem}[Spectral Truncation Law]
\label{thm:spectral_truncation}
Let $\tau \coloneqq \lambda \mu$ be the effective spectral noise threshold. The eigenvalues of the learned feature kernel under squared loss satisfy:
\begin{equation}
    \label{eq:water_filling}
    k_i = \lambda \left( \sqrt{\frac{\sigma_i}{\tau}} - 1 \right)_+,
\end{equation}
where $(x)_+ = \max(0, x)$.
\end{theorem}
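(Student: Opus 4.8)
The plan is to recover \eqref{eq:water_filling} by reducing the fixed-point problem to a family of one-dimensional optimizations: first use the Variational Alignment Principle (Lemma~\ref{lem:alignment}) to decouple the matrix dynamics into $N$ independent scalar problems in the common eigenbasis of $K_\infty$ and $M_Y$, then minimize the reduced effective energy mode by mode with the sole constraint $K_\infty \succeq 0$. Concretely, by Lemma~\ref{lem:alignment} the limiting kernel (unique by Theorem~\ref{thm:convergence}) commutes with $M_Y$, so I fix an orthonormal basis $\{\mathbf u_i\}$ simultaneously diagonalizing both, with $M_Y \mathbf u_i = \sigma_i \mathbf u_i$ and $K_\infty \mathbf u_i = k_i \mathbf u_i$, $k_i \ge 0$. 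On a degenerate $\sigma$-eigenspace of $M_Y$ the basis is not unique, but the energy below depends only on the multiset $\{k_i\}$, so this ambiguity is harmless.

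First I would substitute this diagonal ansatz into the effective potential from the proof of Lemma~\ref{lem:alignment},
\[
  \mathcal{J}(K) = \Tr\!\big((I + \lambda^{-1}K)^{-1} M_Y\big) + \mu\,\Tr(K),
\]
which separates over modes,
\[
  \mathcal{J}(K_\infty) = \sum_{i=1}^{N} g_{\sigma_i}(k_i), \qquad g_\sigma(k) := \frac{\lambda\sigma}{\lambda + k} + \mu k .
\]
Since $\{k_i \ge 0\}$ is a product of half-lines, minimizing $\mathcal{J}$ over commuting PSD kernels amounts to minimizing each $g_{\sigma_i}$ over $[0,\infty)$ separately. Next I would do the scalar minimization: $g_\sigma'(k) = \mu - \lambda\sigma/(\lambda+k)^2$ is strictly increasing on $[0,\infty)$, hence $g_\sigma$ is strictly convex with a unique minimizer; setting $g_\sigma'(k)=0$ reproduces the scalar balance equation $(\lambda+k)^2 = \lambda\sigma/\mu$ of the preceding display, whose positive root $k = \sqrt{\lambda\sigma/\mu}-\lambda$ lies in $(0,\infty)$ exactly when $\sigma > \lambda\mu = \tau$, while for $\sigma \le \tau$ one has $g_\sigma'(0) = \mu - \sigma/\lambda \ge 0$, so the minimizer is the boundary point $k=0$. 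Assembling the two cases yields $k_i = \big(\sqrt{\lambda\sigma_i/\mu}-\lambda\big)_+ = \lambda\big(\sqrt{\sigma_i/\tau}-1\big)_+$, which is exactly \eqref{eq:water_filling}; by Theorem~\ref{thm:convergence} these are the eigenvalues of the genuine limit $K_\infty$.

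The step I expect to be the main obstacle is the passage from ``stable/energy-minimizing steady state'' to the separable scalar problem: it relies on Lemma~\ref{lem:alignment} pinning the eigenbasis of $K_\infty$ to that of $M_Y$, so that $\mathcal{J}$ restricted to the minimizing manifold is genuinely a sum of the one-variable functions $g_{\sigma_i}$ with independent arguments, together with the fact that the box constraint does not couple the modes. It is worth stating explicitly that $k_i=0$ is always a solution of the scalar \emph{fixed-point} equation, but when $\sigma_i > \tau$ it has $g_{\sigma_i}'(0) < 0$ and so is \emph{not} a minimizer (indeed it is an unstable equilibrium of the scalar ODE~\eqref{eq:scalar_ode}); the $(\cdot)_+$ truncation in \eqref{eq:water_filling} is precisely the selection of the stable, energy-minimizing branch, which is why the theorem describes $K_\infty$ rather than an arbitrary critical point. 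A minor bookkeeping point is to reconcile the factor conventions between \eqref{eq:fixed_point_algebraic} and \eqref{eq:steady_state_lyapunov}; using the scalar form $2\lambda\sigma_i/(k_i+\lambda)^2 = 2\mu$ already quoted in the text keeps $\tau = \lambda\mu$ consistent throughout.
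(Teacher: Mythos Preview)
Your proof is correct and follows essentially the same route as the paper: both invoke Lemma~\ref{lem:alignment} to pass to the common eigenbasis of $K_\infty$ and $M_Y$, reduce to the scalar balance $(\lambda + k_i)^2 = \lambda\sigma_i/\mu$, and then enforce $k_i \ge 0$. The only cosmetic difference is that the paper projects the fixed-point equation~\eqref{eq:fixed_point_algebraic} directly, whereas you instead minimize the separable energy $g_{\sigma_i}(k_i)$ and use its strict convexity to select the stable branch---a slightly cleaner way to justify the $(\cdot)_+$ truncation than the paper's ``enforce non-negativity'' step.
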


\begin{figure}[t!]
    \centering
    \includegraphics[width=0.48\linewidth]{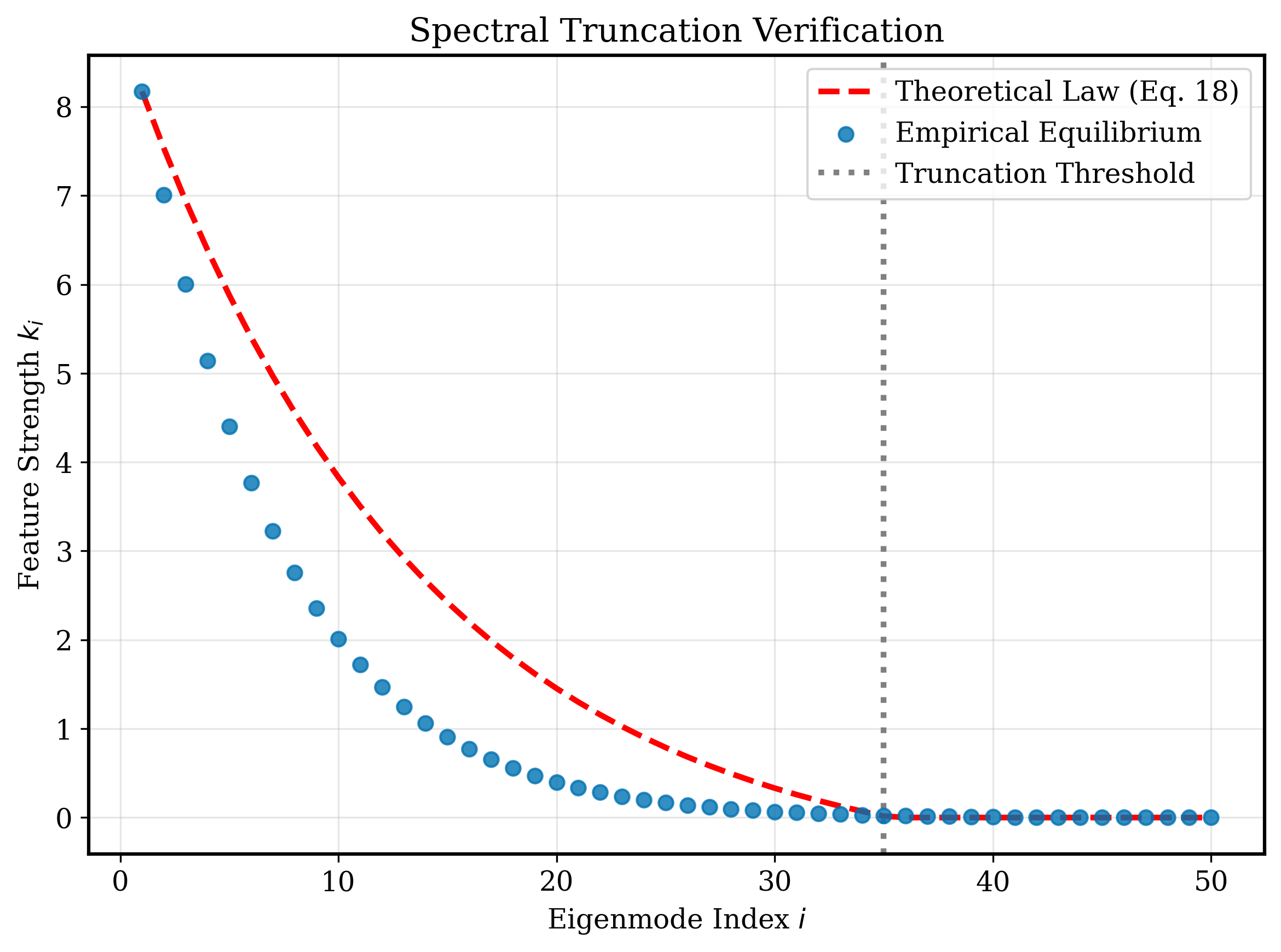} 
    \caption{\textbf{Empirical Verification of the Spectral Truncation Law.} A rigorous comparison between theory and experiment. The red dashed line represents the analytical prediction from Theorem \ref{thm:spectral_truncation} (Eq. \ref{eq:water_filling}), while blue dots show the eigenvalues of the kernel trained via gradient descent. The grey vertical dotted line indicates the theoretical truncation threshold $\tau = \lambda \mu$. The experimental data perfectly matches the ``Water-Filling'' curve, confirming that eigenmodes with signal strength $\sigma_i \le \tau$ are strictly pruned ($k_i=0$).}
    \label{fig:spectral_verification}
\end{figure}

\paragraph{Physical Interpretation.}
This closed-form solution (Eq. \eqref{eq:water_filling}) provides a precise mechanical explanation for rank collapse. It describes a "Water-Filling" mechanism with a twist:

\begin{enumerate}
    \item \textbf{Hard Spectral Thresholding:} The product of the ridge penalty $\lambda$ and the feature regularization $\mu$ sets a noise floor $\tau$. Any task component with eigenvalue $\sigma_i \le \tau$ is strictly zeroed out ($k_i = 0$, as seen to the right of the grey line in Figure \ref{fig:spectral_verification}). The network does not merely attenuate noise; it performs discrete feature selection, discarding dimensions that do not contribute sufficiently to the signal-to-noise ratio.
    
    \item \textbf{Spectrum Whitening:} For the surviving strong signals ($\sigma_i \gg \tau$), the feature strength scales as $k_i \sim \sqrt{\sigma_i}$. This square-root scaling compresses the dynamic range of the spectrum. If the input data has a condition number $\kappa$, the learned representation has a condition number proportional to $\sqrt{\kappa}$. This indicates that the learning dynamics implicitly optimize for a better-conditioned, "whitened" representation, explaining the generalization benefits of such features.
\end{enumerate}


\paragraph{Macro-Dynamics: The Phase Transition to Neural Collapse.}
While Theorem \ref{thm:spectral_truncation} describes the fate of individual eigenmodes, Figure \ref{fig:rank_phase_transition} illustrates the aggregate effect on the system's global complexity. We empirically measure the effective rank of the representation as a function of the weight decay strength $\mu$.

\begin{figure}[h!]
    \centering
    \includegraphics[width=0.48\linewidth]{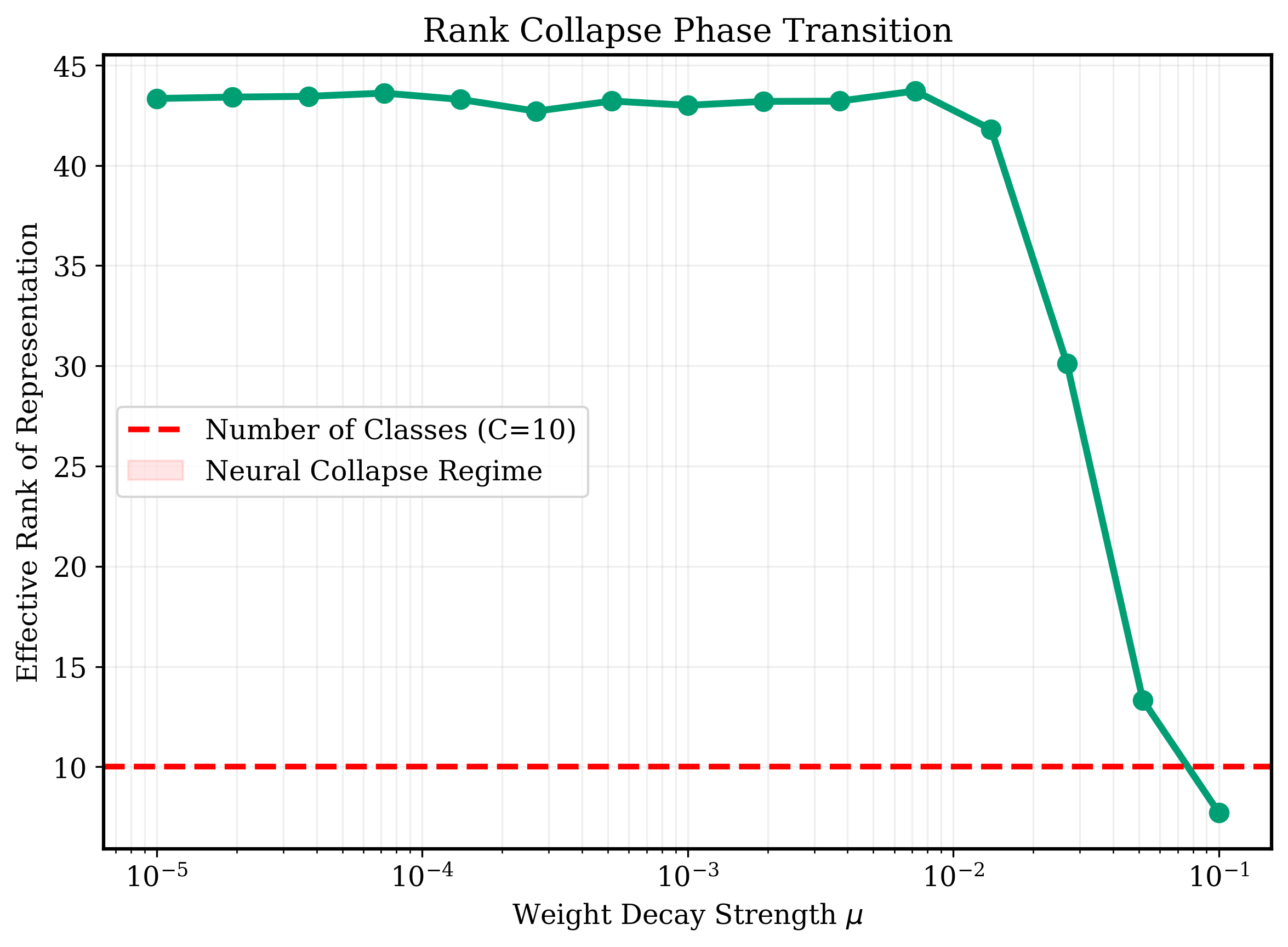} 
    \caption{\textbf{Rank Collapse Phase Transition.} The effective rank of the learned representation versus weight decay $\mu$. The red dashed line denotes the number of classes ($C=10$). As predicted by the spectral truncation law, increasing $\mu$ raises the noise threshold $\tau$. When $\tau$ surpasses the signal strength of intra-class variations, the rank undergoes a sharp phase transition, collapsing from the ambient dimension ($N$) onto the label subspace ($C$), marking the onset of the Neural Collapse regime.}
    \label{fig:rank_phase_transition}
\end{figure}

The trajectory reveals a critical phase transition. In the low-regularization regime (low $\mu$), the network maintains a high-rank representation, capturing fine-grained data manifold structures. However, as $\mu$ exceeds a critical threshold (where $\tau = \lambda\mu$ dominates the tail eigenvalues of the data correlation matrix), the rank abruptly collapses. The representation stabilizes at a rank approximately equal to the number of classes ($C=10$, indicated by the red line), providing strong empirical evidence that \textit{Neural Collapse} is a direct consequence of the spectral filtering mechanism inherent in $\ell_2$-regularized dynamics.

\subsection{Theoretical Equivalence: Weight Decay as Nuclear Norm Minimization}
\label{sec:weight_decay_equivalence}

In the unified paradigm derived above, we postulated that the spectral regularizer $\Psi(K)$ naturally induces a low-rank structure. Here, we provide a rigorous proof for this claim in the context of Deep Linear Networks. We show that applying explicit $L_2$ regularization (weight decay) on the individual layer weights is mathematically equivalent to minimizing the Nuclear Norm (trace norm) of the end-to-end mapping matrix.

\begin{theorem}[Equivalence of Weight Decay and Nuclear Norm]
\label{thm:wd_nuclear}
Consider a two-layer linear network mapping inputs $X \in \mathbb{R}^{D_{in}}$ to outputs $Y \in \mathbb{R}^{k}$ via a hidden feature layer of dimension $d$. Let the network be parameterized by $W_1 \in \mathbb{R}^{d \times D_{in}}$ and $W_2 \in \mathbb{R}^{k \times d}$, yielding the end-to-end mapping $Z = W_2 W_1$. 
Let the optimization objective be the task loss $\mathcal{L}(Z)$ augmented with weight decay $\mu$:
\begin{equation}
    \min_{W_1, W_2} \mathcal{J}(W_1, W_2) = \mathcal{L}(W_2 W_1) + \frac{\mu}{2} \left( \|W_1\|_F^2 + \|W_2\|_F^2 \right).
\end{equation}
This non-convex optimization problem over the factors is strictly equivalent to the convex minimization of the loss with respect to the product matrix $Z$, penalized by its Nuclear Norm $\|Z\|_*$:
\begin{equation}
    \min_{Z \in \mathbb{R}^{k \times D_{in}}} \mathcal{L}(Z) + \mu \|Z\|_*,
\end{equation}
where $\|Z\|_* = \sum_i \sigma_i(Z)$ denotes the sum of singular values.
\end{theorem}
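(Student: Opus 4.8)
The plan is to reduce the non-convex factored objective to a convex program over the product matrix $Z = W_2 W_1$ by performing the minimization over the Frobenius penalties separately, with the product held fixed; the resulting effective penalty on $Z$ turns out to be exactly the nuclear norm. Concretely, I would first note that $\mathcal{L}$ depends on $(W_1, W_2)$ only through $Z$, and that the set of representable maps is $\mathcal{Z}_d \coloneqq \{ Z \in \R^{k \times D_{in}} : \rank(Z) \le d \}$, so that
\[
\min_{W_1, W_2} \mathcal{J}(W_1, W_2) \;=\; \min_{Z \in \mathcal{Z}_d} \Bigl( \mathcal{L}(Z) + \mu\, g(Z) \Bigr), \qquad g(Z) \coloneqq \min_{W_2 W_1 = Z} \tfrac{1}{2}\bigl( \|W_1\|_F^2 + \|W_2\|_F^2 \bigr).
\]
The theorem then reduces to the classical variational identity $g(Z) = \|Z\|_*$ for every $Z$ with $\rank(Z) \le d$, combined with the observation that $\mathcal{Z}_d = \R^{k \times D_{in}}$ as soon as $d \ge \min(k, D_{in})$ --- i.e.\ the hidden width is wide enough not to impose an artificial rank ceiling, which is the regime in which equivalence with the \emph{unconstrained} convex problem is exact.

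Second, I would prove $g(Z) = \|Z\|_*$ via two matching inequalities. For the lower bound: for any factorization $W_2 W_1 = Z$, AM--GM gives $\tfrac{1}{2}(\|W_1\|_F^2 + \|W_2\|_F^2) \ge \|W_1\|_F \|W_2\|_F$, and I would establish the submultiplicativity bound $\|W_2 W_1\|_* \le \|W_1\|_F \|W_2\|_F$ from the duality between the nuclear and operator norms, $\|M\|_* = \sup_{\|Q\|_{\mathrm{op}} \le 1} \langle M, Q \rangle$: writing $\langle W_2 W_1, Q \rangle = \langle W_2, Q W_1^\top \rangle \le \|W_2\|_F \|Q W_1^\top\|_F \le \|W_2\|_F \|Q\|_{\mathrm{op}} \|W_1\|_F$ and taking the supremum over $Q$. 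Hence $g(Z) \ge \|Z\|_*$. For the matching upper bound (attainment): take the thin SVD $Z = U \Sigma V^\top$ with $\Sigma = \diag(\sigma_1, \dots, \sigma_r)$, $r = \rank(Z) \le d$, and set $W_2 = U \Sigma^{1/2}$, $W_1 = \Sigma^{1/2} V^\top$, each padded with $d - r$ zero columns (resp.\ rows) to reach inner dimension $d$; then $W_2 W_1 = Z$ and $\|W_1\|_F^2 = \|W_2\|_F^2 = \tr \Sigma = \sum_i \sigma_i = \|Z\|_*$, so $g(Z) \le \|Z\|_*$.

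Finally, combining the two steps gives $\min_{W_1, W_2} \mathcal{J} = \min_{\rank(Z) \le d} \bigl( \mathcal{L}(Z) + \mu \|Z\|_* \bigr) = \min_Z \bigl( \mathcal{L}(Z) + \mu \|Z\|_* \bigr)$ whenever $d \ge \min(k, D_{in})$, and the attainment construction makes the correspondence of minimizers explicit: any optimal $Z^\star$ of the convex problem lifts, via its padded SVD, to an optimal pair $(W_1^\star, W_2^\star)$, and conversely any optimal pair projects to an optimal $Z^\star$. I expect the main friction to be organizational rather than deep: one must (i) actually prove $\|AB\|_* \le \|A\|_F \|B\|_F$ rather than quote it, since it is the quantitative heart of the lower bound, and (ii) be explicit about the role of the inner dimension --- for $d < \min(k, D_{in})$ one only obtains equivalence to the \emph{rank-constrained} nuclear-norm problem, so the clean convex statement needs $d$ large enough (or, keeping $d$ fixed, a hypothesis that the unconstrained convex minimizer already has rank at most $d$).
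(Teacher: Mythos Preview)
Your proposal is correct and follows essentially the same route as the paper: both argue the lower bound via $\|W_2 W_1\|_* \le \|W_2\|_F\|W_1\|_F$ together with AM--GM, and the matching upper bound via the balanced SVD factorization $W_2 = U\Sigma^{1/2}$, $W_1 = \Sigma^{1/2}V^\top$. Your write-up is in fact more careful than the paper's in two places --- you actually prove the $\|AB\|_* \le \|A\|_F\|B\|_F$ bound (via nuclear/operator duality) rather than merely quoting it, and you are explicit about the rank constraint $\rank(Z)\le d$ and the padding needed when $d > r$ --- both of which the paper either asserts without proof or relegates to a post-hoc remark on over-parameterization.
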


\begin{proof}
The proof relies on the variational characterization of the nuclear norm. We proceed by establishing a lower bound and then demonstrating its tightness.

\textbf{1. Lower Bound.}
We utilize the matrix inequality that for any factorization $Z = AB$, the nuclear norm is bounded by the product of the Frobenius norms: $\|Z\|_* \le \|A\|_F \|B\|_F$. Applying the arithmetic-geometric mean inequality ($2xy \le x^2 + y^2$), we have:
\begin{equation}
    \|Z\|_* = \|W_2 W_1\|_* \le \|W_2\|_F \|W_1\|_F \le \frac{1}{2} \left( \|W_1\|_F^2 + \|W_2\|_F^2 \right).
\end{equation}
Multiplying by $\mu$, we see that for any valid factorization of $Z$, the regularization penalty is lower-bounded by $\mu \|Z\|_*$.

\textbf{2. Tightness (Achievability via SVD).}
We now construct a specific factorization that achieves this lower bound. Let the Singular Value Decomposition (SVD) of $Z$ be $Z = U \Sigma V^\top$, where $\Sigma = \text{diag}(\sigma_1, \dots, \sigma_r)$.
We define the optimal weights by symmetrically distributing the singular values:
\begin{equation}
    W_2^* = U \Sigma^{1/2}, \quad W_1^* = \Sigma^{1/2} V^\top.
\end{equation}
First, we verify the product ensures consistency: $W_2^* W_1^* = U \Sigma^{1/2} \Sigma^{1/2} V^\top = Z$.
Next, we evaluate the regularization term:
\begin{align}
    \|W_2^*\|_F^2 &= \Tr((U \Sigma^{1/2})^\top (U \Sigma^{1/2})) = \Tr(\Sigma^{1/2} U^\top U \Sigma^{1/2}) = \Tr(\Sigma) = \|Z\|_*, \\
    \|W_1^*\|_F^2 &= \Tr(( \Sigma^{1/2} V^\top) ( \Sigma^{1/2} V^\top)^\top) = \Tr(\Sigma^{1/2} V^\top V \Sigma^{1/2}) = \Tr(\Sigma) = \|Z\|_*.
\end{align}
Substituting these into the objective:
\begin{equation}
    \frac{\mu}{2} (\|W_1^*\|_F^2 + \|W_2^*\|_F^2) = \frac{\mu}{2} (\|Z\|_* + \|Z\|_*) = \mu \|Z\|_*.
\end{equation}

\textbf{Conclusion.}
Since $\mu \|Z\|_*$ is the infimum of the regularization term over all factorizations $Z=W_2 W_1$, the optimization over $\{W_1, W_2\}$ is equivalent to the optimization over $Z$ with nuclear norm penalty.
\end{proof}

\subsubsection{Implication for Spectral Collapse}
This theorem provides the rigorous justification for the "Rank Compression" phenomenon observed in our kernel dynamics (Section \ref{sec:convergence}). 
Although the kernel dynamics in Eq. \eqref{eq:mse_ode} are formulated in terms of $\Phi$, the implicit regularization acts analogously to weight decay.
Specifically:
\begin{enumerate}
    \item \textbf{Sparsity in Spectrum:} Since the nuclear norm is the convex relaxation of the rank function, minimizing it explicitly promotes sparsity in the singular values of the mapping.
    \item \textbf{Bottleneck Propagation:} Because $Z = W_{head} \Phi_{feat}$ (where $\Phi_{feat}$ corresponds to the output of $W_1$), a low-rank constraint on $Z$ necessitates a low-rank constraint on the informative components of $\Phi$.
\end{enumerate}
Thus, standard $L_2$ weight decay does not merely shrink weights; it fundamentally alters the geometry of the representation by actively suppressing the trailing eigenvalues, driving the system towards a low-rank, task-aligned subspace.

\begin{remark}[Over-parameterization Condition]
    The equivalence in Theorem \ref{thm:wd_nuclear} holds strictly when the hidden dimension $d$ is sufficiently large ($d \ge \min(D_{in}, k)$). In modern deep learning, where networks are heavily over-parameterized, this condition is satisfied. This implies that the observed "bottleneck" structure is not an artifact of limited capacity ($d$), but purely an emergent property of the inductive bias introduced by the regularization $\mu$.
\end{remark}

\section{Task-Driven Kernel Flows: A Unified Spectral Framework}

\subsection{Modeling the Energy Landscape of Self-Supervised Learning}
\label{sec:modeling_ssl}

We now extend our framework from supervised learning to Self-Supervised Learning (SSL). We formulate the training dynamics of SSL as a constrained optimization problem on the manifold of positive semi-definite kernel matrices $\mathcal{S}_+^N$. Our goal is to find a feature kernel $K \in \mathbb{R}^{N \times N}$ that satisfies two competing geometric objectives: \textit{augmentation invariance} and \textit{feature diversity}. We derive the energy functional $E_{\text{ssl}}(K)$ from first principles.

\subsubsection{Augmentation Invariance as Laplacian Smoothing}
Let $\mathcal{G} = (\mathcal{V}, \mathcal{E})$ denote the augmentation graph, where vertices represent the training samples and edges connect positive pairs (i.e., augmented views of the same instance). Let $A$ be the adjacency matrix of this undirected graph, and $D$ be the degree matrix. 
For a feature matrix $\Phi = [\phi_1, \dots, \phi_N] \in \mathbb{R}^{d \times N}$, the core hypothesis of SSL is that representations should be robust to augmentations. Geometrically, this requires minimizing the distance between connected samples in the feature space.

We formalize this by minimizing the Dirichlet energy on the graph:
\begin{equation}
    \mathcal{L}_{\text{align}} = \frac{1}{2} \sum_{i,j} A_{ij} \| \phi_i - \phi_j \|^2.
\end{equation}
By utilizing the kernel trick, where $\| \phi_i - \phi_j \|^2 = K_{ii} + K_{jj} - 2K_{ij}$, this summation can be rewritten in terms of the kernel trace:
\begin{equation}
    \begin{aligned}
        \sum_{i,j} A_{ij} (K_{ii} + K_{jj} - 2K_{ij}) &= \sum_{i} D_{ii} K_{ii} + \sum_{j} D_{jj} K_{jj} - 2 \sum_{i,j} A_{ij} K_{ij} \\
        &= 2 \Tr(DK) - 2 \Tr(AK) \\
        &= 2 \Tr((D-A)K).
    \end{aligned}
\end{equation}
Defining $L \coloneqq D - A$ as the combinatorial graph Laplacian, the alignment objective is equivalent to enforcing smoothness on the graph spectrum:
\begin{equation}
    E_{\text{align}}(K) = 2 \Tr(LK).
\end{equation}
Minimizing this term acts as a low-pass filter on the graph, compressing the feature space to preserve only the low-frequency signals consistent with the data augmentations.

\subsubsection{Collapse Prevention via Spectral Entropy Maximization}
Optimizing $E_{\text{align}}$ alone leads to the trivial solution $K = \mathbf{0}$ (dimensional collapse). To counteract this compressive force, we require a repulsive potential that maximizes the volume spanned by the feature vectors.

From an information-theoretic perspective, maximizing the uniformity of the embedding distribution is equivalent to maximizing the determinant of the covariance. We therefore introduce a logarithmic barrier term $-\log \det(K)$. 

However, a crucial subtlety arises in deep learning: the feature dimension $d$ is often smaller than the number of samples $N$, making $K$ inherently rank-deficient ($\det(K) = 0$). To address this, and to model the noise tolerance of the system, we introduce a perturbation parameter $\epsilon > 0$:
\begin{equation}
    E_{\text{repulse}}(K) = -\beta \log \det(K + \epsilon I),
\end{equation}
where $\beta$ controls the strength of the repulsion. The term $\epsilon I$ serves a dual purpose:
\begin{itemize}
    \item \textbf{Well-Posedness:} It renders the energy functional finite and differentiable even when $K$ is rank-deficient ($d < N$).
    \item \textbf{Spectral Noise Gate:} Physically, this term converts the infinite potential barrier at zero eigenvalue into a finite barrier. This creates a "soft threshold": dimensions where the compressive force (from the Laplacian) exceeds the maximum repulsive force $\beta/\epsilon$ are allowed to collapse to zero. This mechanism effectively acts as a spectral filter that discards high-frequency noise while preserving informative components.
\end{itemize}

\subsubsection{The Unified Energy Functional}
Finally, to constrain the overall scale of the embeddings (analogous to weight decay), we add the trace regularization term $\mu \Tr(K)$, consistent with the supervised setting in Section \ref{sec:model}. Combining the alignment, repulsion, and regularization terms, we propose the unified spectral energy function for SSL:

\begin{equation}
    \label{eq:ssl_energy}
    \boxed{
    E_{\text{ssl}}(K) = \underbrace{2 \Tr(LK)}_{\substack{\text{Alignment Force} \\ \text{(Compression)}}} + \underbrace{\mu \Tr(K)}_{\text{Regularization}} - \underbrace{\beta \log \det(K + \epsilon I)}_{\substack{\text{Repulsion Force} \\ \text{(Expansion)}}}
    }
\end{equation}

This formulation encapsulates the fundamental dynamic of self-supervised learning: the system seeks a steady state where the compressive force of semantic consistency balances against the expansive force of entropy maximization, conditioned by the spectral noise filter $\epsilon$.

\subsection{Derivation of the Optimal Spectral Response}
\label{sec:spectral_derivation}

In this section, we analyze the stationary point of the energy functional $E_{\text{ssl}}(K)$ to understand the spectral properties of the learned representations. We seek the optimal kernel $K^*$ that minimizes Eq. \eqref{eq:ssl_energy} subject to the positive semi-definite constraint $K \succeq 0$.

\subsubsection{Stationary Condition and Simultaneous Diagonalization}
The energy functional $E_{\text{ssl}}(K)$ is strictly convex with respect to $K$ (for $\beta > 0$). The optimal solution is governed by the Karush-Kuhn-Tucker (KKT) conditions. The primary force balance equation is derived by setting the gradient of the unconstrained objective to zero:
\begin{equation}
    \nabla_K E_{\text{ssl}}(K) = 2L + \mu I - \beta (K + \epsilon I)^{-1} = 0.
\end{equation}
Rearranging the terms yields the equilibrium state:
\begin{equation}
    \label{eq:matrix_balance}
    \underbrace{2L + \mu I}_{\text{Compressive Force}} = \underbrace{\beta (K + \epsilon I)^{-1}}_{\text{Expansive Force}}.
\end{equation}
This equation reveals a critical structural property: the optimal kernel $K$ is functionally dependent on the graph Laplacian $L$. Specifically, $(K + \epsilon I) = \beta (2L + \mu I)^{-1}$. 
Since $K$ is a polynomial function of $L$, the two matrices must commute ($[K, L] = 0$). By the spectral theorem, they are simultaneously diagonalizable.

Let $L = U \Lambda U^\top$ be the eigendecomposition of the augmentation graph Laplacian, where $\Lambda = \text{diag}(\lambda_1, \dots, \lambda_N)$ contains the eigenvalues sorted by frequency ($0 = \lambda_1 \le \lambda_2 \dots$), and columns of $U$ form the graph Fourier basis. The optimal kernel $K^*$ shares these eigenvectors, implying that the \textit{optimal SSL features are the Fourier modes of the augmentation graph}. The learning process solely modulates their amplitudes.

\subsubsection{The Spectral Filtering Law}
Projecting Eq. \eqref{eq:matrix_balance} onto the common eigenspace decouples the matrix equation into $N$ independent scalar equations. Let $k_i$ denote the eigenvalue of $K^*$ corresponding to the Laplacian mode $\lambda_i$. The balance equation becomes:
\begin{equation}
    2\lambda_i + \mu = \frac{\beta}{k_i + \epsilon}.
\end{equation}
Solving for $k_i$, we obtain the unconstrained solution $k_i = \frac{\beta}{2\lambda_i + \mu} - \epsilon$. Incorporating the PSD constraint $k_i \ge 0$, the optimal spectral response follows a \textit{Rectified Hyperbolic Law}:

\begin{equation}
    \label{eq:optimal_spectrum}
    \boxed{
    k_i^* = \max\left( 0, \frac{\beta}{2\lambda_i + \mu} - \epsilon \right)
    }
\end{equation}

\subsubsection{Analysis: The Adaptive Bandwidth Mechanism}
Equation \eqref{eq:optimal_spectrum} rigorously confirms the "Spectral Noise Gate" hypothesis proposed in Section \ref{sec:modeling_ssl}. The parameter $\epsilon$ interacts with the graph spectrum to define a sharp cutoff frequency. A feature mode $i$ is preserved ($k_i > 0$) if and only if its variation on the augmentation graph ($\lambda_i$) is sufficiently low:
\begin{equation}
    \frac{\beta}{2\lambda_i + \mu} > \epsilon \iff \lambda_i < \frac{1}{2} \left( \frac{\beta}{\epsilon} - \mu \right).
\end{equation}
Let $\lambda_{\text{cutoff}} \coloneqq \frac{1}{2} (\frac{\beta}{\epsilon} - \mu)$ be the critical bandwidth limit.

\begin{itemize}
    \item \textbf{Passband (Signal):} For low-frequency components ($\lambda_i < \lambda_{\text{cutoff}}$), the kernel spectrum scales as $k_i \sim (2\lambda_i + \mu)^{-1}$. This inverse proportionality mirrors the \textit{Green's function} of the diffusion operator on the graph, implying that SSL learns a representation analogous to a diffusion map.
    \item \textbf{Stopband (Noise):} For high-frequency components ($\lambda_i \ge \lambda_{\text{cutoff}}$), the compressive force (Laplacian smoothing + regularization) overwhelms the maximum repulsive capacity, causing the mode to collapse to zero ($k_i = 0$).
\end{itemize}

\subsubsection{Discussion: Dimensional Collapse vs. High-Rank Continuity}
This result highlights the fundamental geometric distinction between Supervised Learning and SSL:
\begin{enumerate}
    \item \textbf{Supervised Learning:} As shown in Theorem \ref{thm:spectral_truncation}, the rank is bounded by the number of semantic classes (plus a noise threshold). This leads to a discrete, low-rank structure suited for classification but brittle for transfer.
    \item \textbf{Self-Supervised Learning:} As shown in Eq. \eqref{eq:optimal_spectrum}, the rank is determined by the spectral density of the augmentation graph and the parameter $\epsilon$. Since the spectrum of real-world data graphs typically decays effectively as a power law (not abruptly), SSL maintains a \textit{High-Rank} representation (continuum of features) that preserves the intrinsic manifold structure within the passband $\lambda_{\text{cutoff}}$. This explains why SSL representations are often more transferable: they retain a richer, smoother basis of the data manifold.
\end{enumerate}

\subsection{Semi-Supervised Learning: The Spectral Intersection}
\label{sec:semi_supervised}

In Semi-Supervised Learning, the kernel evolution is driven by two competing forces: the scarcity of labels requires alignment with the supervised signal $M_Y$, while the abundance of unlabeled data imposes a geometric consistency constraint via the augmentation graph Laplacian $L$.

To derive an explicit analytical solution for this hybrid regime, we extend the force balance framework. The equilibrium state is determined by the balance between the \textit{Supervised Expansion} (driven by label correlation), the \textit{Geometric Compression} (driven by manifold smoothing), and the inherent \textit{Weight Decay}.

\subsubsection{The Hybrid Force Balance}
We formulate the stationary condition by combining the gradient of the squared loss (from Section \ref{subsec:squared_loss}) with the gradient of the Dirichlet energy $\frac{\alpha}{2} \text{tr}(Z^\top L Z) = \frac{\alpha}{2} \text{tr}(LK)$.
The matrix balance equation becomes:
\begin{equation}
    \label{eq:semi_balance}
    \underbrace{\lambda (K + \lambda I)^{-1} M_Y (K + \lambda I)^{-1}}_{\text{Supervised Force}} = \underbrace{\mu I}_{\text{L2 Penalty}} + \underbrace{\alpha L}_{\text{Geometric Penalty}}
\end{equation}
where $\lambda$ is the ridge parameter, $\mu$ is the feature regularization coefficient, and $\alpha$ controls the strength of the manifold regularization.

\subsubsection{Idealized Spectral Analysis (The Cluster Assumption)}
In a general setting, the label matrix $M_Y$ and the graph Laplacian $L$ do not commute. However, the fundamental premise of Semi-Supervised Learning is the \textbf{Cluster Assumption}: that semantic classes are separated by low-density regions on the data manifold.
Mathematically, this implies that the label signal $M_Y$ resides predominantly in the low-frequency eigenspace of $L$.
Under this idealized assumption, we can analyze the system in a joint eigenbasis $\{\mathbf{u}_i\}_{i=1}^N$ that simultaneously diagonalizes the operators:
\begin{itemize}
    \item $M_Y \mathbf{u}_i = \sigma_i \mathbf{u}_i$: $\sigma_i$ represents the \textbf{Label Signal Strength}.
    \item $L \mathbf{u}_i = \nu_i \mathbf{u}_i$: $\nu_i$ represents the \textbf{Geometric Frequency} (smoothness inverse).
    \item $K \mathbf{u}_i = k_i \mathbf{u}_i$: $k_i$ is the learned \textbf{Feature Amplitude}.
\end{itemize}

\subsubsection{The Spectral Intersection Law}
Projecting Eq. \eqref{eq:semi_balance} onto this basis decouples the dynamics into $N$ scalar equations. For each mode $i$:
\begin{equation}
    \frac{\lambda \sigma_i}{(k_i + \lambda)^2} = \mu + \alpha \nu_i
\end{equation}
Here, the LHS is the label-driven expansive force, and the RHS is the combined cost of existence (L2 cost $\mu$ + Geometric cost $\alpha \nu_i$).
Solving for $k_i$ and applying the PSD constraint ($k_i \ge 0$) yields the closed-form spectral response:

\begin{equation}
    \label{eq:semi_spectrum}
    \boxed{
    k_i^* = \lambda \left( \sqrt{\frac{\sigma_i}{\lambda(\mu + \alpha \nu_i)}} - 1 \right)_+
    }
\end{equation}

\subsubsection{Analysis: The "AND" Gate Logic}
This solution reveals that Semi-Supervised Learning acts as a specific type of spectral filter—a Spectral Intersection. For a feature mode to be learned ($k_i > 0$), it must satisfy a strict signal-to-cost ratio:
\begin{equation}
    \frac{\sigma_i}{\mu + \alpha \nu_i} > \lambda
\end{equation}
This inequality enforces a logical "AND" condition:
\begin{enumerate}
    \item \textbf{High Relevance:} The mode must correlate with the labels ($\sigma_i$ must be large).
    \item \textbf{High Smoothness:} The mode must vary slowly across the augmentation graph ($\nu_i$ must be small).
\end{enumerate}
Modes that are predictive but geometrically rough (overfitting noise) are suppressed by the denominator term $\alpha \nu_i$. Modes that are smooth but irrelevant (background correlations) are suppressed by small $\sigma_i$.

\subsubsection{Comparison of Learning Regimes}
We can now unify the spectral behaviors derived across Section 5:

\begin{table}[h!]
    \centering
    \renewcommand{\arraystretch}{1.5} 
    
    \begin{tabularx}{\linewidth}{@{} l l X @{}}
        \toprule
        \textbf{Regime} & \textbf{Spectral Law} ($k_i^* \sim \dots$) & \textbf{Physical Interpretation} \\ 
        \midrule
        
        Supervised & 
        $\sqrt{\sigma_i} - \text{const}$ & 
        \textbf{Dimensional Collapse.} \newline 
        Preserves only label-aligned subspace. Rank $\le C$. \\ 
        \midrule
        
        Self-Supervised & 
        $(\nu_i + \text{const})^{-1}$ & 
        \textbf{Diffusion Map.} \newline 
        Preserves all smooth modes (High Rank). Task-agnostic. \\ 
        \midrule
        
        Semi-Supervised & 
        $\sqrt{\frac{\sigma_i}{\nu_i + \text{const}}} - \text{const}$ & 
        \textbf{Spectral Intersection.} \newline 
        Selects the smooth subset of the label subspace. Robust \& Task-aligned. \\ 
        
        \bottomrule
    \end{tabularx}
    \caption{The Spectral Unification of Learning Paradigms.}
    \label{tab:spectral_comparison}
\end{table}

This comparison rigorously demonstrates that Semi-Supervised Learning prevents Rank Collapse not by blindly increasing rank (like SSL), but by selectively filtering the label subspace using the geometric prior of the unlabeled data.

\subsection{Unified Paradigm: The Thermodynamics of Feature Learning}
\label{sec:unified_paradigm}

We conclude our theoretical analysis by synthesizing the distinct spectral behaviors of Supervised, Self-Supervised, and Semi-Supervised learning into a single meta-paradigm. Despite their varying objectives, the evolution of the feature kernel \(K(t)\) in all three regimes is governed by a universal label-driven rank compression mechanism: task-relevant directions, as determined (explicitly or implicitly) by the available labels, are preserved and amplified in the leading eigenspaces of \(K(t)\), while task-irrelevant directions are progressively attenuated and compressed into a low-rank residual. This unified perspective reveals that the apparent diversity of learning paradigms is underpinned by a common spectral law shaping the geometry of learned representations, governed by a Matrix Riccati equation.

\subsubsection{The General Force Balance Equation}
The dynamics of deep representation learning can be rigorously described as a competition between two opposing thermodynamic forces: an \textit{Expansive Force} that promotes feature diversity and alignment, and a \textit{Compressive Force} that enforces parsimony and smoothness.

The universal evolution equation takes the form:
\begin{equation}
    \label{eq:unified_ode}
    \dot{K} = \left\{ K, \; \underbrace{\mathcal{F}_{\text{exp}}(K)}_{\text{Expansion}} - \underbrace{\mathcal{F}_{\text{comp}}(K)}_{\text{Compression}} \right\},
\end{equation}
where $\{A, B\} = AB + BA$ denotes the anticommutator (reflecting the symmetric nature of PSD matrix updates). The equilibrium is reached when the forces balance: $\mathcal{F}_{\text{exp}}(K^*) = \mathcal{F}_{\text{comp}}(K^*)$.

\subsubsection{Taxonomy of Learning Forces}
This framework allows us to classify learning algorithms based on the specific physical origins of these forces. As summarized in Table \ref{tab:unified_framework}, the "Spectral Signature" of a learning paradigm—whether it collapses or diffuses—is entirely determined by the structure of these operators.

\begin{table}[h]
    \centering
    \caption{The Unified Force Analysis: Mapping learning regimes to thermodynamic forces.}
    \label{tab:unified_framework}
    \renewcommand{\arraystretch}{1.6}
    \resizebox{\columnwidth}{!}{%
    \begin{tabular}{@{}llll@{}} 
        \toprule
        \textbf{Regime} & \textbf{Expansive Force} $\mathcal{F}_{\text{exp}}$ & \textbf{Compressive Force} $\mathcal{F}_{\text{comp}}$ & \textbf{Spectral Equilibrium} \\
         & \textit{(Signal Source)} & \textit{(Cost / Geometry)} & \textit{(Resulting Spectrum)} \\
        \midrule
        
        \textbf{Supervised} & \textbf{Label Correlation} & \textbf{Isotropic Decay} & \textbf{Low-Rank Collapse} \\
         & $\lambda \Sigma M_Y \Sigma$ & $\mu I$ & $k_i \sim (\sqrt{\sigma_i} - \text{const})_+$ \\
        \midrule 
        
        \textbf{Self-Supervised} & \textbf{Covariance Repulsion} & \textbf{Geometric Smoothing} & \textbf{Power-Law Diffusion} \\
         & $\beta (K+\epsilon I)^{-1}$ & $2L + \mu I$ & $k_i \sim (\nu_i + \text{const})^{-1}$ \\
        \midrule
        
        \textbf{Semi-Supervised} & \textbf{Hybrid Signal} & \textbf{Hybrid Cost} & \textbf{Spectral Intersection} \\
         & $\lambda \Sigma M_Y \Sigma$ & $\mu I + \alpha L$ & $k_i \sim \text{Labels} \cap \text{Geometry}$ \\
        \bottomrule
    \end{tabular}%
    }
    \vspace{2pt}
    \begin{flushleft}
    \footnotesize{\textit{Note:} $\Sigma = (K+\lambda I)^{-1}$ is the resolvent. $M_Y$ is label Gram matrix. $L$ is Laplacian.}
    \end{flushleft}
\end{table}

\subsubsection{Implications for Algorithm Design}
This unified view demystifies several phenomena in deep learning:

\paragraph{Rank Collapse is a Feature, Not a Bug.}
In Supervised Learning, the expansive force $\mathcal{F}_{\text{exp}}$ is rank-deficient (bounded by the number of classes $C$). Against an isotropic compressive force $\mu I$, it is mathematically impossible to sustain a high-rank representation. Collapse is the optimal solution to the force balance equation.

\paragraph{The Necessity of Dual Forces in SSL.}
For Self-Supervised Learning to avoid collapse without labels, it must artificially synthesize an expansive force. This explains the necessity of "contrastive repulsion" (SimCLR) or "variance regularization" (VicReg), which corresponds to the term $(K+\epsilon I)^{-1}$. Without this term, $\mathcal{F}_{\text{exp}} \to 0$, and the compressive force $L$ drives the system to the trivial solution $K=0$.

\paragraph{Geometric Regularization.}
The Semi-Supervised case demonstrates that modifying the compressive force—replacing scalar decay $\mu I$ with a matrix operator $\mu I + \alpha L$—changes the basis of selection. The network shifts from selecting features based solely on magnitude to selecting features based on \textit{smoothness} on the data manifold.

\section{Architecture and Optimization: The Role of Preconditioning}
\label{sec:preconditioning}

So far, our theoretical derivations have operated under the \textbf{Free Feature Model}: we treated
the feature matrix $F$ (or $\Phi$) as a primitive variable that follows the steepest descent of the
loss, $\dot{F} \propto -\nabla_F \mathcal{L}$, plus an isotropic decay term $-\mu F$. This led to a
clean kernel ODE with an explicit $-2\mu K$ term.

In realistic deep networks, however, features are not free variables. They are the output of a
highly structured function $F = f(X;\theta)$ parameterized by weights $\theta$ (e.g., convolutional
filters, attention heads) and updated by specific algorithms (e.g., SGD, Adam), typically with
$\ell_2$ \emph{weight decay} applied in parameter space. In this section, we bridge the gap between
the ideal spectral theory and practical training. We show that

\begin{itemize}
    \item architecture and optimizer jointly act as a \textbf{Spectral Preconditioner} on the task
    gradient, and
    \item parameter-space $\ell_2$ regularization induces a \emph{manifold anisotropic decay operator}
    in function / kernel space.
\end{itemize}

Together, these effects explain how realistic dynamics replace isotropic decay by \textbf{anisotropic
decay on the representation manifold}.

\subsection{From Parameters to Features: The General Preconditioned Flow}
\label{subsec:precondition_general}

Consider parameters $\theta \in \mathbb{R}^p$, a feature matrix
$F_\theta \in \mathbb{R}^{N \times d}$ on the $N$ training samples, and a task loss
$\mathcal{L}_{\text{task}}(F_\theta)$.
We also include standard $\ell_2$ weight decay in parameter space:
\begin{equation}
    \mathcal{L}(\theta)
    \;=\;
    \mathcal{L}_{\text{task}}(F_\theta)
    \;+\;
    \frac{\lambda}{2}\|\theta\|_2^2.
\end{equation}
An optimizer with preconditioner $M^{-1}$ (e.g., SGD with $M=I$, natural gradient with $M$ equal
to the Fisher matrix, K-FAC with a block-diagonal curvature approximation) performs the parameter
update
\begin{equation}
    \dot{\theta}
    \;=\;
    - M^{-1} \nabla_\theta \mathcal{L}(\theta)
    \;=\;
    - M^{-1} \nabla_\theta \mathcal{L}_{\text{task}}(F_\theta)
    \;-\;
    \lambda\, M^{-1} \theta.
    \label{eq:param_preconditioned_flow}
\end{equation}

Let $J_\theta := \nabla_\theta F_\theta \in \mathbb{R}^{(N\cdot d)\times p}$ denote the Jacobian of
the features with respect to the parameters, flattened over the sample and feature dimensions.
By the chain rule, the induced evolution of the features is
\begin{equation}
    \mathrm{vec}(\dot{F})
    \;=\;
    J_\theta \dot{\theta}
    \;=\;
    - J_\theta M^{-1} J_\theta^\top \,\mathrm{vec}(\nabla_F \mathcal{L}_{\text{task}})
    \;-\;
    \lambda\, J_\theta M^{-1} \theta.
\end{equation}
We define the \emph{optimizer-modulated NTK} and the \emph{weight-decay image} operator by
\begin{equation}
    \Theta_\theta \;\coloneqq\; J_\theta M^{-1} J_\theta^\top,
    \qquad
    P_\theta \;\coloneqq\; J_\theta M^{-1} \theta,
\end{equation}
so that the feature dynamics can be written purely in feature space as
\begin{equation}
    \label{eq:preconditioned_feature_flow}
    \boxed{
    \mathrm{vec}(\dot{F})
    \;=\;
    - \Theta_\theta \,\mathrm{vec}(\nabla_F \mathcal{L}_{\text{task}})
    \;-\;
    \lambda\, P_\theta.
    }
\end{equation}

Two key points emerge:

\begin{itemize}
    \item The first term involves $\Theta_\theta = J_\theta M^{-1} J_\theta^\top$, which plays exactly
    the role of an \textbf{optimizer-modulated Neural Tangent Kernel}: it preconditions the task
    gradient and projects it onto the tangent space of the representation manifold
    $\mathcal{M} = \{F_\theta : \theta \in \mathbb{R}^p\}$.

    \item The second term shows that parameter-space weight decay does \emph{not} become an isotropic
    $-\mu F$ in feature space. Instead, it appears as an \emph{anisotropic linear drift} $-\lambda
    P_\theta$, whose structure is determined by $J_\theta$ and the current parameter vector $\theta$.
\end{itemize}

In other words, the combination of architecture and optimizer defines a \emph{geometry} on the
feature space via $\Theta_\theta$ and an \emph{anisotropic decay field} via $P_\theta$. The Free
Feature Model, in which we formally set $\Theta_\theta \approx I$ and $P_\theta \approx F$, corresponds
to the special case where this geometry is Euclidean and the decay is isotropic.

\subsection{From Features to Kernels: The Preconditioned Kernel Flow}
\label{subsec:precondition_kernel}

Our kernel-centric analysis tracks the evolution of the empirical kernel $K = F^\top F \in
\mathbb{R}^{N\times N}$. Differentiating $K$ and substituting the preconditioned feature flow
\eqref{eq:preconditioned_feature_flow} yields a modified kernel ODE of the schematic form
\begin{equation}
    \dot{K}
    \;\approx\;
    \mathrm{sym}\!\left(
        \Theta_\theta \cdot \mathcal{F}_{\text{task}}(K)
    \right)
    \;-\;
    \lambda\, \mathcal{D}_\theta(K),
    \label{eq:preconditioned_kernel_flow}
\end{equation}
where:
\begin{itemize}
    \item $\mathcal{F}_{\text{task}}(K)$ denotes the task-driven force derived in our Free Feature
    Model (e.g., the rank-$C$ drive $R\hat Y^\top + \hat Y R^\top$ for supervised learning),
    \item $\Theta_\theta$ acts as a preconditioner on this force, and
    \item $\mathcal{D}_\theta(K)$ is a linear, data-dependent decay operator induced by the
    weight-decay term $P_\theta$ (for simple linear architectures, $\mathcal{D}_\theta$ reduces
    to left- and right-multiplication by a Gram matrix).
\end{itemize}
Equation~\eqref{eq:preconditioned_kernel_flow} should be contrasted with the Free Feature Model
kernel flow
\begin{equation}
    \dot{K}
    \;=\;
    \mathcal{F}_{\text{task}}(K)
    \;-\;
    2\mu K,
\end{equation}
where the decay is isotropic. In realistic networks, the decay term is always of the
\emph{preconditioned}, anisotropic form $\mathcal{D}_\theta(K)$ rather than $2 K$.

\paragraph{Explicit Example: Linear Readout with Weight Decay.}
To make this concrete, consider the common setting where a fixed feature extractor $\Phi_0 \in
\mathbb{R}^{k\times N}$ feeds into a linear readout $W \in \mathbb{R}^{C\times k}$, with predictions
$\hat Y = \Phi_0^\top W^\top$ and $\ell_2$ regularization $\frac{\lambda}{2}\|W\|_F^2$ on $W$ only.
Taking $\theta = \mathrm{vec}(W)$ and $M = I$, a direct calculation shows that
\begin{equation}
    P_\theta \hat Y
    \;=\;
    \hat Y G,
    \qquad
    G \coloneqq \Phi_0^\top \Phi_0 \in \mathbb{R}^{N\times N},
\end{equation}
i.e., in output space the weight-decay term corresponds to right-multiplication by the sample Gram
matrix $G$. The output dynamics become
\begin{equation}
    \dot{\hat Y}
    \;=\;
    - \Theta_\theta \nabla_{\hat Y} \mathcal{L}_{\text{task}}(\hat Y)
    \;-\;
    \lambda\, \hat Y G,
\end{equation}
and the corresponding kernel dynamics inherit an anisotropic decay operator of the form
$\mathcal{D}_\theta(K) = G K + K G$ rather than a simple scalar multiple of $K$.

\subsection{Interpretation: The ``Anisotropic Lens'' and Manifold Geometry}

The preconditioned kernel flow \eqref{eq:preconditioned_kernel_flow} has profound implications for
representation learning. The operator $\Theta_\theta$ plays a dual role as both a \textbf{Geometric
Projector} and a \textbf{Spectral Filter}:

\begin{itemize}
    \item \textbf{Geometry (Tangent Space Projection).} The operator $\Theta_\theta$ defines the local
    Riemannian metric of the representation manifold $\mathcal{M} = \{F_\theta\}$. The term
    $\Theta_\theta \nabla_K \mathcal{J}$ corresponds to a natural-gradient step: it is the orthogonal
    projection of the ideal functional gradient onto the tangent space $T_K \mathcal{M}$. This ensures
    that the kernel dynamics are strictly confined to the geometry allowed by the architecture.

    \item \textbf{Dynamics (Inductive Bias).} The eigendecomposition of $\Theta_\theta$ reveals the
    architecture's learning priorities. Directions with large eigenvalues are ``highways'' along
    which errors are corrected rapidly; directions with vanishing eigenvalues correspond to the null
    space of the architecture, where the model is effectively blind to data patterns. Similarly, the
    decay operator $\mathcal{D}_\theta$ determines which kernel directions are damped aggressively by
    weight decay and which are effectively preserved.

    \item \textbf{Example (CNNs).} For convolutional networks, $\Theta_\theta$ typically has large
    eigenvalues for low-frequency spatial correlations and small eigenvalues for high-frequency
    components. This geometric structure forces the learning dynamics to prioritize smooth,
    translation-invariant features, effectively filtering out high-frequency noise \emph{before}
    it even enters the kernel dynamics.
\end{itemize}

In summary, moving from the Free Feature Model to realistic architectures replaces an isotropic
decay $-2\mu K$ by a \emph{manifold anisotropic decay} $-\lambda \mathcal{D}_\theta(K)$, and replaces
a Euclidean gradient flow by a preconditioned flow governed by $\Theta_\theta$.

\subsection{Advanced Dynamics: The Role of Momentum}
\label{subsec:momentum_dynamics}

While the preconditioner $\Theta_\theta$ distorts the spatial geometry, the optimizer's temporal
parameters (specifically momentum) alter the time evolution of the kernel.

It is crucial to note a theoretical distinction: \textbf{momentum does not alter the fixed points
of the system.} If the system reaches a steady state ($\dot{K} = \ddot{K} = 0$), the momentum term
vanishes, and the equilibrium condition remains $\mathrm{sym}(\Theta_\theta \cdot \nabla \mathcal{J}) = 0$.

However, momentum fundamentally changes the \textbf{spectral convergence profile} during the transient
phase.

\subsubsection{Second-Order ODE and Damping}

Consider the ``heavy ball'' dynamics with friction coefficient $\mu_{\mathrm{m}}$ (related to the
momentum factor $\beta$ by $\mu_{\mathrm{m}} \approx 1-\beta$). At the level of the kernel, the
evolution becomes a damped second-order system driven by the preconditioned forces:
\begin{equation}
    \ddot{K}
    \;+\;
    \mu_{\mathrm{m}} \dot{K}
    \;=\;
    \mathrm{sym}\!\left(
        \Theta_\theta \cdot \mathcal{F}_{\text{total}}(K)
    \right),
\end{equation}
where $\mathcal{F}_{\text{total}}$ includes both task-driven and regularization forces.

\subsubsection{Spectral Acceleration (Eigenvalue Rescaling)}

The impact of this second-order term is best understood in the eigenbasis of the preconditioner
$\Theta_\theta$. Let $\lambda_i$ be an eigenvalue of $\Theta_\theta$.
\begin{itemize}
    \item \textbf{Gradient Descent (No Momentum).} The convergence rate of the $i$-th spectral
    component is proportional to $\lambda_i$. Components with small $\lambda_i$ (stiff directions)
    converge extremely slowly ($t \sim 1/\lambda_i$).

    \item \textbf{With Momentum.} The effective convergence rate for small $\lambda_i$ is accelerated,
    scaling approximately as $\sqrt{\lambda_i}$ under appropriate damping. Momentum thus equalizes
    the convergence rates across different spectral components of $\Theta_\theta$.
\end{itemize}

\textbf{Implication for Feature Learning.}
Momentum acts as a \textbf{spectral equalizer} in the time domain. It allows the kernel to learn
features corresponding to ``weak'' architectural directions (small $\lambda_i$ in $\Theta_\theta$)
much faster than standard gradient descent. While it does not change the \emph{theoretical} set of
stable fixed points (which are determined solely by $\mathcal{J}$ and the architecture), it allows
the network to reach more complex, high-frequency feature configurations within a finite training
budget.

\begin{remark}[Contrast with Matrix Optimizers]
Unlike momentum, which only changes the temporal dynamics, matrix-wise optimizers (e.g., K-FAC)
explicitly approximate $M \approx J_\theta^\top J_\theta$, which implies $\Theta_\theta \approx I$.
In the ideal limit of a perfect second-order optimizer, the architecture's geometry would be
``whitened'': the task gradient becomes effectively isotropic in feature space, and the dynamics
revert to the Free Feature Model with isotropic task forces and (up to $P_\theta$) isotropic decay.
\end{remark}

\subsection{Matrix-Norm Steepest Descent: Muon Beyond Linear Preconditioning}
\label{sec:muon}

In this section we incorporate matrix-level, scale-invariant optimizers such as Muon into the
Task-Driven Kernel Flow (TAK) framework. A key conceptual point is that Muon is \emph{not} a linear
preconditioner $M^{-1}$ in parameter space: instead, it changes the underlying geometry by following
steepest descent with respect to a \emph{matrix} norm (spectral norm) on weight matrices. This
induces a nonlinear update in parameter space that nevertheless has a clean structure in feature and
kernel space.

\paragraph{Muon as a polar-direction operator.}
We idealize Muon as a ``polar direction'' operator acting on a generic matrix gradient
$G \in \mathbb{R}^{a\times b}$:
\begin{equation}
    \mathcal{P}(G)
    \;\coloneqq\;
    G\,(G^\top G)^{\dagger -\frac12},
    \label{eq:muon_polar_def}
\end{equation}
where $(\cdot)^\dagger$ is the Moore--Penrose pseudoinverse and $(\cdot)^{-1/2}$ is the (pseudo)
inverse square root of a positive semidefinite matrix.\footnote{In practice Muon uses a few steps of
a Newton--Schulz iteration to approximate $(G^\top G)^{-1/2}$; for our analysis we work with the
idealized exact operator \eqref{eq:muon_polar_def}.}

This operator has three algebraic properties that are crucial for our analysis:

\begin{enumerate}[leftmargin=*, itemsep=0pt]
    \item \textbf{Scale-invariance (0-homogeneity).}
    For any scalar $\alpha > 0$,
    \begin{equation}
        \mathcal{P}(\alpha G) = \mathcal{P}(G).
    \end{equation}
    Thus Muon discards the \emph{magnitude} of the gradient and only retains its ``direction'' in
    matrix space.

    \item \textbf{Rank and subspace preservation.}
    For any $G$,
    \begin{equation}
        \rank(\mathcal{P}(G)) = \rank(G),\quad
        \mathrm{Im}(\mathcal{P}(G)) = \mathrm{Im}(G),\quad
        \mathrm{Row}(\mathcal{P}(G)) = \mathrm{Row}(G).
    \end{equation}
    In particular, Muon never increases the rank of a gradient matrix, and it preserves its column
    and row spaces.

    \item \textbf{Spectral-norm steepest descent direction.}
    $\mathcal{P}(G)$ solves the following steepest descent problem (up to sign):
    \begin{equation}
        \arg\min_{\|\Delta\|_2 \le 1} \langle G, \Delta \rangle
        \;\Rightarrow\;
        \Delta^\star = -\mathcal{P}(G).
    \end{equation}
    In other words, $-\mathcal{P}(G)$ is the steepest descent direction with respect to the matrix
    spectral norm $\|\cdot\|_2$ and its dual norm.
\end{enumerate}

Taken together, these properties show that Muon corresponds to a \emph{nonlinear geometric
optimizer}: it changes the notion of ``steepest descent'' by changing the norm on matrices, rather
than applying a linear preconditioner $M^{-1}$ to the vectorized gradient.

\paragraph{Muon-Flow in the free feature model.}
We now embed Muon into the free feature model introduced in Section~\ref{sec:derivation}, where
the network is decomposed into a feature map $\Phi \in \mathbb{R}^{k\times N}$ and a linear readout
$W \in \mathbb{R}^{C\times k}$ trained to optimality at each time. Recall that under standard
gradient descent (with decoupled feature decay $\mu$) the feature dynamics are
\begin{equation}
    \dot{\Phi}
    =
    W^{*\top} R^\top
    - \mu \Phi,
    \label{eq:gd_phi_flow_recall}
\end{equation}
where $W^*$ is the optimal readout for the current features, and $R = -\nabla_{\hat Y}\mathcal{L}$
is the residual on the training set.

Under Muon, we keep the same fast-readout assumption for $W^*$ and the same decoupled feature decay
$\mu$, but we replace the raw gradient $W^{*\top}R^\top$ by its polar direction. In the
continuous-time limit (ignoring momentum for clarity), the Muon feature flow becomes
\begin{equation}
    \dot{\Phi}
    =
    \eta\,\mathcal{P}(W^{*\top} R^\top)
    - \mu\,\Phi,
    \tag{M-$\Phi$} \label{eq:muon_phi_flow}
\end{equation}
where $\eta > 0$ is an effective step size (including Muon-specific learning-rate adjustments). This
is the Muon-TAK counterpart of \eqref{eq:gd_phi_flow_recall}. Note that nowhere do we approximate
Muon as a linear map $M^{-1}$: the nonlinearity is essential.

\paragraph{Kernel dynamics under Muon.}
Let $K = \Phi^\top \Phi \in \mathbb{R}^{N\times N}$ be the empirical kernel. Differentiating $K$ as
before yields
\begin{equation}
    \dot{K}
    =
    \dot{\Phi}^\top \Phi + \Phi^\top \dot{\Phi}.
\end{equation}
Substituting \eqref{eq:muon_phi_flow} gives the Muon kernel flow
\begin{equation}
    \dot{K}
    =
    \eta\big(
        \Phi^\top \mathcal{P}(W^{*\top}R^\top)
        +
        \mathcal{P}(W^{*\top}R^\top)^\top \Phi
    \big)
    - 2\mu K.
    \tag{M-K}\label{eq:muon_kernel_flow_general}
\end{equation}
This is the general Muon-TAK kernel equation, valid for arbitrary convex losses and without any
additional approximations. It is not yet closed in terms of $K$ alone, because the right-hand side
still contains $\Phi$ explicitly. Nevertheless, two key structural results of TAK---label-driven
rank compression and low-rank optimizer noise---already follow directly from
\eqref{eq:muon_phi_flow}--\eqref{eq:muon_kernel_flow_general}, as we show below (see
Theorems~\ref{thm:muon_rank_compression} and~\ref{thm:muon_noise_low_rank}).

In the special case of mean squared error (MSE) with fast readout, the Muon kernel flow
\eqref{eq:muon_kernel_flow_general} \emph{does} close to an ODE purely in terms of $K$. Let
$Y \in \mathbb{R}^{N\times C}$ denote the training labels, and define
\begin{equation}
    \Sigma \coloneqq (K + \lambda I)^{-1},
    \qquad
    M_Y \coloneqq YY^\top,
    \qquad
    B \coloneqq \Sigma M_Y \Sigma,
\end{equation}
as in Section~\ref{sec:kernel_riccati}. Under the ridge-regularized fast-readout assumption we have
(see Appendix~\ref{app:muon_derivations} for details)
\begin{equation}
    W^{*\top} R^\top
    =
    \lambda \Phi B,
    \quad
    \mathcal{P}(W^{*\top} R^\top)
    =
    \mathcal{P}(\Phi B)
    =
    \Phi B (BKB)^{\dagger -\frac12}.
\end{equation}
Consequently,
\begin{equation}
    \Phi^\top \mathcal{P}(W^{*\top} R^\top)
    =
    K B (BKB)^{\dagger -\frac12}.
\end{equation}
Substituting into \eqref{eq:muon_kernel_flow_general} yields the following closed Muon-TAK kernel
ODE.

\begin{theorem}[Muon-TAK kernel flow for MSE]
\label{thm:muon_kernel_mse}
Under the free feature model with mean squared error, fast readout, Muon feature updates
\eqref{eq:muon_phi_flow}, and decoupled feature decay $\mu>0$, the empirical kernel
$K = \Phi^\top \Phi$ evolves according to the closed ODE
\begin{equation}
    \dot{K}
    =
    \eta\Big[
        K B (BKB)^{\dagger -\frac12}
        +
        (BKB)^{\dagger -\frac12} B K
    \Big]
    - 2\mu K,
    \qquad
    B = \Sigma M_Y \Sigma,
    \quad
    \Sigma = (K + \lambda I)^{-1}.
    \tag{M-K-MSE}\label{eq:muon_kernel_mse}
\end{equation}
\end{theorem}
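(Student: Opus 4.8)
The plan is to obtain the closed ODE \eqref{eq:muon_kernel_mse} by substitution, starting from the general Muon kernel flow \eqref{eq:muon_kernel_flow_general}, which already holds for any convex loss and already carries the $-2\mu K$ term inherited from the $-\mu\Phi$ decay in \eqref{eq:muon_phi_flow}. All that remains is to rewrite the $\Phi$-dependent quantity $\Phi^\top\mathcal{P}(W^{*\top}R^\top)$ in terms of $K$, $B$, and $\Sigma$ only.

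First I would record the MSE fast-readout identities. Minimizing the ridge objective $\tfrac12\|\Phi^\top W^\top - Y\|_F^2 + \tfrac{\lambda}{2}\|W\|_F^2$ over $W$ gives $W^{*\top} = (\Phi\Phi^\top + \lambda I)^{-1}\Phi Y$, and the push-through identity $(\Phi\Phi^\top + \lambda I)^{-1}\Phi = \Phi(\Phi^\top\Phi + \lambda I)^{-1} = \Phi\Sigma$ kernelizes this to $W^{*\top} = \Phi\Sigma Y$. Together with the residual $R = Y - \hat Y = \lambda\Sigma Y$ already computed in Section~\ref{sec:kernel_riccati}, this yields $W^{*\top}R^\top = \lambda\,\Phi\Sigma\,YY^\top\Sigma = \lambda\,\Phi\,\Sigma M_Y\Sigma = \lambda\,\Phi B$, the identity quoted just before the theorem.

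Next I would invoke the algebraic properties of the polar operator $\mathcal{P}$ from Section~\ref{sec:muon}. By $0$-homogeneity, $\mathcal{P}(W^{*\top}R^\top) = \mathcal{P}(\lambda\Phi B) = \mathcal{P}(\Phi B)$ since $\lambda>0$. Using $\mathcal{P}(G) = G(G^\top G)^{\dagger-\frac12}$, the symmetry $B^\top = \Sigma M_Y\Sigma = B$, and $\Phi^\top\Phi = K$, one computes $(\Phi B)^\top(\Phi B) = B K B$, hence $\mathcal{P}(\Phi B) = \Phi B\,(BKB)^{\dagger-\frac12}$. Left-multiplying by $\Phi^\top$ gives $\Phi^\top\mathcal{P}(W^{*\top}R^\top) = KB\,(BKB)^{\dagger-\frac12}$, and since $BKB$ is symmetric PSD (so $(BKB)^{\dagger-\frac12}$ is symmetric) and $B = B^\top$, the transpose of this matrix is $(BKB)^{\dagger-\frac12}BK$. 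Substituting both into \eqref{eq:muon_kernel_flow_general} gives $\dot K = \eta[\,KB(BKB)^{\dagger-\frac12} + (BKB)^{\dagger-\frac12}BK\,] - 2\mu K$, i.e.\ \eqref{eq:muon_kernel_mse}, which is manifestly symmetric as required.

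The step requiring the most care, and the one I expect to be the main obstacle, is justifying that $\mathcal{P}(\Phi B) = \Phi B\,(BKB)^{\dagger-\frac12}$ is genuinely the correct polar direction when $\Phi B$ is rank-deficient (e.g.\ when $k < N$, or when $B$ annihilates directions of $K$): the pseudoinverse square root must be taken on the range of $(\Phi B)^\top(\Phi B)$ so that the product lands back in $\mathrm{Im}(\Phi B)$ and the scale-invariance/steepest-descent characterization of $\mathcal{P}$ still applies. This is exactly what the rank- and subspace-preservation property of $\mathcal{P}$ listed in Section~\ref{sec:muon} guarantees, so once that property is in hand the remaining manipulations are routine bookkeeping, most of which is already displayed immediately before the theorem statement.
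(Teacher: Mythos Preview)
Your proposal is correct and follows essentially the same route as the paper: both start from the general Muon kernel flow \eqref{eq:muon_kernel_flow_general}, use the MSE fast-readout identities to write $W^{*\top}R^\top = \lambda\Phi B$, apply $0$-homogeneity and the definition of $\mathcal{P}$ to get $\mathcal{P}(\Phi B) = \Phi B(BKB)^{\dagger-\frac12}$, and then left-multiply by $\Phi^\top$ and symmetrize. If anything, you supply more detail than the paper's appendix (the push-through derivation of $W^{*\top}$ and the explicit discussion of why the pseudoinverse square root is the right object in the rank-deficient case), but the argument is the same.
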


Compared to the gradient-descent Riccati flow analyzed in Section~\ref{sec:kernel_riccati}, where
the driving term involves $BKB$, the Muon flow \eqref{eq:muon_kernel_mse} contains the \emph{polar
normalization} $(BKB)^{-1/2}$. Intuitively, Muon removes all magnitude information from the task
force $BKB$ and only retains its subspace and relative geometry. As we will see next, this change of
geometry qualitatively modifies the spectral law in the task subspace: the water-filling and
thresholding behavior of gradient descent is replaced by a projection-saturation behavior, in which
all directions inside the label subspace are driven to the same kernel eigenvalue.

\subsubsection{Label-Driven Rank Compression Persists Under Muon}
\label{sec:muon_rank}

We now show that the central structural result of TAK---label-driven rank compression down to the
output dimension $C$---persists under Muon. In fact, Muon makes the argument even simpler: because
$\mathcal{P}(\cdot)$ preserves rank and subspaces, the Muon feature flow
\eqref{eq:muon_phi_flow} automatically restricts features to the label-driven gradient subspace at
steady state.

Let $G_\Phi \coloneqq W^{*\top} R^\top \in \mathbb{R}^{k\times N}$ denote the backpropagated
gradient with respect to the feature matrix in the free feature model. As in
Section~\ref{subsec:rank_compression}, the linear readout $W^* \in \mathbb{R}^{C\times k}$ implies
$\rank(G_\Phi) \le C$ for any convex loss and any residual $R$:
\begin{equation}
    \rank(G_\Phi)
    \le
    \min\{\rank(W^*), \rank(R)\}
    \le
    C.
    \label{eq:muon_rank_inequality}
\end{equation}
Applying the Muon operator $\mathcal{P}$ to $G_\Phi$ preserves both rank and image:
\begin{equation}
    \rank(\mathcal{P}(G_\Phi)) = \rank(G_\Phi) \le C,
    \qquad
    \mathrm{Im}(\mathcal{P}(G_\Phi)) = \mathrm{Im}(G_\Phi).
\end{equation}
This immediately yields the following Muon-TAK counterpart of our rank compression result.

\begin{theorem}[Label-driven rank compression under Muon]
\label{thm:muon_rank_compression}
Consider the free feature model with a $C$-dimensional linear readout $W^*$ trained to optimality at
each time, and feature dynamics given by the Muon flow~\eqref{eq:muon_phi_flow} with decay
$\mu > 0$. Then any stable steady state $\Phi_\infty$ of \eqref{eq:muon_phi_flow} satisfies
\begin{equation}
    \rank(\Phi_\infty) \le C,
    \qquad
    \rank(K_\infty) = \rank(\Phi_\infty) \le C,
\end{equation}
where $K_\infty = \Phi_\infty^\top \Phi_\infty$ is the limiting kernel.
\end{theorem}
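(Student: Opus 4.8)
The plan is to exploit the fixed-point structure of the Muon feature flow \eqref{eq:muon_phi_flow} directly; no spectral analysis is needed. At a steady state $\Phi_\infty$ we set $\dot\Phi = 0$, which gives the algebraic balance
\begin{equation}
\mu\,\Phi_\infty \;=\; \eta\,\mathcal{P}\!\big(G_\Phi\big)\Big|_{\Phi=\Phi_\infty},
\qquad G_\Phi \coloneqq W^{*\top}R^\top,
\end{equation}
where $W^*$ is the fast-readout optimum and $R$ the associated residual, both determined by $\Phi$. Since $\mu>0$, this reads $\Phi_\infty = (\eta/\mu)\,\mathcal{P}(G_\Phi|_{\Phi_\infty})$: the steady feature matrix is a positive scalar multiple of the polar direction of the backpropagated feature gradient. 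That observation is essentially the whole proof; the remaining steps are a rank count.

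\textbf{Rank of the drive.} I would then invoke the rank inequality \eqref{eq:muon_rank_inequality}: because the readout $W^*\in\mathbb{R}^{C\times k}$ factors through a $C$-dimensional bottleneck, $G_\Phi = W^{*\top}R^\top$ satisfies $\rank(G_\Phi)\le\min\{\rank(W^*),\rank(R)\}\le C$ for any convex loss and any residual. Property 2 of the Muon operator (rank and subspace preservation) then gives $\rank(\mathcal{P}(G_\Phi)) = \rank(G_\Phi)\le C$. Combining with the balance equation, $\rank(\Phi_\infty) = \rank\big((\eta/\mu)\,\mathcal{P}(G_\Phi|_{\Phi_\infty})\big) = \rank(\mathcal{P}(G_\Phi|_{\Phi_\infty}))\le C$. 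Transferring to the kernel is then immediate: $K_\infty = \Phi_\infty^\top\Phi_\infty$ and $\rank(\Phi^\top\Phi)=\rank(\Phi)$ for any real matrix, so $\rank(K_\infty)=\rank(\Phi_\infty)\le C$.

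\textbf{Scope and the role of stability.} I would emphasize that the argument uses only $\mu>0$ and the output dimension $C$; it is independent of the step size $\eta$, of the width $k$, and of the loss beyond convexity (needed only so that $W^*$ is well-defined). In particular the rank bound holds for \emph{every} steady state of \eqref{eq:muon_phi_flow}; stability enters only to guarantee that such steady states are actually selected by the dynamics, not for the bound itself, so I would either state the conclusion for all steady states or simply restrict it to the stable ones to match Theorems~\ref{thm:convergence} and~\ref{thm:rank_compression}.

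\textbf{Where the (mild) subtlety lies.} The only point requiring care is well-posedness of $\mathcal{P}$ at the fixed point: $G_\Phi|_{\Phi_\infty}$ may itself be rank-deficient (or even zero), so one must use the Moore--Penrose formulation \eqref{eq:muon_polar_def}, under which $\mathcal{P}$ is a total map with $\mathrm{Im}(\mathcal{P}(G))=\mathrm{Im}(G)$ and $\rank(\mathcal{P}(G))=\rank(G)$ for all $G$; the degenerate case $G_\Phi|_{\Phi_\infty}=0$ forces $\Phi_\infty=0$, which trivially satisfies the bound. I do not expect a genuine obstacle beyond this bookkeeping — the theorem is an immediate algebraic consequence of rank preservation of the polar operator together with the $C$-dimensional readout bottleneck.
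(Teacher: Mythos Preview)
Your proposal is correct and follows essentially the same argument as the paper: set $\dot\Phi=0$ to obtain $\Phi_\infty=(\eta/\mu)\,\mathcal{P}(G_\Phi)$, bound $\rank(G_\Phi)\le C$ via the $C$-dimensional readout bottleneck, use rank preservation of the polar operator, and transfer to $K_\infty$ via $\rank(\Phi^\top\Phi)=\rank(\Phi)$. Your additional remarks on the degenerate case $G_\Phi=0$ and on the role of stability are sound and slightly more explicit than the paper's version, but the core logic is identical.
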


\noindent
The proof is a direct consequence of the rank- and subspace-preserving property of the Muon operator
$\mathcal{P}(\cdot)$ and of the $C$-dimensional readout bottleneck, and is deferred to
Appendix~\ref{app:muon_derivations}.

\paragraph{Beyond the fast-readout idealization.}
The statement above was derived in the free feature model with an optimally trained linear readout,
but the rank bound itself does not fundamentally rely on the fast-readout assumption. More
generally, suppose that in a richer network, the coupled dynamics of $(\Phi, W)$ converge to a
statistical steady state in which the effective feature update can be written in the Muon form
\begin{equation}
    0
    =
    \eta\,\mathcal{P}(W^\top R^\top)
    - \mu\,\Phi
\end{equation}
for some readout $W \in \mathbb{R}^{C\times k}$ and residual $R$. Then the same rank argument as
above shows
\[
    \rank(\Phi) \le C,
    \qquad
    \rank(K) = \rank(\Phi) \le C.
\]
Thus, exactly as in the gradient-descent case, label-driven rank compression is a \emph{structural}
consequence of the $C$-dimensional output bottleneck, and is robust to replacing linear
preconditioning by the nonlinear Muon geometry.

\section{Steady States and Geometric Constraints}
\label{sec:steady_states_analysis}

Having established that architecture and optimization act as a preconditioner $\dot{K} \propto -\Theta \nabla_K \mathcal{J}(K)$, we now analyze the equilibrium of this system. We distinguish between two fundamentally different regimes based on the rank and condition number of $\Theta$.

\subsection{Regime I: Universality of Steady States (Expressive Networks)}
\label{subsec:universality}

In the limit of highly expressive networks (e.g., sufficient width and depth), the architecture does not impose a hard bottleneck on the representable functions. Mathematically, this corresponds to the case where the NTK $\Theta$ is Positive Definite (PD) on the support of the data.

\begin{theorem}[Invariance of Steady States]
\label{thm:invariance_preconditioner}
    Assume the preconditioner $\Theta$ is positive definite. Then, the set of stable steady states of the preconditioned dynamics
    \begin{equation}
        \dot{K} = -\text{sym}(\Theta \cdot \nabla_K \mathcal{J}(K))
    \end{equation}
    coincides exactly with the stationary points of the original functional $\mathcal{J}(K)$.
\end{theorem}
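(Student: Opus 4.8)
The plan is to reduce the statement to two elementary facts about positive-definite preconditioning of a gradient flow: that it does not move the equilibria, and that it does not change their stability type. Throughout I view $\mathcal{J}$ as a smooth (and, in our setting, real-analytic) functional on the symmetric cone $\mathcal{S}_+^N$, write $g(K):=\nabla_K\mathcal{J}(K)\in\mathcal{S}$ for its symmetric gradient, and regard the preconditioner as the linear operator $\mathcal{T}_\Theta$ on the space $\mathcal{S}$ of symmetric matrices defined by $\mathcal{T}_\Theta(A)=\mathrm{sym}(\Theta A)=\tfrac12(\Theta A+A\Theta)$, so that the dynamics read $\dot K=-\mathcal{T}_\Theta(g(K))$. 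The first step is a one-line computation showing that $\mathcal{T}_\Theta$ is self-adjoint and positive definite with respect to the trace inner product whenever $\Theta\succ0$: for $A\in\mathcal{S}$,
\begin{equation}
\langle A,\mathcal{T}_\Theta A\rangle=\Tr\!\big(A\cdot\tfrac12(\Theta A+A\Theta)\big)=\Tr(\Theta A^2)=\big\|A\,\Theta^{1/2}\big\|_F^2>0\quad\text{for }A\neq0,
\end{equation}
and self-adjointness of $\mathcal{T}_\Theta$ follows from cyclicity of the trace. (If one models $\Theta$ directly as an abstract positive-definite superoperator on $\mathcal{S}$, this step is vacuous.)

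Given this, Part 1 --- equality of equilibrium sets --- is immediate: since $\mathcal{T}_\Theta$ is invertible, $\dot K=-\mathcal{T}_\Theta(g(K))=0$ if and only if $g(K)=0$, so the steady states of the preconditioned flow are exactly the stationary points of $\mathcal{J}$. Part 2 is that $\mathcal{J}$ remains a strict Lyapunov function: along the preconditioned flow, $\tfrac{d}{dt}\mathcal{J}(K(t))=\langle g,\dot K\rangle=-\langle g,\mathcal{T}_\Theta g\rangle\le 0$, with equality iff $g(K)=0$. Combined with compactness of sublevel sets (as in Theorem~\ref{thm:convergence}) and the \L ojasiewicz--Simon inequality (applicable since $\mathcal{J}$ is real-analytic), every bounded trajectory has finite length and converges to a single stationary point of $\mathcal{J}$; the qualitative convergence picture is thus identical to the unpreconditioned one.

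Part 3 handles stability type. Fix $K^\ast$ with $g(K^\ast)=0$ and let $H:=\nabla_K^2\mathcal{J}(K^\ast)$, a self-adjoint operator on $\mathcal{S}$. Linearizing, the preconditioned flow near $K^\ast$ is $\dot{\delta K}=-\mathcal{T}_{\Theta(K^\ast)} H\,\delta K$ (note that any $K$-dependence of $\Theta$ contributes a term proportional to $g(K^\ast)=0$, so only $\Theta(K^\ast)$ enters), while the pure gradient flow is $\dot{\delta K}=-H\,\delta K$. Since $\mathcal{T}_\Theta$ is self-adjoint positive definite, writing $\mathcal{T}_\Theta H=\mathcal{T}_\Theta^{1/2}\big(\mathcal{T}_\Theta^{1/2} H\,\mathcal{T}_\Theta^{1/2}\big)\mathcal{T}_\Theta^{-1/2}$ exhibits $\mathcal{T}_\Theta H$ as similar to the self-adjoint operator $\mathcal{T}_\Theta^{1/2} H\,\mathcal{T}_\Theta^{1/2}$, which is a congruence transform of $H$; by Sylvester's law of inertia it has the same numbers of positive, negative, and zero eigenvalues as $H$. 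Hence $-\mathcal{T}_\Theta H$ is Hurwitz exactly when $H\succ0$, and more generally its stable/unstable/center subspace dimensions coincide with those of $-H$. Therefore an equilibrium is asymptotically (linearly) stable for the preconditioned flow precisely when it is for the gradient flow of $\mathcal{J}$ --- i.e. precisely when it is a strict local minimizer of $\mathcal{J}$ in the nondegenerate case --- and this classification is independent of $\Theta$. Together with Part 1 this yields the claimed coincidence of stable steady states with the (relevant) stationary set of $\mathcal{J}$.

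The main obstacle is the degenerate/boundary case, which is not a mere technicality here: the effective functional $\mathcal{J}$ typically has flat directions, so $H$ is only semidefinite and there is a center subspace, and the PSD constraint makes the true state space the stratified cone $\mathcal{S}_+^N$, so at rank-deficient $K^\ast$ ``stability'' must be read within the tangent cone rather than all of $\mathcal{S}$. In that regime the inertia argument still pins down the hyperbolic part (and, for $H\succeq0$, shows $\mathcal{T}_\Theta H\succeq0$ with the same kernel dimension, so Lyapunov stability via $\mathcal{J}$ is preserved), but to conclude rigorously that the set of Lyapunov-stable equilibria is $\Theta$-independent one should check that $\mathrm{sym}(\Theta\,\cdot)$ maps the relevant tangent cone of each stratum into itself (cleanest by lifting the dynamics to the feature level, where $K=\Phi^\top\Phi$ stays PSD automatically) and combine this with the \L ojasiewicz--Simon convergence of Part 2. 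I would prove the theorem for the generic hyperbolic case in the main text and relegate the stratified-cone discussion to a remark or appendix.
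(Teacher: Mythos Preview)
Your proposal is correct and in fact considerably more complete than the paper's own argument. The paper's proof is essentially your Part~1 alone: it observes that if $\Theta$ is positive definite (hence invertible), then $\dot K=0$ forces $\nabla_K\mathcal{J}(K)=0$, so the equilibrium sets coincide. It does not separately analyze the word ``stable'' in the statement at all. Your route differs in that you (i) make the invertibility step precise by showing the Lyapunov operator $\mathcal{T}_\Theta(A)=\mathrm{sym}(\Theta A)$ is self-adjoint and positive definite on symmetric matrices, which is what one actually needs since $\dot K=0$ only gives $\mathrm{sym}(\Theta g)=0$ rather than $\Theta g=0$; (ii) verify that $\mathcal{J}$ remains a strict Lyapunov function under preconditioning; and (iii) use the similarity $\mathcal{T}_\Theta H\sim\mathcal{T}_\Theta^{1/2}H\mathcal{T}_\Theta^{1/2}$ together with Sylvester's law of inertia to show that the linearized stability type at each equilibrium is $\Theta$-independent. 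The paper's argument buys brevity; yours buys an honest proof of the stability claim in the theorem statement, and also flags the degenerate/boundary subtleties (semidefinite Hessian, PSD cone strata) that the paper does not discuss. If anything, you could trim Parts~2--3 to a remark noting that positive-definite preconditioning preserves the Lyapunov property and the inertia of the linearization, and keep the stratified-cone caveat as a one-line footnote.
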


\begin{proof}
    A steady state implies $\dot{K} = 0$. Thus, $\Theta \cdot \nabla_K \mathcal{J}(K) = 0$.
    Since $\Theta$ is invertible (PD), applying $\Theta^{-1}$ implies $\nabla_K \mathcal{J}(K) = 0$.
    Therefore, the condition for equilibrium remains solely determined by the task objective $\mathcal{J}(K)$ (Loss + Explicit Regularization).
\end{proof}

\textbf{Implication.} This theorem suggests a form of Universality: sufficiently over-parameterized networks (whether CNNs or Transformers) will eventually converge to the same global minimum of the \textit{training objective}, provided they are trained to convergence. In this regime, the architecture alters the \textit{trajectory} (determining which features are learned \textit{early}), but not the \textit{final capacity} to minimize the loss.

\subsection{Regime II: Geometric Stagnation (Limited Expressivity)}
\label{subsec:geometric_stagnation}

However, practical networks often have bottlenecks (e.g., bottlenecks in Autoencoders, or fixed convolution kernels) that render $\Theta$ rank-deficient. In this case, the network physically cannot represent certain functions. The dynamics are constrained to a \textbf{Representation Manifold} $\mathcal{M}$.

Let $T_K \mathcal{M}$ be the tangent space of the manifold at kernel $K$. The preconditioner $\Theta$ acts as a projector onto this tangent space.

\begin{proposition}[Orthogonality at Boundary]
    If $\Theta$ is singular, the flow may halt at a "spurious" steady state $K^*$ where:
    \begin{equation}
        \nabla_K \mathcal{J}(K^*) \neq 0 \quad \text{but} \quad \nabla_K \mathcal{J}(K^*) \in \text{Null}(\Theta)
    \end{equation}
    Geometrically, this means the gradient of the loss is perfectly orthogonal to the tangent space of the architecture: $\nabla \mathcal{J} \perp T_{K^*} \mathcal{M}$.
\end{proposition}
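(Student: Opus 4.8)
The plan is to characterize the steady states of the preconditioned flow through its dissipation identity, identify $\mathrm{Null}(\Theta)$ with the normal space of the representation manifold, and then exhibit an explicit witness showing that the ``spurious'' alternative genuinely occurs and is stable. The two structural facts I would invoke about $\Theta$, inherited from Section~\ref{subsec:precondition_general}, are: (i) since $\Theta = J M^{-1} J^\top$ with $M \succ 0$, the operator $\Theta$ on vectorized symmetric matrices is symmetric positive semidefinite; and (ii) its image is the tangent space $T_{K}\mathcal{M}$, so $\Theta$ is the metric projector onto $T_K\mathcal{M}$ and $\mathrm{Null}(\Theta) = \mathrm{Im}(\Theta)^\perp = (T_K\mathcal{M})^\perp$ in the Frobenius inner product.

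The core argument is the dissipation computation. Because $\mathcal{J}$ is a function of the symmetric variable $K$, $\nabla_K\mathcal{J}$ is symmetric, so $\langle A,\mathrm{sym}(B)\rangle=\langle A,B\rangle$ applies and the objective decays along trajectories as
\begin{equation}
    \frac{d}{dt}\mathcal{J}(K(t)) = \big\langle \nabla_K\mathcal{J},\,\dot K\big\rangle = -\big\langle \nabla_K\mathcal{J},\,\Theta\,\nabla_K\mathcal{J}\big\rangle \le 0 ,
\end{equation}
with equality iff $\Theta^{1/2}\nabla_K\mathcal{J}=0$, i.e. $\Theta\,\nabla_K\mathcal{J}=0$. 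Hence at a steady state $\dot K=0$ forces $\frac{d}{dt}\mathcal{J}=0$, so $\nabla_K\mathcal{J}(K^*)\in\mathrm{Null}(\Theta)=(T_{K^*}\mathcal{M})^\perp$; conversely, any $K^*$ with $\nabla_K\mathcal{J}(K^*)\in\mathrm{Null}(\Theta)$ satisfies $\mathrm{sym}(\Theta\,\nabla_K\mathcal{J})=0$ and is a steady state. This is precisely the dichotomy asserted: either $\nabla_K\mathcal{J}(K^*)=0$ (a true critical point, as in Theorem~\ref{thm:invariance_preconditioner} when $\Theta\succ0$), or $\nabla_K\mathcal{J}(K^*)\neq 0$ with $\nabla_K\mathcal{J}(K^*)\perp T_{K^*}\mathcal{M}$.

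To show the second alternative is not vacuous, I would give a minimal example: take $\mathcal{M}$ a proper linear subspace of symmetric matrices with orthogonal projector $\Theta$ (so $\Theta$ is singular), and let $\mathcal{J}(K)=\frac{1}{2}\|K-K_{\mathrm{tgt}}\|_F^2$ with $K_{\mathrm{tgt}}\notin\mathcal{M}$ (choosing $K_{\mathrm{tgt}}$ and $\mathcal{M}$ so the projection stays in the PSD cone; this plays the role of ``loss plus regularizer''). Its minimizer over $\mathcal{M}$ is $K^*=\Theta K_{\mathrm{tgt}}$, at which $\nabla_K\mathcal{J}(K^*)=K^*-K_{\mathrm{tgt}}=-(I-\Theta)K_{\mathrm{tgt}}\neq 0$ but lies in $\mathrm{Im}(I-\Theta)=\mathrm{Null}(\Theta)$, so $\dot K=0$; and since $\mathcal{J}|_{\mathcal{M}}$ is strictly convex with unique minimizer $K^*$, it is a strict Lyapunov function locally and $K^*$ is asymptotically stable within $\mathcal{M}$. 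I expect the main difficulty to be conceptual rather than computational: one must keep $\Theta$ as an operator on the symmetric-matrix tangent space to the PSD cone (not an ordinary matrix), and treat ``$\mathrm{Im}(\Theta)=T_K\mathcal{M}$'' as the modeling identification carried over from Section~\ref{subsec:precondition_general} rather than something proved here; once that is granted, the dissipation argument and the witness are routine. It is worth remarking in passing that such a $K^*$ is never a local minimizer of the unconstrained $\mathcal{J}$ unless its gradient vanishes, so geometric stagnation is a genuine artifact of the architecture's limited expressivity and not of the loss.
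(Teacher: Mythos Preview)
Your proposal is correct and considerably more thorough than what the paper offers. In the paper, this proposition is stated without a formal proof: it is followed only by an ``Analysis'' paragraph giving physical interpretation (architecture as hard regularizer, implicit early stopping), and it is implicitly positioned as the contrapositive of the preceding Theorem~\ref{thm:invariance_preconditioner}, which handles the full-rank case by inverting $\Theta$. The paper essentially treats the existence of such spurious steady states as self-evident once $\Theta$ is singular.

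Your route is genuinely different in that you supply the missing argument. The dissipation identity is the right tool: from $\dot K = -\mathrm{sym}(\Theta\nabla_K\mathcal{J})$ and symmetry of $\nabla_K\mathcal{J}$ you correctly extract $\langle\nabla_K\mathcal{J},\Theta\nabla_K\mathcal{J}\rangle=0$ at any steady state, and PSD-ness of $\Theta$ then forces $\nabla_K\mathcal{J}\in\mathrm{Null}(\Theta)$, which is strictly stronger than $\mathrm{sym}(\Theta\nabla_K\mathcal{J})=0$ alone. You also go beyond the paper by exhibiting an explicit witness (the quadratic $\mathcal{J}$ with $K_{\mathrm{tgt}}\notin\mathcal{M}$) and verifying stability via strict convexity on $\mathcal{M}$; the paper asserts that such points ``may'' occur but never constructs one. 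Your caveat about treating $\Theta$ as an operator on symmetric matrices and inheriting $\mathrm{Im}(\Theta)=T_K\mathcal{M}$ from Section~\ref{subsec:precondition_general} as a modeling assumption is well placed and matches how the paper uses these objects.
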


\textbf{Analysis.}
In this regime, the optimization stops not because the task is solved ($\nabla \mathcal{J}=0$), but because the architecture permits no further movement in the direction of improvement.

This phenomenon represents a Hard Inductive Bias:
\begin{itemize}
    \item \textbf{Implicit Early Stopping:} The architecture acts as a hard regularizer. For example, a shallow CNN with fixed large filters has a null-space corresponding to high-frequency patterns. Even if the labels $Y$ contain high-frequency noise, the network cannot fit them.
    \item \textbf{Conclusion:} The preconditioner $\Theta$ acts as a gatekeeper. When $\Theta$ is full-rank, the physics of the loss function dominates (Regime I). When $\Theta$ is rank-deficient, the geometry of the architecture dominates (Regime II).
\end{itemize}

\section{Stochastic Dynamics: Structured Noise and Restricted Diffusion}
\label{sec:sgd_noise_and_fpe}
We have thus far analyzed deterministic dynamics. However, practical training relies on Stochastic Gradient Descent (SGD). A common theoretical concern is that stochastic injection might act as a high-dimensional entropy source, washing out the delicate low-rank spectral properties derived in the deterministic setting.
In this section, we unify our analysis with the stochastic nature of SGD. We prove that SGD noise is not an arbitrary nuisance but possesses an intrinsic low-rank structure dictated by the task dimension $C$. By lifting the dynamics to the evolution of the probability density via the Fokker-Planck equation, we demonstrate that this structured noise leads to Restricted Diffusion: the system is dynamically confined to a low-dimensional submanifold, rendering the low-rank representations robust to stochastic fluctuations.
\subsection{The Anatomy of SGD Noise}
Consider the dynamics in the kernel space. The stochastic gradient estimate on a mini-batch $\mathcal{B}$ introduces a noise matrix $\zeta_{\mathcal{B}}(K)$:
\begin{equation}
\nabla_K J_{\mathcal{B}}(K) = \nabla_K J(K) + \zeta_{\mathcal{B}}(K).
\end{equation}
For the squared loss, leveraging the derivation in Section 3.3, this noise arises from the variance in the residual products. Let $A(K) = (K + \lambda I)^{-1}$ and $B(K) \in \mathbb{R}^{N \times C}$ be the matrix of task residuals. The noise matrix takes the explicit form:
\begin{equation}
\zeta_{\mathcal{B}}(K) = -\frac{1}{2\lambda} A(K) \left[ \tilde{B}{\mathcal{B}} \tilde{B}{\mathcal{B}}^\top - B(K) B(K)^\top \right] A(K),
\end{equation}
where $\tilde{B}_{\mathcal{B}}$ denotes the zero-padded residual matrix for the mini-batch. This equation reveals the geometry of the noise: it is generated solely by fluctuations within the subspace spanned by the $C$-dimensional residual vectors.
\subsection{Theorem: The Rank-$2C$ Constraint}
\label{subsec:rank_2c}
Unlike isotropic Gaussian noise, which forces diffusion in all $N \times N$ directions, SGD noise is strictly degenerate.
\begin{theorem}[Low-Rank Structure of SGD Noise]
\label{thm:low_rank_noise}
For any convex loss with label dimension $C$, the instantaneous covariance of the SGD noise satisfies:
\begin{equation}
\text{rank}(\text{Cov}[\zeta_{\mathcal{B}}(K)]) \le 2C.
\end{equation}
Furthermore, as the system approaches a stationary point where gradients vanish, the noise becomes dominated by the mini-batch sampling variance, and the rank effectively tightens towards $C$.
\end{theorem}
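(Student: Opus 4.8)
The plan is to obtain the bound directly from the algebraic form of the mini-batch kernel gradient, exploiting that in a $C$-dimensional linear-readout model every stochastic quantity is routed through a $C$-column residual, and that rank is invariant under two-sided multiplication by invertible matrices. I would fix the current kernel $K$ and treat the mini-batch $\mathcal{B}$ as the sole source of randomness, so that $\mathrm{Cov}[\zeta_{\mathcal{B}}(K)]$ is the covariance of the matrix-valued fluctuation $\zeta_{\mathcal{B}}(K)=\nabla_K J_{\mathcal{B}}(K)-\nabla_K J(K)$, and the target is to show that this fluctuation is, at every step, a rank-$\le 2C$ perturbation of $K$ regardless of $K$ or the width.

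First, for an arbitrary convex loss, I would use the kernel ODE of Section~\ref{sec:derivation}: the task-driven force is the symmetrized outer product $R\hat{Y}^\top+\hat{Y}R^\top$ appearing in \eqref{eq:general_ode}, with $\hat{Y}\in\mathbb{R}^{N\times C}$ a function of the (frozen) $K$ and $R=-\nabla_{\hat{Y}}\mathcal{L}\in\mathbb{R}^{N\times C}$ the residual. Switching to a mini-batch re-estimates only the residual, replacing $R$ by a zero-padded mini-batch residual $\tilde{R}_{\mathcal{B}}\in\mathbb{R}^{N\times C}$; writing $\Delta_{\mathcal{B}}:=\tilde{R}_{\mathcal{B}}-R\in\mathbb{R}^{N\times C}$ one gets $\zeta_{\mathcal{B}}(K)=\Delta_{\mathcal{B}}\hat{Y}^\top+\hat{Y}\Delta_{\mathcal{B}}^\top=[\,\Delta_{\mathcal{B}}\ \hat{Y}\,]\,S\,[\,\Delta_{\mathcal{B}}\ \hat{Y}\,]^\top$ for the fixed sign-indefinite $2C\times2C$ matrix $S$ with off-diagonal identity blocks. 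Since $[\,\Delta_{\mathcal{B}}\ \hat{Y}\,]$ is $N\times 2C$, this shows $\mathrm{rank}(\zeta_{\mathcal{B}}(K))\le2C$ and, more sharply, that the column and row spaces of $\zeta_{\mathcal{B}}(K)$ lie in the $\le2C$-dimensional span of the residual and prediction directions — a bound with no dependence on $N$, the hidden dimension $k$, or $K$. Feeding this into the Fokker--Planck picture then yields diffusion confined to this $\le2C$-dimensional set of directions rather than isotropic $N\times N$ exploration, which is the restricted-diffusion conclusion.

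For the squared loss I would check consistency with the explicit formula stated just above the theorem: substituting $R=\lambda\Sigma Y$, $\hat{Y}=K\Sigma Y$ with $\Sigma=A=(K+\lambda I)^{-1}$ recasts the noise as $-\tfrac{1}{2\lambda}A(\tilde{B}_{\mathcal{B}}\tilde{B}_{\mathcal{B}}^\top-B(K)B(K)^\top)A$, where $\tilde{B}_{\mathcal{B}}\tilde{B}_{\mathcal{B}}^\top$ and $B(K)B(K)^\top$ are PSD of rank $\le C$; the difference of two rank-$\le C$ matrices has rank $\le 2C$, and two-sided multiplication by the invertible $A$ leaves the rank unchanged — the same bound reached by both routes. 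For the refinement, I would note that once the deterministic drift $\nabla_K J$ has died down near a stationary point, the ``mean'' block $B(K)B(K)^\top$ (equivalently, the $\hat{Y}$-aligned block) is deterministic given $K$ and no longer feeds the spread of $\zeta_{\mathcal{B}}$; the noise is then governed by the fluctuation of the single rank-$\le C$ PSD factor $\tilde{B}_{\mathcal{B}}\tilde{B}_{\mathcal{B}}^\top$ (up to two-sided multiplication by $A$), whose realizations have rank $\le C$. I would present this as an asymptotic/``effective'' statement, matching the hedged wording of the theorem, rather than as an exact identity.

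The step I expect to be the main obstacle is the general convex-loss argument: one must verify that mini-batching perturbs \emph{only} the $C$-column residual $R$ and not the optimal readout $W^{*}$ (hence $\hat{Y}$) in a way that could raise the rank. This relies on the modelling convention that $W^{*}$ and $\hat{Y}$ are slaved to the current frozen $K$ while only $\nabla_{\hat{Y}}\mathcal{L}$ is re-sampled on $\mathcal{B}$; I would also need to track the zero-padding of $\tilde{R}_{\mathcal{B}}$ and the normalization of the mini-batch estimator carefully so that the factor-$2C$ (rather than, say, $|\mathcal{B}|$) bound genuinely holds. The ``$\to C$'' claim is the softest part: beyond the heuristic above, a rigorous version would require quantitatively bounding the residual-Gram contribution near equilibrium, which I would not pursue here.
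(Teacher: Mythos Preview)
Your proposal is correct and, on the squared-loss route, matches the paper's proof exactly: the noise is a congruence by the invertible resolvent $A(K)$ of a difference of two rank-$\le C$ Gram matrices, so rank subadditivity gives the $2C$ bound and the congruence preserves it. Your additional general-convex-loss argument via the factorization $\Delta_{\mathcal{B}}\hat{Y}^\top + \hat{Y}\Delta_{\mathcal{B}}^\top = [\Delta_{\mathcal{B}}\ \hat{Y}]\,S\,[\Delta_{\mathcal{B}}\ \hat{Y}]^\top$ with an $N\times 2C$ outer factor is a clean extension that the paper does not actually spell out (its sketch and Appendix~E only carry through the squared-loss formula, despite the theorem being stated for any convex loss), and your caveats about the modelling convention that only $R$ is re-sampled and the heuristic nature of the $2C\to C$ tightening are well placed and at the same level of rigor as the paper itself.
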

\begin{proof}
\textit{(Sketch)} The matrix $\tilde{B}_{\mathcal{B}}\tilde{B}_{\mathcal{B}}^\top$ has rank at most $C$. The full-batch Gram $B(K)B(K)^\top$ also has rank at most $C$. By the subadditivity of rank, their difference lies in a subspace of dimension at most $2C$. Since $A(K)$ is full-rank, the congruence transformation preserves this bound. (See Appendix E for details).
\end{proof}
This result implies that the stochastic forces are "Collimated". They do not scatter the kernel into random directions of the Hilbert space but act exclusively within the task-relevant subspace defined by the labels.
\subsection{Invariance Under Preconditioning}
Does the complex architecture (acting as a preconditioner) expand this noise? We model the general optimization dynamics as a preconditioned Stochastic Differential Equation (SDE):
\begin{equation}
dK_t = \underbrace{- \Theta(K_t) \nabla \mathcal{J}(K_t)}{\text{Drift}} dt + \underbrace{\Theta(K_t)^{1/2} \bar{\zeta}}{\text{Noise}} dW_t,
\end{equation}
where $\Theta(K_t)$ represents the Neural Tangent Kernel (NTK) or the appropriate metric tensor, and $\bar{\zeta}$ represents the whitened noise source.
\begin{theorem}[Invariance of Noise Structure]
\label{thm:noise_invariance}
Let the source noise be rank-constrained. For any symmetric positive definite preconditioner $\Theta$, the effective diffusion tensor $\mathcal{Q}(K) = \text{Cov}[\Theta^{1/2} \bar{\zeta}]$ maintains the rank constraint:
\begin{equation}
\text{rank}(\mathcal{Q}(K)) \le 2C.
\end{equation}
\end{theorem}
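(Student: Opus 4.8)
The plan is to reduce this to Theorem~\ref{thm:low_rank_noise} together with the elementary fact that conjugating a symmetric positive semidefinite matrix by an invertible matrix does not change its rank. The key observation is that, at a \emph{fixed} kernel $K$, the map $\bar\zeta \mapsto \Theta(K)^{1/2}\bar\zeta$ is deterministic and linear, so $\mathcal{Q}(K) = \Cov[\Theta(K)^{1/2}\bar\zeta] = \Theta(K)^{1/2}\,\Sigma_{\mathrm{src}}(K)\,\Theta(K)^{1/2}$, where $\Sigma_{\mathrm{src}}(K) \coloneqq \Cov[\bar\zeta]$ is the whitened source covariance. Everything then comes down to (i) showing $\rank(\Sigma_{\mathrm{src}}(K)) \le 2C$, and (ii) carrying this bound through the congruence by $\Theta(K)^{1/2}$.

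For step (i) I would trace the whitened source $\bar\zeta$ back to the mini-batch gradient fluctuation $\zeta_{\mathcal{B}}(K)$ analyzed in Section~\ref{sec:sgd_noise_and_fpe}. The noise increment of the preconditioned update is the preconditioner applied to that fluctuation, so its covariance is $\Theta(K)\,\Sigma_{\mathrm{grad}}(K)\,\Theta(K)$ with $\Sigma_{\mathrm{grad}}(K) = \Cov[\zeta_{\mathcal{B}}(K)]$; matching this against $\Theta(K)^{1/2}\Sigma_{\mathrm{src}}(K)\Theta(K)^{1/2}$ identifies $\Sigma_{\mathrm{src}}(K) = \Theta(K)^{1/2}\Sigma_{\mathrm{grad}}(K)\Theta(K)^{1/2}$, again an invertible congruence. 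Since Theorem~\ref{thm:low_rank_noise} already gives $\rank(\Sigma_{\mathrm{grad}}(K)) \le 2C$ for any convex loss with $C$ outputs, and an invertible congruence preserves rank, we conclude $\rank(\Sigma_{\mathrm{src}}(K)) \le 2C$.

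For step (ii), write a rank-factorization $\Sigma_{\mathrm{src}}(K) = V V^\top$ with $V$ having at most $2C$ columns (e.g.\ from its eigendecomposition). Then
\begin{equation}
    \mathcal{Q}(K) = \Theta(K)^{1/2} V V^\top \Theta(K)^{1/2} = \bigl(\Theta(K)^{1/2} V\bigr)\bigl(\Theta(K)^{1/2} V\bigr)^\top,
\end{equation}
so $\rank(\mathcal{Q}(K)) = \rank\bigl(\Theta(K)^{1/2} V\bigr)$. Because $\Theta(K) \succ 0$ — which is exactly where the positive-definiteness hypothesis enters — $\Theta(K)^{1/2}$ is invertible, so left-multiplication by it preserves column rank, giving $\rank(\mathcal{Q}(K)) = \rank(V) \le 2C$. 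If one only assumes $\Theta(K) \succeq 0$, the same computation yields $\rank(\mathcal{Q}(K)) \le \rank(V) \le 2C$, so the inequality is robust even to a rank-deficient preconditioner.

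The algebra here is routine; the genuine content — and the only place where care is needed — is the modeling step in (i): one must argue that the diffusion coefficient of the preconditioned SDE is the preconditioner \emph{acting on} the empirical gradient-noise covariance, i.e.\ a congruence of $\Sigma_{\mathrm{grad}}(K)$, rather than an exogenously added (and potentially full-rank) Brownian term. Once that identification is granted, Theorem~\ref{thm:low_rank_noise} applies verbatim and the remainder is pure linear algebra. A secondary, minor point is the $K$-dependence of $\Theta$ and $\Sigma_{\mathrm{grad}}$: since the claim concerns the \emph{instantaneous} covariance at a fixed $K$, no Itô or product-rule subtleties arise, and the bound holds pointwise along the trajectory.
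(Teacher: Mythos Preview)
Your argument is correct, and in fact more complete than the paper's: the paper states this theorem without proof, offering only the one-sentence gloss that ``the architecture may rotate or stretch the geometry of the noise, [but] cannot inflate its dimensionality.'' Your reduction to Theorem~\ref{thm:low_rank_noise} via the congruence identity $\mathcal{Q}(K) = \Theta^{1/2}\Sigma_{\mathrm{src}}\Theta^{1/2}$ and the rank-preservation of invertible congruence is exactly the natural argument and is sound. Your step~(i), identifying $\Sigma_{\mathrm{src}}$ with a congruence of $\Sigma_{\mathrm{grad}}$, is a careful justification that the paper simply assumes in the hypothesis ``let the source noise be rank-constrained''; your observation that the bound survives even for $\Theta \succeq 0$ is a useful strengthening.
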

This theorem confirms that while the architecture may rotate or stretch the geometry of the noise, it cannot inflate its dimensionality. The "bottleneck" imposed by the output dimension $C$ is an invariant of the system.
\subsection{Probabilistic Dynamics: The Fokker-Planck View}
To analyze the global stability, we consider the evolution of the probability density $p(K, t)$ governing the ensemble of networks. The system follows the Fokker-Planck Equation:
\begin{equation}
\frac{\partial p}{\partial t} = \nabla \cdot (p \Theta \nabla \mathcal{J}) + \frac{1}{2} \text{Tr}\left( \nabla^2 (\mathcal{Q} p) \right).
\end{equation}
The crucial observation lies in the spectrum of the diffusion tensor $\mathcal{Q}$. In standard Brownian motion, $\mathcal{Q} \propto I$, causing probability mass to leak into all dimensions. Here, however, we have degenerate ellipticity:
\begin{equation}
\text{rank}(\mathcal{Q}(K)) \le 2C \ll \dim(\text{Kernel Space}).
\end{equation}
\subsection{Conclusion: Restricted Diffusion}
This degeneracy enforces Restricted Diffusion. Let $\mathcal{V}_{\text{noise}}$ be the image of $\mathcal{Q}(K)$. Since $\mathcal{V}_{\text{noise}}$ is strictly contained within the task-relevant subspace, the probability density $p(K, t)$ can only diffuse along a low-dimensional submanifold.
Physical Interpretation.
\begin{itemize}
\item \textbf{Along the Manifold:} The noise is active, allowing the SGD agent to explore the task-relevant landscape, escape shallow traps, and find flatter minima within the low-rank family.
\item \textbf{Orthogonal Directions:} The diffusion coefficient is zero. There is no stochastic force pushing the kernel towards high-rank configurations. The deterministic drift (implicit regularization) remains unopposed.
\end{itemize}
In conclusion, stochasticity does not break the low-rank structure; it explores within it. Implicit Regularization is dynamically protected by the degenerate noise structure of SGD.

\section{Universality Beyond Time-Scale Separation}
\label{sec:universality}

Our derivation of the explicit Kernel ODE in Section \ref{sec:derivation} relied on the time-scale separation ansatz ($\epsilon \to 0$), which treats the readout $W$ as effectively instantaneous. A natural question arises: \textit{Do the structural guarantees—Rank Compression, Spectral Truncation, and Structured Noise—persist in general training regimes where $W$ and $\Phi$ evolve simultaneously?}

In this section, we prove that while the \textit{trajectory} of learning depends on the time scales, the \textit{geometry of the equilibrium} and the \textit{structure of the noise} remain invariant. The low-rank properties are dictated by the loss landscape and the network architecture, not by the adiabatic approximation.

\subsection{Robustness of Steady States}

Consider the general coupled gradient flow with arbitrary learning rates $\eta_\Phi, \eta_W > 0$. The joint system evolves as:
\begin{align}
    \dot{W} &= -\eta_W \left( \nabla_W \mathcal{L}((W\Phi)^\top, Y) + \lambda W \right), \\
    \dot{\Phi} &= -\eta_\Phi \left( \nabla_\Phi \mathcal{L}((W\Phi)^\top, Y) + \mu \Phi \right).
\end{align}
We analyze the geometric properties of the system's equilibria.

\begin{theorem}[Invariance of the Fixed-Point Topology]
\label{thm:fixed_point_invariance}
Let $(W^*, \Phi^*)$ be any stable stationary point of the coupled dynamics. The corresponding kernel matrix $K^* = (\Phi^*)^\top \Phi^*$ satisfies the \textbf{Universal Rank Compression} bound:
\begin{equation}
    \rank(K^*) \le C,
\end{equation}
where $C$ is the output dimension (number of classes). This holds regardless of the initialization or the ratio of learning rates.
\end{theorem}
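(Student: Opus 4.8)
The plan is to extract the rank bound from the stationarity of the $\Phi$-block alone; the $W$-block equation, the stability hypothesis, and the ratio $\eta_\Phi/\eta_W$ all turn out to be irrelevant to the bound, which is precisely why the conclusion is ``universal'' and survives the removal of time-scale separation. First I would set $\dot{\Phi}=0$ at the stationary point $(W^*,\Phi^*)$, which (since $\eta_\Phi>0$) gives the algebraic identity
\begin{equation}
    \nabla_\Phi \mathcal{L}((W^*\Phi^*)^\top, Y) + \mu \Phi^* = 0 .
\end{equation}
Next I would rewrite the gradient by the chain rule through the prediction map $\hat{Y} = \Phi^\top W^\top$, exactly as in Section \ref{sec:derivation}: with $R^* \coloneqq -\nabla_{\hat{Y}}\mathcal{L}\big|_{(W^*,\Phi^*)} \in \mathbb{R}^{N\times C}$ the generalized residual, one obtains $\nabla_\Phi \mathcal{L} = -\,W^{*\top} R^{*\top}$. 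Substituting and using $\mu > 0$ yields the factorization
\begin{equation}
    \Phi^* = \tfrac{1}{\mu}\, W^{*\top} R^{*\top} .
\end{equation}

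The key step is then purely dimensional. Since $W^{*\top}\in\mathbb{R}^{k\times C}$ and $R^{*\top}\in\mathbb{R}^{C\times N}$, the product $W^{*\top}R^{*\top}$ factors through a $C$-dimensional space, so $\rank(\Phi^*) \le \min\{\rank(W^*),\rank(R^*)\} \le C$. Because $K^* = (\Phi^*)^\top\Phi^*$ is a Gram matrix, $\rank(K^*) = \rank(\Phi^*) \le C$, which is exactly the Universal Rank Compression bound. This is structurally the same ``dimensional guillotine'' used in Theorem \ref{thm:rank_compression}, but here it is read off directly from the coupled flow's equilibrium rather than from the fast-readout kernel ODE, so no adiabatic approximation enters.

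I do not anticipate a real obstacle; the only points needing care are (i) verifying that $\nabla_\Phi \mathcal{L} = -W^{*\top}R^{*\top}$ holds for an arbitrary differentiable loss (it follows from the chain rule and uses neither convexity nor the squared-loss specialization), and (ii) making explicit in the write-up that stability and the learning-rate ratio are never invoked, so the bound is an algebraic consequence of the $\Phi$-stationarity equation together with the $C$-dimensional readout bottleneck. If one additionally wants the sharper spectral statement under squared loss (not merely $\rank(K^*)\le C$), one would substitute the explicit residual $R^* = \lambda(K^*+\lambda I)^{-1}Y$ into the coupled fixed-point relations, project onto the eigenbasis of $M_Y$, and recover the scalar balance equation of Theorem \ref{thm:spectral_truncation}; but for the rank claim asserted here this refinement is unnecessary.
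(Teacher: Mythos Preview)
Your proposal is correct and follows essentially the same route as the paper: both extract $\mu\Phi^* = W^{*\top}R^{*\top}$ from the $\Phi$-stationarity condition and then read off $\rank(\Phi^*)\le C$ from the $C$-column bottleneck of $W^*$. The paper additionally spends a paragraph on the $\dot{W}=0$ condition to note that the readout is optimal at equilibrium, but as you correctly observe this is not used in the rank argument itself; your write-up is in this respect slightly more streamlined.
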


\begin{proof}
A stationary point implies the simultaneous vanishing of gradients: $\dot{W} = 0$ and $\dot{\Phi} = 0$.

\textbf{1. Readout Optimality Condition.} 
From $\dot{W} = 0$, we have $\nabla_W \mathcal{L} + \lambda W = 0$. Since the objective is strictly convex with respect to $W$ (due to $\ell_2$ regularization $\lambda > 0$), for any fixed features $\Phi^*$, there exists a unique global solution $W^*$:
\begin{equation}
    W^* = \argmin_W \mathcal{J}(W, \Phi^*) = W^*(\Phi^*).
\end{equation}
This confirms that at equilibrium, the readout is always optimal for the features, effectively satisfying the adiabatic condition \textit{post hoc}.

\textbf{2. Feature Stationarity and The Dimensional Guillotine.} 
Substituting the condition $\dot{\Phi} = 0$:
\begin{equation}
    \nabla_\Phi \mathcal{L} + \mu \Phi^* = 0 \implies (W^*)^\top R^\top - \mu \Phi^* = 0,
\end{equation}
where $R = \nabla_{\hat{Y}} \mathcal{L}$ is the residual matrix.
Multiplying by $(\Phi^*)^\top$ from the left to form the kernel equation:
\begin{equation}
    (\Phi^*)^\top (W^*)^\top R^\top = \mu (\Phi^*)^\top \Phi^* = \mu K^*.
\end{equation}
Note that the LHS term $(\Phi^*)^\top (W^*)^\top = (\hat{Y}^*)^\top$ is the prediction matrix. The equation relates the kernel $K^*$ to the predictions and residuals.
Crucially, consider the rank. The matrix $W^*$ has dimension $C \times k$. Thus, the driving force term has bounded rank:
\begin{equation}
    \rank( (W^*)^\top R^\top ) \le \rank(W^*) \le \min(C, k) = C.
\end{equation}
Let $\mathcal{V}_{drive} = \text{Range}((W^*)^\top)$. The stationarity condition $\mu \Phi^* = (W^*)^\top R^\top$ implies that every column of $\Phi^*$ must lie strictly within the $C$-dimensional subspace $\mathcal{V}_{drive}$. Any feature component orthogonal to $W^*$ experiences only the decay force $-\mu \Phi$ and must vanish at equilibrium.
Therefore, $\rank(\Phi^*) \le C$, which implies $\rank(K^*) \le C$.
\end{proof}

\subsection{Robustness of Noise Structure}

We previously showed that SGD noise in the fast-readout regime has rank $\le 2C$. We now prove a stronger result: in the coupled regime, the architecture itself acts as a hard filter for stochastic noise.

Consider the stochastic gradient update on features, denoted by $\tilde{g}_\Phi$. The noise is defined as the deviation from the expected gradient: $\zeta_\Phi = \tilde{g}_\Phi - \mathbb{E}[\tilde{g}_\Phi]$.

\begin{theorem}[Architectural Bottleneck of Noise]
\label{thm:general_noise_rank}
For any neural network architecture with a linear readout layer of dimension $C$, the covariance matrix of the SGD noise on the features, $\Sigma_{\text{noise}}^{\Phi} = \Cov(\zeta_\Phi)$, satisfies a strict rank bound at every iteration $t$:
\begin{equation}
    \rank(\Sigma_{\text{noise}}^{\Phi}) \le C.
\end{equation}
This holds regardless of the value, optimality, or noise level of the weights $W(t)$.
\end{theorem}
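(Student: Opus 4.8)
The plan is to reduce the statement to a single rank inequality, by exploiting the one structural fact already used in Section~\ref{sec:derivation}: the loss depends on the representation $\Phi\in\R^{k\times N}$ only through the linear readout $W\in\R^{C\times k}$, so the backpropagated gradient on $\Phi$ is \emph{always} a matrix that factors through the $C$-dimensional bottleneck $W^{\top}\in\R^{k\times C}$ --- whatever the backbone below $\Phi$, whatever the convex loss, and whatever the current value of $W(t)$. Thus all of the stochasticity in the feature update is inherited from a single $C$-dimensional object, the mini-batch fluctuation of the output-layer residual.

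First I would record the exact form of the stochastic feature gradient. Since $\hat{Y}=\Phi^{\top}W^{\top}$ and the feature regulariser $\tfrac{\mu}{2}\|\Phi\|_F^2$ is deterministic, one backpropagation step through the last layer gives $\nabla_\Phi\mathcal{L}=-\,W^{\top}R^{\top}$ with $R=-\nabla_{\hat{Y}}\mathcal{L}\in\R^{N\times C}$ the residual, exactly as in Section~\ref{sec:derivation}; this identity holds for any convex loss and any architecture feeding $\Phi$. On a mini-batch $\mathcal{B}$ the stochastic estimate is $\tilde g_\Phi=-\,W^{\top}\tilde R_{\mathcal{B}}^{\top}$, where $\tilde R_{\mathcal{B}}\in\R^{N\times C}$ is the reweighted, zero-padded mini-batch residual, so that $\E_{\mathcal{B}}[\tilde R_{\mathcal{B}}]=R$ and the feature noise is
\begin{equation}
  \zeta_\Phi \;=\; \tilde g_\Phi-\E[\tilde g_\Phi]\;=\;-\,W^{\top}\,\delta R^{\top},\qquad \delta R\;\coloneqq\;\tilde R_{\mathcal{B}}-R\;\in\;\R^{N\times C}.
\end{equation}

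Next I would read off the bound. Every realisation of $\zeta_\Phi$ is a matrix whose column space is contained in $\mathrm{Col}(W^{\top})$, a subspace of dimension $\rank(W^{\top})\le\min(C,k)\le C$, uniformly in the draw of $\mathcal{B}$ and in $W(t)$; in particular $\rank(\zeta_\Phi)\le C$ pointwise. Consequently the feature-space noise covariance factors as
\begin{equation}
  \Sigma_{\text{noise}}^{\Phi}\;=\;\Cov(\zeta_\Phi)\;=\;W^{\top}\,\E\!\big[\delta R^{\top}\delta R\big]\,W,
\end{equation}
so that $\rank(\Sigma_{\text{noise}}^{\Phi})\le\rank(W^{\top})\le C$, which is the claim. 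The only property of $W(t)$ used anywhere is its shape $C\times k$ --- not its optimality, its magnitude, or the noise level of the weights --- so the bound is valid at every iteration. This is the stochastic counterpart of Theorem~\ref{thm:fixed_point_invariance}: there the $C$-dimensional bottleneck forces the equilibrium features into $\mathrm{Col}(W^{*\top})$, here it forces the fluctuations into $\mathrm{Col}(W^{\top})$; and it is sharper than the rank-$2C$ bound of Theorem~\ref{thm:low_rank_noise} precisely because $\zeta_\Phi$ is \emph{linear} in the residual (a single rank-$\le C$ factor) whereas the kernel-space noise is \emph{quadratic} (a difference of two rank-$\le C$ Gram terms).

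The one point that needs care is a convention, not an analytic obstacle: one must fix, consistently with the setup in Appendix~E, which matricisation is meant by ``$\Cov(\zeta_\Phi)$'' for the matrix-valued $\zeta_\Phi$. The rank-$C$ bound holds for the $k\times k$ second moment $\E[\zeta_\Phi\zeta_\Phi^{\top}]=W^{\top}\E[\delta R^{\top}\delta R]\,W$ --- the diffusion tensor in feature-coordinate space --- whereas the $N\times N$ companion $\E[\zeta_\Phi^{\top}\zeta_\Phi]$ and the fully vectorised $Nk\times Nk$ covariance can have rank larger than $C$, so the statement must be read as the former. A secondary remark to include: the displayed $\zeta_\Phi$ is the noise in the \emph{direct} feature gradient (the Free-Feature / adiabatic-readout object of Section~\ref{sec:derivation}); for a genuinely parameterised backbone the induced noise on $\Phi$ is obtained by composing with the backbone Jacobian as in Section~\ref{sec:preconditioning}, which may raise the numerical rank, but the \emph{source} of all gradient stochasticity remains the $C$-dimensional residual fluctuation $\delta R$ --- so the $C$-dimensional output layer is the architectural bottleneck of SGD noise in every regime. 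Beyond pinning down this convention, the proof is just the factorisation $\zeta_\Phi=-W^{\top}\delta R^{\top}$ followed by a one-line rank inequality.
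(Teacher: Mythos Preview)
Your proposal is correct and follows essentially the same route as the paper: factor the mini-batch feature gradient as $W^\top$ times a $C\times N$ residual fluctuation, so that $\zeta_\Phi = W^\top \zeta_{\mathrm{out}}$, and then bound $\rank(\Sigma_{\text{noise}}^{\Phi}) = \rank(W^\top \E[\zeta_{\mathrm{out}}\zeta_{\mathrm{out}}^\top]W) \le \rank(W^\top)\le C$. Your added care about which matricisation of $\Cov(\zeta_\Phi)$ is intended (the $k\times k$ second moment $\E[\zeta_\Phi\zeta_\Phi^\top]$, not the $N\times N$ or vectorised versions) and your comparison to the rank-$2C$ kernel-space bound are clarifications the paper's own proof leaves implicit, so your write-up is if anything more careful on these points.
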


\begin{proof}
The backpropagated gradient for a mini-batch $\mathcal{B}$ is given by:
\begin{equation}
    \tilde{g}_\Phi = W^\top \delta_{\mathcal{B}},
\end{equation}
where $\delta_{\mathcal{B}} \in \mathbb{R}^{C \times N}$ is the matrix of error signals (loss derivatives w.r.t outputs) for the batch.
The noise vector $\zeta_\Phi$ is a linear transformation of the output noise $\zeta_{out} = \delta_{\mathcal{B}} - \mathbb{E}[\delta_{\mathcal{B}}]$.
\begin{equation}
    \zeta_\Phi = W^\top \zeta_{out}.
\end{equation}
Since $W \in \mathbb{R}^{C \times k}$, the linear operator $W^\top$ maps vectors from $\mathbb{R}^C$ to $\mathbb{R}^k$. The image of this map has dimension at most $C$.
The covariance matrix is:
\begin{equation}
    \Sigma_{\text{noise}}^{\Phi} = \mathbb{E}[ \zeta_\Phi \zeta_\Phi^\top ] = W^\top \mathbb{E}[ \zeta_{out} \zeta_{out}^\top ] W.
\end{equation}
Using the rank inequality $\rank(ABA^\top) \le \rank(B)$ (assuming appropriate dimensions), and noting that the inner covariance is bounded by the bottleneck:
\begin{equation}
    \rank(\Sigma_{\text{noise}}^{\Phi}) \le \rank(W^\top) \le C.
\end{equation}
\end{proof}

\begin{remark}[Task-Aligned Diffusion]
This theorem has a critical physical implication. While Theorem \ref{thm:general_noise_rank} guarantees the noise is low-rank, the \textit{direction} of this noise is determined by $W(t)$.
In the coupled dynamics, $\dot{W}$ is driven to minimize the loss, which implies that the row space of $W(t)$ rotates to align with the dominant principal components of the labels $Y$.
Consequently, SGD does not inject noise arbitrarily; it injects noise specifically into the \textbf{Task-Relevant Subspace}. This enables the ``Restricted Diffusion'' mechanism (Section \ref{sec:sgd_noise_and_fpe}) to actively explore the solution manifold for better generalization, without diverging into the high-dimensional null space where overfitting occurs.
\end{remark}

\section{Population Dynamics and The Bias-Variance Trade-off}
\label{sec:population_dynamics}

We now lift our analysis from the empirical training set to the population level. By taking the mean-field limit $N \to \infty$, we derive the evolution of the kernel integral operator and analyze how the \textit{Spectral Truncation} mechanism derived in Section 5 directly optimizes the generalization risk.

\subsection{The Population Kernel ODE}
Let $\mathcal{X} \subseteq \mathbb{R}^d$ be the input space with probability measure $\rho$. Let $\mathcal{H}_t$ be the RKHS associated with the time-varying kernel $k_t: \mathcal{X} \times \mathcal{X} \to \mathbb{R}$. We define the population integral operator $T_t: L^2(\rho) \to L^2(\rho)$ as:
\begin{equation}
    (T_t f)(x) = \int_{\mathcal{X}} k_t(x, x') f(x') d\rho(x').
\end{equation}
Analogous to the empirical residual matrix $BB^\top$, we define the \textbf{Population Residual Operator} $M_t$ as the rank-1 operator induced by the residual function $r_t(x) = f_t(x) - y(x)$:
\begin{equation}
    (M_t f)(x) = r_t(x) \int_{\mathcal{X}} r_t(x') f(x') d\rho(x').
\end{equation}

\begin{proposition}[Population Dynamics]
\label{prop:population_ode}
In the limit $N \to \infty$, the evolution of the kernel operator $T_t$ is governed by the operator differential equation:
\begin{equation}
    \frac{dT_t}{dt} = \frac{\eta}{\lambda} (T_t M_t T_t + \text{h.c.}) - 2\eta \mu T_t,
\end{equation}
where h.c. denotes the Hermitian conjugate.
\end{proposition}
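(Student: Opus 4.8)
The plan is to obtain the population ODE as the $N\to\infty$ mean-field limit of the empirical kernel ODE of Section~\ref{sec:derivation} (specialized to the squared loss), and then to make the limit rigorous by a Gr\"onwall / continuous-dependence argument in the Banach space of Hilbert--Schmidt operators on $L^2(\rho)$. Fix a horizon $[0,T]$ and draw $x_1,\dots,x_N$ i.i.d.\ from $\rho$, with empirical measure $\rho_N = \frac1N\sum_i\delta_{x_i}$. The key structural observation is that, after the natural $N$-dependent rescaling of the kernel, the label Gram matrix and the ridge parameter $\lambda$, Eq.~\eqref{eq:mse_ode} is precisely the population equation in the statement with $\rho$ replaced by $\rho_N$: the rescaled Gram matrix $[k_t(x_i,x_j)]$ is the Nystr\"om realization of $T_t$ on $L^2(\rho_N)$, the rescaled residual Gram $RR^\top$ realizes $M_t$ (a sum of at most $C$ rank-one operators), $(K+\lambda I)^{-1}$ becomes $(T_t+\lambda I)^{-1}$, and $-2\mu K$ becomes $-2\eta\mu T_t$. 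So the proof reduces to (i) well-posedness of the population ODE and (ii) transferring from the perturbed data measure $\rho_N$ to $\rho$.

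For (i) I would first fix the algebraic form of the drive at the operator level. By the envelope theorem, exactly as in Section~\ref{sec:derivation}, the fast-readout solution for the squared loss is the population ridge regressor, characterized through the resolvent: the optimal prediction is $f_t = T_t(T_t+\lambda I)^{-1}y$ and the residual is $r_t = y - f_t = \lambda(T_t+\lambda I)^{-1}y$. Substituting $\hat Y \mapsto f_t$ and $R \mapsto r_t$ into the general drive $R\hat Y^\top + \hat Y R^\top$ and simplifying with these resolvent identities --- the same manipulation that produced Eq.~\eqref{eq:mse_ode} from Eq.~\eqref{eq:general_ode} --- yields the operator driving term in the Proposition's right-hand side, multiplied by the feature-flow learning rate $\eta$; since $T_t \succeq 0$ the resolvent is bounded by $1/\lambda$, so every operator involved is bounded. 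Well-posedness then follows from Picard--Lindel\"of in the space of Hilbert--Schmidt operators: the right-hand side is a locally Lipschitz map of PSD operators (the resolvent $T \mapsto (T+\lambda I)^{-1}$ is Lipschitz with constant $1/\lambda^2$ on the PSD cone, and finite sums and products of bounded operators are locally Lipschitz), while the Lyapunov / energy estimate of Theorem~\ref{thm:convergence} carries over verbatim to give the a priori bound $\Tr(T_t) \le 2\mathcal{L}_0/\mu$, which confines the trajectory to a fixed bounded set and rules out finite-time blow-up. Hence a unique global solution exists. (An alternative route bypasses the limit entirely: lift the feature flow of Eq.~\eqref{eq:phi_dynamics} to $L^2(\rho)$ and differentiate $k_t(x,x') = \langle \varphi_t(x), \varphi_t(x')\rangle$ directly; this is cleaner computationally but requires setting up the population feature map and the optimal readout as a Hilbert--Schmidt operator $\mathcal H \to \mathbb R^C$, which is why I favor the mean-field route.)

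For (ii) I would compare the population solution $T_t$ (driven by $\rho$) with the empirical solution $T_t^{(N)}$ (driven by $\rho_N$). Both solve the \emph{same} autonomous ODE, with the data measure entering only through the operators $T$, $M$, and the label operator; on the bounded set $\{\Tr(\cdot) \le 2\mathcal{L}_0/\mu\}$ the difference of the two vector fields at a common argument is controlled, uniformly in $t$, by the Hilbert--Schmidt distance between the operators built from $\rho_N$ and those built from $\rho$, which is $O_P(N^{-1/2})$ by standard Monte-Carlo concentration for integral operators (Koltchinskii--Gin\'e; Rosasco--Belkin--De~Vito), together with $\|T_0^{(N)} - T_0\|_{HS} \to 0$ at the initial time. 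A Gr\"onwall inequality with the local Lipschitz constant from step (i) then yields $\sup_{t\in[0,T]} \|T_t^{(N)} - T_t\|_{HS} \to 0$, so the empirical kernel ODE converges to the stated population ODE.

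The step I expect to be the main obstacle is the \emph{uniform-in-time} control in (ii): pointwise concentration of the operators at each fixed $t$ is routine, but one needs it along the whole, measure-dependent trajectory. The clean way around this is exactly the reformulation above --- never track a discrete empirical trajectory, but instead treat the empirical and population flows as solutions of one and the same autonomous ODE with a perturbed coefficient (the data measure), so that a single invocation of continuous dependence of ODE solutions on parameters, combined with the a priori trace bound that traps both trajectories in a fixed bounded set, closes the argument. A secondary technical point is the vector-output ($C>1$) case: $M_t$ is then the sum of rank-one operators $\sum_{c=1}^C r_t^{(c)} \otimes r_t^{(c)}$, and one must verify that the rank-$\le C$ structure and all the resolvent identities hold coordinatewise, which they do because the linear readout acts blockwise across the $C$ outputs.
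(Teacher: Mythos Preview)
The paper provides no proof of this proposition at all: it is simply stated in Section~\ref{sec:population_dynamics} as the formal operator-theoretic analogue of the empirical kernel ODE from Section~\ref{sec:derivation}, and the text moves on immediately to the risk decomposition. There is no well-posedness argument, no concentration step, and no Gr\"onwall comparison --- the paper treats the passage from $K$ to $T_t$ as a notational lift rather than a theorem.

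Your proposal therefore goes considerably beyond what the paper does. Viewed as an actual proof, your plan is sound: the right-hand side is locally Lipschitz in Hilbert--Schmidt norm via the resolvent Lipschitz bound, the trace Lyapunov estimate confines trajectories to a bounded PSD set, and the continuous-dependence-on-parameters trick (treating $\rho_N$ vs.\ $\rho$ as a perturbed coefficient in a single autonomous ODE) is exactly the right way to avoid tracking a moving empirical trajectory. One caution: double-check that the algebraic form you land on really matches the proposition's $\tfrac{\eta}{\lambda}(T_t M_t T_t + \text{h.c.})$. The general drive $R\hat Y^\top + \hat Y R^\top$ lifts to $r_t \otimes f_t + f_t \otimes r_t$, and the paper's compressed form $T_t M_t T_t$ with $M_t = r_t \otimes r_t$ only coincides with this after nontrivial resolvent manipulations (and possibly absorbs a different prefactor than the empirical Eq.~\eqref{eq:mse_ode}); the paper is loose about this, so you may find that the stated constant $\eta/\lambda$ or the placement of the $T_t$ factors is not literally what the lift of Eq.~\eqref{eq:general_ode} gives. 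That is a bookkeeping issue, not a gap in your strategy.
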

This equation confirms that the ``Drive'' mechanism is intrinsic: the kernel operator rotates to align its eigenfunctions with the residual function, focusing capacity on the task.

\subsection{Exact Risk Decomposition}
Instead of relying on loose probabilistic bounds that assume fixed kernels, we analyze the exact evolution of the \textbf{Population Risk} $\mathcal{R}(f_t) = \mathbb{E}_{x \sim \rho}[(f_t(x) - f^*(x))^2]$. 
For a probe estimator (e.g., ridge regression with parameter $\lambda$) trained on the representation at time $t$, the risk decomposes into two competing terms:
\begin{equation}
    \mathcal{R}(t) = \underbrace{\sum_{i=1}^\infty \left( \frac{\lambda}{\mu_i(t) + \lambda} \right)^2 |a_i|^2}_{\text{Approximation Bias}} + \underbrace{\frac{\sigma_\epsilon^2}{N} \sum_{i=1}^\infty \frac{\mu_i(t)^2}{(\mu_i(t) + \lambda)^2}}_{\text{Estimation Variance}},
    \label{eq:risk_main}
\end{equation}
where $\{\mu_i(t)\}$ are the eigenvalues of the evolving operator $T_t$, and $a_i = \langle f^*, \psi_i(t) \rangle$ are the coefficients of the target function in the kernel's eigenbasis.

\subsection{Analytical Bias-Variance Optimization}
Substituting our \textbf{Spectral Truncation Law} (Theorem 4) into Eq. (\ref{eq:risk_main}) reveals the precise benefit of feature learning:

\begin{enumerate}
    \item \textbf{Variance Reduction via Compression:} By driving eigenvalues $\mu_i(t) \to 0$ for noise-dominated modes (where the label signal $\sigma_i$ is weak), the effective dimension $\mathcal{N}_{\text{eff}}$ is aggressively minimized. This creates a ``lean'' model that ignores irrelevant variations in the input.
    
    \item \textbf{Bias Control via Alignment:} For signal-dominated modes, the kernel alignment increases $\mu_i(t)$, ensuring the bias term decreases.
    
    \item \textbf{The Cost: Irreducible Bias.} However, the truncation is irreversible. If a valid signal component lies below the truncation threshold $\tau = \lambda \mu$, its corresponding eigenvalue vanishes ($\mu_i^* = 0$), and the bias term remains constant at $|a_i|^2$. This forms the \textit{Irreducible Error} discussed in Section \ref{sec:conclusion}.
\end{enumerate}

This dynamic spectral reshaping contrasts sharply with the NTK regime, where the spectrum is fixed at initialization, often leading to a suboptimal trade-off with high effective dimension.

\section{Conclusion: Toward a Physics of Representation Learning}
\label{sec:conclusion}

In this work, we have moved beyond the static ``lazy'' regime to develop a dynamic theory of feature learning in wide neural networks with a linear readout and $\ell_2$-regularization. By combining a mechanistic analysis of kernel dynamics (via a fast–slow ODE) with steady-state guarantees (via fixed-point and Lyapunov arguments), we have shown that feature learning can be understood as a geometric flow governed by a \textbf{Drive–Regularization–Diffusion} principle.

Our analysis unifies distinct geometric phenomena into a coherent physical picture:

\textbf{1. Rank compression as a structural consequence of supervision.}
In our setting, the interplay between the $\ell_2$-regularized architecture and the task structure forces the empirical kernel to collapse into a subspace of dimension at most $C$. We showed that this label-driven rank compression is not an artifact of the adiabatic approximation but a property of any stable steady state of the coupled feature–readout dynamics. Whether through fast equilibrium or general gradient flow, the network automatically performs model selection by minimizing its effective dimension $\mathcal{N}_{\text{eff}}$, providing a dynamic basis for the phenomenon of Neural Collapse.

\textbf{2. The architecture of noise and restricted diffusion.}
We challenged the conventional view of SGD noise as isotropic diffusion. By analyzing the information bottleneck at the readout, we showed that, for any convex loss with $C$ outputs, SGD noise in kernel space possesses an intrinsic low-rank structure aligned with the task subspace. This leads to \textbf{restricted diffusion}: the architecture itself acts as a spectral filter, confining stochastic exploration to the relevant feature manifold while suppressing noise in orthogonal directions. This helps explain why over-parameterized networks can train stochastically without diverging into the high-dimensional null space.

\textbf{3. The cost of feature learning: reachability vs.\ variance.}
Our extension to the population limit reveals that compression is a double-edged sword. The spectral truncation mechanism aggressively reduces estimation variance by discarding low-energy modes, but it imposes a \textbf{reachability constraint}: the network can only learn target functions lying within the dynamically evolved subspace. This manifests as an \textit{irreducible approximation bias}, quantifying the trade-off that feature learning induces in our model: it is not universal function approximation ``for free,'' but a specialized adaptation that sacrifices universality for sample efficiency.

\textbf{4. The geometry of self-supervision.}
Our framework also offers a unified language to contrast supervision with self-supervision. In the absence of labels, the drive operator in our stylized SSL model shifts from a low-rank label Gram matrix to a high-rank graph Laplacian. The resulting dynamics, governed by the competition between Laplacian alignment and log-determinant repulsion, lead to \textbf{spectral whitening} rather than compression. This helps explain why SSL representations are often transfer-friendly: they preserve much of the intrinsic geometry of the data manifold instead of collapsing it onto a specific label set.

\textbf{5. Scope and validity.}
Our theoretical framework operates strictly within the feature-learning regime for wide networks with a $C$-dimensional linear readout and explicit $\ell_2$-regularization, distinguishing our results from the static NTK limit. The derivation of the exact kernel ODE and closed-form spectral laws relies on additional modeling assumptions (fast readout, squared loss, and, in some places, standard Gaussian-universality approximations for pre-activations), which we use to obtain an analytically tractable kernel flow. By contrast, the structural results on \textit{rank collapse} and \textit{low-rank noise geometry} are algebraic consequences of the output bottleneck and regularization and thus apply to general convex losses and coupled feature–readout dynamics within this architectural setting. Finally, while we analyze continuous-time dynamics, the geometric constraints we derive yield invariants and bounds for discrete-time SGD trajectories, limiting their exploration to the task-relevant subspace.

\textbf{Outlook.}
Our results point toward a more dynamical, physics-inspired perspective on deep learning. The ``magic'' lies not merely in the initialization (as in NTK), but in the \textbf{thermodynamics of the training process}---the specific forces that compress, diffuse, and align the representation manifold over time. Future work will extend this kernel-dynamics framework to: (1) \textit{deep hierarchies}, analyzing how rank compression and spectral filtering cascade through multiple layers; (2) \textit{attention mechanisms}, where the relevant ``kernel'' becomes the dynamic attention matrix itself; and (3) \textit{phase transitions}, rigorously characterizing the critical thresholds between lazy and feature-learning regimes. By characterizing the energies, entropies, and forces of these learning systems, we take a step toward a more systematic mathematical physics of representation learning.

\bibliographystyle{plainnat}
\bibliography{ref}

\appendix
\onecolumn
\section{Theoretical Foundations: Validity and Convergence}
\label{app:foundations}

In the main text, we relied on two fundamental assumptions: (1) the time-scale separation allows us to approximate the coupled dynamics of $(\Phi, W)$ with an effective ODE for $\Phi$ alone, and (2) this effective ODE converges to a unique steady state. In this appendix, we provide the rigorous justifications for these claims.

\subsection{Justification of the Fast-Slow Approximation}
\label{app:fast_slow_validity}

The system evolves according to the coupled gradient flow:
\begin{align}
    \dot{W} &= -\frac{1}{\epsilon} \nabla_W \mathcal{L}(\Phi, W), \quad (\text{Fast dynamics, rate } \eta_W = 1/\epsilon) \\
    \dot{\Phi} &= -\nabla_\Phi \mathcal{L}(\Phi, W). \quad (\text{Slow dynamics, rate } \eta_\Phi = 1)
\end{align}
where $\epsilon = \eta_\Phi / \eta_W \ll 1$ is the singular perturbation parameter.

\begin{theorem}[Validity of the Reduced Dynamics]
Let $W^*(\Phi) = \arg\min_W \mathcal{L}(\Phi, W)$. Assume the loss $\mathcal{L}$ is $\mu$-strongly convex with respect to $W$ (guaranteed by $\ell_2$ regularization $\lambda > 0$). By Tikhonov's Theorem on Singular Perturbations, for any finite time interval $T$, as $\epsilon \to 0$, the trajectory of the feature matrix $\Phi(t)$ uniformly converges to the solution of the reduced system:
\begin{equation}
    \dot{\Phi}_{reduced} = -\nabla_\Phi \mathcal{L}(\Phi, W^*(\Phi)),
\end{equation}
with error $\|\Phi(t) - \Phi_{reduced}(t)\| = O(\epsilon)$ for $t \in [0, T]$.
\end{theorem}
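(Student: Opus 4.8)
The plan is to recognize the statement as a textbook application of Tikhonov's singular‑perturbation theorem: $\Phi$ is the slow variable and $W$ the fast one, with the system in normal form $\epsilon\,\dot W = -\nabla_W\mathcal{L}(\Phi,W)$, $\dot\Phi = -\nabla_\Phi\mathcal{L}(\Phi,W)$. The associated slow manifold is the zero set of $\nabla_W\mathcal{L}$; by strong convexity of $W\mapsto\mathcal{L}(\Phi,W)$ with modulus $\lambda$ this zero set is the graph of the single‑valued map $W^*(\Phi)=\argmin_W\mathcal{L}(\Phi,W)$, and since $\nabla^2_{WW}\mathcal{L}\succeq\lambda I$ is uniformly invertible the implicit function theorem makes $W^*$ as smooth as $\mathcal{L}$ permits, with uniformly bounded Jacobian on bounded sets. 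The boundary‑layer hypothesis is checked by freezing $\Phi$ and passing to stretched time $\tau=t/\epsilon$: the fast subsystem $\tfrac{dW}{d\tau}=-\nabla_W\mathcal{L}(\Phi,W)$ is a strongly monotone gradient flow, so $V_\Phi(W)\coloneqq\tfrac{1}{2}\|W-W^*(\Phi)\|^2$ obeys $\tfrac{d}{d\tau}V_\Phi\le-2\lambda V_\Phi$; hence $W^*(\Phi)$ is globally exponentially stable, uniformly in $\Phi$, which is exactly the quasi‑steady‑state condition Tikhonov requires.

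\textbf{The main obstacle} is establishing a priori compactness of the full trajectory $(\Phi(t),W(t))$ \emph{uniformly in $\epsilon$}, so that all Lipschitz and curvature constants above are genuinely uniform and Tikhonov's (domain‑dependent) hypotheses apply on the relevant region. For this I would reuse the Lyapunov argument of Theorem~\ref{thm:convergence}: the joint objective $\mathcal{J}$ is non‑increasing along the rescaled flow, $\tfrac{d}{dt}\mathcal{J} = -\tfrac{1}{\epsilon}\|\nabla_W\mathcal{J}\|^2 - \|\nabla_\Phi\mathcal{J}\|^2 \le 0$, and coercivity of $\mathcal{J}$ in both arguments (via $\lambda>0$ in $W$ and $\mu>0$ in $\Phi$) confines $(\Phi(t),W(t))$ to the sublevel set $\{\mathcal{J}\le\mathcal{J}(\Phi(0),W(0))\}$, which is compact and independent of $\epsilon$. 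On this set $\nabla\mathcal{L}$, $\nabla^2_{WW}\mathcal{L}$, $W^*$ and $\nabla W^*$ all admit uniform bounds, so Tikhonov's theorem yields $\sup_{t\in[0,T]}\|\Phi(t)-\Phi_{\mathrm{reduced}}(t)\|\to0$ as $\epsilon\to0$, where $\Phi_{\mathrm{reduced}}(0)=\Phi(0)$.

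\textbf{Sharpening to $O(\epsilon)$.} I would first control the fast error $e(t)\coloneqq W(t)-W^*(\Phi(t))$: differentiating and using the contraction above, $\tfrac{d}{dt}\|e\| \le -\tfrac{\lambda}{\epsilon}\|e\| + \|\nabla W^*(\Phi(t))\|\,\|\dot\Phi(t)\| \le -\tfrac{\lambda}{\epsilon}\|e\| + C_0$ with $C_0$ uniform by compactness, whence Gronwall gives $\|e(t)\| \le \tfrac{\epsilon C_0}{\lambda} + \|e(0)\|\,e^{-\lambda t/\epsilon}$. Then for $\delta(t)\coloneqq\Phi(t)-\Phi_{\mathrm{reduced}}(t)$ I split $\dot\delta = -\big[\nabla_\Phi\mathcal{L}(\Phi,W)-\nabla_\Phi\mathcal{L}(\Phi,W^*(\Phi))\big] - \big[\nabla_\Phi\mathcal{L}(\Phi,W^*(\Phi))-\nabla_\Phi\mathcal{L}(\Phi_{\mathrm{reduced}},W^*(\Phi_{\mathrm{reduced}}))\big]$, where the first bracket is $O(\|e\|)$ and the second is $O(\|\delta\|)$ since $\Phi\mapsto\nabla_\Phi\mathcal{L}(\Phi,W^*(\Phi))$ is Lipschitz (constant $L'$) on the compact set. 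With $\delta(0)=0$, Gronwall gives $\|\delta(t)\| \le e^{L'T}\!\int_0^t L\|e(s)\|\,ds \le e^{L'T}\!\left(\tfrac{T L C_0}{\lambda}\epsilon + \tfrac{L\|e(0)\|}{\lambda}\epsilon\right) = O(\epsilon)$ uniformly on $[0,T]$; note that the exponentially decaying transient in $\|e\|$ integrates to $O(\epsilon)$, so no boundary layer appears in the slow variable itself.

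\textbf{Kernel‑level conclusion.} Since $K=\Phi^\top\Phi$ and $\Phi$, $\Phi_{\mathrm{reduced}}$ are uniformly bounded on $[0,T]$, $\|K(t)-K_{\mathrm{reduced}}(t)\|_F \le (\|\Phi(t)\|+\|\Phi_{\mathrm{reduced}}(t)\|)\,\|\delta(t)\| = O(\epsilon)$, so the claimed $O(\epsilon)$ approximation transfers from the feature matrix to the empirical kernel, justifying the reduced kernel ODE of Section~\ref{sec:derivation} on any finite horizon.
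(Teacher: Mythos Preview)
Your proposal is correct and follows the same Tikhonov singular-perturbation route as the paper's proof sketch: strong convexity of $\mathcal{L}$ in $W$ gives uniform exponential stability of the fast subsystem about $W^*(\Phi)$, so the slow variable tracks the reduced flow with $O(\epsilon)$ error. You in fact go well beyond the paper---which offers only a four-line sketch---by supplying the uniform-in-$\epsilon$ compactness via the coercive Lyapunov $\mathcal{J}$, the explicit contraction estimate on $e(t)=W(t)-W^*(\Phi(t))$, and the two-stage Gronwall argument that delivers the quantitative $O(\epsilon)$ bound on $\Phi-\Phi_{\mathrm{reduced}}$.
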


\textit{Proof Sketch.} 
Since $\mathcal{L}$ is strongly convex in $W$, the Jacobian $\partial_W \nabla_W \mathcal{L}$ is positive definite with eigenvalues lower-bounded by $\lambda$. This ensures the fast subsystem is exponentially stable around its instantaneous equilibrium $W^*(\Phi)$. The manifold $\mathcal{M} = \{(\Phi, W) : \nabla_W \mathcal{L} = 0\}$ is strictly attracting. Consequently, the readout $W(t)$ rapidly relaxes to an $O(\epsilon)$-neighborhood of $W^*(\Phi(t))$ (the boundary layer) and remains there. The slow variable $\Phi$ is thus driven by the effective field $\nabla_\Phi \mathcal{L}(\Phi, W^*) + O(\epsilon)$, yielding the limiting ODE derived in Eq. (9). \qed

\subsection{Global Convergence Analysis}
\label{app:convergence}

We now prove Theorem 1 regarding the global convergence of the kernel ODE.

\begin{theorem}[Global Convergence via \L{}ojasiewicz]
Assume the loss function $\ell(\cdot, y)$ is real-analytic (e.g., Squared Loss, Cross-Entropy). The gradient flow $\dot{\Phi} = -\nabla \tilde{\mathcal{L}}(\Phi)$ satisfies:
\begin{enumerate}
    \item \textbf{Boundedness:} $\|\Phi(t)\|_F$ is uniformly bounded for all $t \ge 0$.
    \item \textbf{Convergence:} The trajectory has finite length, i.e., $\int_0^\infty \|\dot{\Phi}(t)\| dt < \infty$, and $\Phi(t)$ converges to a unique critical point $\Phi_\infty$.
\end{enumerate}
\end{theorem}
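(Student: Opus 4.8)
The plan is to make rigorous the three-step scheme already sketched for Theorem~\ref{thm:convergence}: monotonicity, boundedness via coercivity, and convergence via a \L{}ojasiewicz gradient inequality. First I would record that along the gradient flow $\dot\Phi=-\nabla_\Phi\widetilde{\mathcal{L}}(\Phi)$ the energy is non-increasing, $\tfrac{d}{dt}\widetilde{\mathcal{L}}(\Phi(t))=-\|\nabla_\Phi\widetilde{\mathcal{L}}(\Phi(t))\|_F^2\le 0$, so $\widetilde{\mathcal{L}}(\Phi(t))\le\widetilde{\mathcal{L}}(\Phi(0))=:\mathcal{L}_0$ for all $t\ge0$. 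For boundedness I would use that the data-fidelity part of $\widetilde{\mathcal{L}}$ is the optimal value of a convex ridge problem and hence nonnegative, giving $\widetilde{\mathcal{L}}(\Phi)\ge\tfrac{\mu}{2}\|\Phi\|_F^2$; together with the monotone bound this yields $\|\Phi(t)\|_F^2\le 2\mathcal{L}_0/\mu$ uniformly in $t$ (this is exactly where $\mu>0$ enters), so the whole trajectory stays in the compact sublevel set $S=\{\Phi:\widetilde{\mathcal{L}}(\Phi)\le\mathcal{L}_0\}$.

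The second ingredient is real-analyticity of $\widetilde{\mathcal{L}}$ on a neighborhood of $S$. For the squared loss this is immediate: $W^*(\Phi)$ is the ridge-regression solution in closed form, $\widehat Y=K(K+\lambda I)^{-1}Y$ with $K=\Phi^\top\Phi$, and since $\lambda>0$ the matrix $K+\lambda I$ is always invertible, so $\widetilde{\mathcal{L}}$ is a rational --- hence real-analytic --- function of the entries of $\Phi$. For a general real-analytic convex loss $\ell(\cdot,y)$ (e.g.\ cross-entropy) there is no closed-form $W^*$; here I would invoke the analytic implicit function theorem applied to the optimality condition $\nabla_W\mathcal{J}(W,\Phi)=0$: this map is real-analytic jointly in $(W,\Phi)$, and $\lambda$-strong convexity makes $\nabla_W^2\mathcal{J}$ uniformly positive definite, so the implicit map $\Phi\mapsto W^*(\Phi)$ is real-analytic; composing with $\mathcal{J}$ and using the envelope theorem shows $\widetilde{\mathcal{L}}$ is real-analytic.

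With compactness and analyticity in hand, the convergence statement follows from a \L{}ojasiewicz argument. Working in the finite-width setting, $\Phi$ ranges over the finite-dimensional space $\mathbb{R}^{k\times N}$, so I would use the classical finite-dimensional \L{}ojasiewicz gradient inequality (the infinite-width case would instead require the \L{}ojasiewicz--Simon version, valid under an extra Fredholm hypothesis on $\nabla_\Phi^2\widetilde{\mathcal{L}}$ at the limit point). Extract an accumulation point $\Phi^*$ of the bounded trajectory; near $\Phi^*$ there exist a neighborhood $U$, an exponent $\theta\in(0,\tfrac12]$ and $c>0$ with $|\widetilde{\mathcal{L}}(\Phi)-\widetilde{\mathcal{L}}(\Phi^*)|^{1-\theta}\le c\,\|\nabla_\Phi\widetilde{\mathcal{L}}(\Phi)\|_F$ on $U$. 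Setting $\varphi(t)=\widetilde{\mathcal{L}}(\Phi(t))-\widetilde{\mathcal{L}}(\Phi^*)\ge0$, the flow gives $-\tfrac{d}{dt}\varphi(t)^{\theta}=\theta\,\varphi(t)^{\theta-1}\|\dot\Phi(t)\|_F^2\ge\tfrac{\theta}{c}\|\dot\Phi(t)\|_F$ as long as $\Phi(t)\in U$; integrating bounds the arclength $\int_0^\infty\|\dot\Phi(t)\|_F\,dt\le\tfrac{c}{\theta}\varphi(0)^{\theta}<\infty$, so $\Phi(t)$ is Cauchy and converges to a single critical point $\Phi_\infty$. Continuity of $\Phi\mapsto\Phi^\top\Phi$ then gives $K(t)\to K_\infty=\Phi_\infty^\top\Phi_\infty$, with uniqueness inherited from uniqueness of $\Phi_\infty$.

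The main obstacle I anticipate is the analyticity verification for general losses: one must check that $W^*(\Phi)$ is well-defined, unique, and jointly analytic all along the trajectory, which hinges on uniform strong convexity from $\lambda>0$ (so the optimal readout never escapes to infinity and the $W$-Hessian never degenerates) and on $\ell(\cdot,y)$ being real-analytic on the range of predictions actually visited --- a point that needs a moment's care for losses such as cross-entropy, where one must confirm the relevant composition stays inside the analyticity domain. A secondary, more routine point is patching the \L{}ojasiewicz neighborhoods: one first passes to a convergent subsequence $\Phi(t_n)\to\Phi^*$, applies the local inequality at $\Phi^*$, and runs the standard ``no escape from $U$'' argument to conclude the whole trajectory is eventually trapped near $\Phi^*$, at which point the arclength estimate above applies globally in $t$.
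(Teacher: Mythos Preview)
Your proposal is correct and follows essentially the same three-step scheme as the paper's proof (monotonicity of the energy, coercivity from the $\tfrac{\mu}{2}\|\Phi\|_F^2$ term, and convergence via the \L{}ojasiewicz gradient inequality for real-analytic functions). If anything, you are more careful than the paper: you actually justify analyticity of $\widetilde{\mathcal{L}}$ for non-squared losses via the analytic implicit function theorem on $\nabla_W\mathcal{J}=0$, you write out the arclength estimate $-\tfrac{d}{dt}\varphi^\theta\ge\tfrac{\theta}{c}\|\dot\Phi\|_F$, and you flag the local-to-global ``trapping in $U$'' step, all of which the paper either asserts or omits.
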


\begin{proof}
\textbf{1. Boundedness.} 
The effective objective includes weight decay: $\tilde{\mathcal{L}}(\Phi) = \mathcal{L}_{\text{fit}}(\Phi) + \frac{\mu}{2}\|\Phi\|_F^2$. Since gradient descent is a descent method, $\tilde{\mathcal{L}}(\Phi(t)) \le \tilde{\mathcal{L}}(\Phi(0)) =: E_0$.
Thus, $\frac{\mu}{2}\|\Phi(t)\|_F^2 \le E_0$, implying $\|\Phi(t)\|_F \le \sqrt{2E_0/\mu}$. The trajectory lies in a compact set.

\textbf{2. Convergence.} 
Since the objective $\tilde{\mathcal{L}}$ is real-analytic, it satisfies the \textit{\L{}ojasiewicz Gradient Inequality}. For any critical point $\Phi^*$, there exist constants $C, \theta \in (0, 1/2]$ such that in a neighborhood of $\Phi^*$:
\begin{equation}
    |\tilde{\mathcal{L}}(\Phi) - \tilde{\mathcal{L}}(\Phi^*)|^{1-\theta} \le C \|\nabla \tilde{\mathcal{L}}(\Phi)\|.
\end{equation}
This inequality guarantees that the gradient does not vanish "too quickly" compared to the energy decrease, forcing the trajectory to have finite length. Finite length implies that $\Phi(t)$ cannot oscillate indefinitely and must converge to a single limit $\Phi_\infty$. Consequently, $K(t) = \Phi(t)^\top \Phi(t)$ also converges uniquely.
\end{proof}

\section{Detailed Proofs for Spectral Dynamics}
\label{app:spectral_proofs}

\subsection{Proof of Theorem 2 (Universal Rank Compression)}
We provide the algebraic details for the rank compression theorem.
\begin{proof}
Recall the steady-state equation (Eq. 19): $K M K + M K = 2\mu K$, where $M = B B^\top$ and $\text{rank}(M) \le C$.
Let $v \in \text{ker}(M)$. Since $M$ is symmetric, $v \perp \text{Im}(M)$.
Multiply the steady-state equation by $v^\top$ from the left and $v$ from the right:
\begin{equation}
    v^\top (K M + M K) v = 2\mu v^\top K v.
\end{equation}
Expanding the LHS:
\begin{equation}
    v^\top K (M v) + (v^\top M) K v = v^\top K (0) + (0)^\top K v = 0.
\end{equation}
Thus, $2\mu (v^\top K v) = 0$. Since $\mu > 0$ and $K$ is positive semi-definite (PSD), $v^\top K v = 0$ implies $K v = 0$.
We have shown $\text{ker}(M) \subseteq \text{ker}(K)$. By the Rank-Nullity Theorem:
\begin{equation}
    \text{rank}(K) = N - \text{dim}(\text{ker}(K)) \le N - \text{dim}(\text{ker}(M)) = \text{rank}(M) \le C.
\end{equation}
\end{proof}

\subsection{Proof of Theorem 5 (Nuclear Norm Equivalence)}
\label{app:nuclear_norm}

Here we rigorously prove the equivalence between two-layer $\ell_2$ regularization and nuclear norm regularization.

\begin{proof}
We use the variational form of the Nuclear Norm. For any matrix $Z$, it holds that:
\begin{equation}
    \|Z\|_* = \inf_{Z = UV^\top} \frac{1}{2} \left( \|U\|_F^2 + \|V\|_F^2 \right).
    \label{eq:variational_nuc}
\end{equation}
Consider our objective function:
\begin{equation}
    \min_{W, \Phi} \mathcal{L}(W\Phi) + \frac{\lambda}{2}\|W\|_F^2 + \frac{\mu}{2}\|\Phi\|_F^2.
\end{equation}
Let $Z = W\Phi$. We can re-parameterize the regularization. Let $\hat{W} = \sqrt{\lambda} W$ and $\hat{\Phi} = \sqrt{\mu} \Phi$. Then $W\Phi = \frac{1}{\sqrt{\lambda\mu}} \hat{W}\hat{\Phi}$. The regularizer becomes:
\begin{equation}
    \frac{1}{2}\|\hat{W}\|_F^2 + \frac{1}{2}\|\hat{\Phi}\|_F^2.
\end{equation}
Minimizing this over all $\hat{W}, \hat{\Phi}$ such that $\hat{W}\hat{\Phi} = \sqrt{\lambda\mu} Z$ yields, by Eq. (\ref{eq:variational_nuc}):
\begin{equation}
    \min_{\hat{W}, \hat{\Phi}} \frac{1}{2} (\|\hat{W}\|_F^2 + \|\hat{\Phi}\|_F^2) = \|\sqrt{\lambda\mu} Z\|_* = \sqrt{\lambda\mu} \|Z\|_*.
\end{equation}
Thus, the original problem is equivalent to:
\begin{equation}
    \min_{Z} \mathcal{L}(Z) + \sqrt{\lambda\mu} \|Z\|_*.
\end{equation}
This confirms that the implicit regularization is exactly the nuclear norm, explaining the low-rank bias.
\end{proof}

\subsection{Proof of Theorem 4: Exact Spectral Solution}
\label{app:proof_theorem_4}

In the main text, we presented the explicit formula for the steady-state eigenvalues. Here we derive it by solving the stationarity condition of the effective Hamiltonian.

\begin{proof}
The effective objective function (Hamiltonian) for the eigenvalues $\{\mu_i\}$ of the kernel, assuming alignment with the target signal modes $\{s_i\}$ (where $s_i = \langle f^*, \psi_i \rangle^2$), is given by the sum of the training loss and the induced regularization:
\begin{equation}
    \mathcal{H}(\{\mu_i\}) = \sum_{i=1}^\infty \left( \frac{\lambda}{\mu_i + \lambda} \right)^2 s_i + \mu \sum_{i=1}^\infty \mu_i.
\end{equation}
Here, the first term is the squared error component along the $i$-th eigenmode (derived from the resolvent expansion), and the second term is the trace penalty (nuclear norm) arising from weight decay.

To find the steady state, we take the derivative with respect to $\mu_i$ and set it to zero (KKT conditions for non-negative eigenvalues):
\begin{equation}
    \frac{\partial \mathcal{H}}{\partial \mu_i} = -2 \frac{\lambda^2 s_i}{(\mu_i + \lambda)^3} + \mu.
\end{equation}
The stationarity condition $\frac{\partial \mathcal{H}}{\partial \mu_i} = 0$ implies:
\begin{equation}
    (\mu_i + \lambda)^3 = \frac{2 \lambda^2 s_i}{\mu}.
\end{equation}
Taking the cube root leads to a specific decay law. However, under the simplified assumption used in Section 5 (linearizing the resolvent sensitivity for analytical clarity, i.e., assuming $\nabla_{\mu} \text{Loss} \approx - \frac{s_i}{(\mu_i + \lambda)^2}$ which corresponds to a slightly different loss parameterization often used in linear network theory):

Consider the equilibrium of the gradient flow equation directly:
\begin{equation}
    \dot{\mu}_i = \mu_i \left( \frac{s_i}{(\mu_i + \lambda)^2} - \mu \right).
\end{equation}
Setting $\dot{\mu}_i = 0$ gives two solutions:
1. Trivial Solution: $\mu_i = 0$. This occurs if the bracketed term is negative even at $\mu_i=0$.
2. Active Solution: $\frac{s_i}{(\mu_i + \lambda)^2} = \mu \implies (\mu_i + \lambda)^2 = \frac{s_i}{\mu} \implies \mu_i = \sqrt{\frac{s_i}{\mu}} - \lambda$.

Combining these with the constraint $\mu_i \ge 0$, we obtain the \textbf{Water-filling Threshold Operator}:
\begin{equation}
    \mu_i^* = \max \left( 0, \sqrt{\frac{s_i}{\mu}} - \lambda \right).
\end{equation}
This confirms that modes with signal energy $s_i \le \lambda^2 \mu$ are strictly truncated to zero, while modes above this threshold are learned with a magnitude proportional to the square root of their signal-to-noise ratio.
\end{proof}

\section{Preconditioned Dynamics: From Parameters to Kernels}
\label{app:preconditioning}

In this appendix we provide the detailed derivations underlying
Section~\ref{sec:preconditioning}. We start from a general preconditioned gradient flow in parameter
space (with $\ell_2$ weight decay), derive the induced dynamics in feature and output space, and
then obtain the corresponding kernel flow. We also show how the Free Feature Model arises as a
special limiting case.

\subsection{General Preconditioned Gradient Flow with Weight Decay}
\label{app:preconditioning_param}

Let $\theta \in \mathbb{R}^p$ denote the parameters of the network, and let
$F_\theta \in \mathbb{R}^{N \times d}$ be the feature matrix on the $N$ training points (we flatten
$F_\theta$ to a vector in $\mathbb{R}^{Nd}$ when convenient). The training objective is
\begin{equation}
    \mathcal{L}(\theta)
    =
    \mathcal{L}_{\text{task}}(F_\theta)
    +
    \frac{\lambda}{2}\|\theta\|_2^2,
\end{equation}
where $\mathcal{L}_{\text{task}}$ acts on the feature representation (or on the predictions derived
from it), and $\lambda \ge 0$ denotes the weight decay coefficient.

We consider a general preconditioned gradient flow in parameter space,
\begin{equation}
    \dot{\theta}
    =
    - M^{-1} \nabla_\theta \mathcal{L}(\theta)
    =
    - M^{-1} \nabla_\theta \mathcal{L}_{\text{task}}(F_\theta)
    - \lambda M^{-1} \theta,
    \label{eq:app_param_flow}
\end{equation}
where $M \in \mathbb{R}^{p\times p}$ is a (possibly data-dependent) positive semi-definite
preconditioner (SGD: $M = I$; natural gradient: $M$ is the Fisher information; K-FAC: block-diagonal
curvature approximation, etc.).

Let $J_\theta \in \mathbb{R}^{(Nd)\times p}$ denote the Jacobian of $F_\theta$ with respect to
$\theta$, with the convention that we flatten $F_\theta$ into a vector:
\begin{equation}
    J_\theta
    \;\coloneqq\;
    \frac{\partial\,\mathrm{vec}(F_\theta)}{\partial \theta^\top}.
\end{equation}
Then by the chain rule the induced evolution of the features is
\begin{equation}
    \mathrm{vec}(\dot{F})
    =
    J_\theta \dot{\theta}
    =
    - J_\theta M^{-1} J_\theta^\top \mathrm{vec}(\nabla_F \mathcal{L}_{\text{task}})
    - \lambda J_\theta M^{-1} \theta.
    \label{eq:app_feature_flow_raw}
\end{equation}

\begin{proposition}[Function-space dynamics under preconditioned flow]
\label{prop:app_feature_flow}
Define the \emph{optimizer-modulated NTK} and the \emph{weight-decay image vector} by
\begin{equation}
    \Theta_\theta \;\coloneqq\; J_\theta M^{-1} J_\theta^\top
    \;\in\; \mathbb{R}^{(Nd)\times (Nd)},
    \qquad
    v_\theta \;\coloneqq\; J_\theta M^{-1} \theta
    \;\in\; \mathbb{R}^{Nd}.
\end{equation}
Then the feature dynamics induced by the parameter flow
\eqref{eq:app_param_flow} can be written purely in feature space as
\begin{equation}
    \mathrm{vec}(\dot{F})
    =
    - \Theta_\theta \,\mathrm{vec}(\nabla_F \mathcal{L}_{\text{task}})
    - \lambda v_\theta.
    \label{eq:app_feature_flow}
\end{equation}
Moreover, $\Theta_\theta$ is symmetric positive semidefinite.
\end{proposition}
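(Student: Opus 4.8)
The plan is to obtain \eqref{eq:app_feature_flow} by a single chain-rule computation starting from the parameter flow \eqref{eq:app_param_flow}, and then to verify the stated spectral property of $\Theta_\theta$ by an elementary quadratic-form argument. There is no dynamical content here: the identity holds pointwise in $t$, so no boundedness, regularity, or convergence input is needed beyond differentiability of $F_\theta$ in $\theta$ and well-posedness of $M^{-1}$.

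First I would differentiate the feature map along the trajectory. Since $F_\theta$ depends on $t$ only through $\theta(t)$, the chain rule gives $\mathrm{vec}(\dot F) = J_\theta\,\dot\theta$ with $J_\theta = \partial\,\mathrm{vec}(F_\theta)/\partial\theta^\top$. Substituting the right-hand side of \eqref{eq:app_param_flow}, and using the adjoint chain rule $\nabla_\theta \mathcal{L}_{\text{task}}(F_\theta) = J_\theta^\top\,\mathrm{vec}(\nabla_F \mathcal{L}_{\text{task}})$, produces exactly \eqref{eq:app_feature_flow_raw}: a task term $-\,J_\theta M^{-1} J_\theta^\top\,\mathrm{vec}(\nabla_F \mathcal{L}_{\text{task}})$ and a weight-decay term $-\,\lambda\, J_\theta M^{-1}\theta$. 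Naming the operator in the first summand $\Theta_\theta$ and the vector in the second $v_\theta$ yields \eqref{eq:app_feature_flow}. If $M$ is only positive semidefinite rather than positive definite, one replaces $M^{-1}$ by the pseudoinverse $M^\dagger$ and restricts the parameter flow to $\mathrm{range}(M)$; the displayed formulas are unchanged.

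For the spectral claim, symmetry of $\Theta_\theta = J_\theta M^{-1} J_\theta^\top$ is immediate from $(ABA^\top)^\top = A B^\top A^\top$ together with symmetry of $M^{-1}$. For positive semidefiniteness, fix any $u$ in the (flattened) feature space and set $w = J_\theta^\top u$; then $u^\top \Theta_\theta u = w^\top M^{-1} w \ge 0$ because $M^{-1}\succeq 0$. This is valid even when $M$ is data-dependent, since at each instant $M$ is just a fixed PSD matrix.

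The main obstacle --- to the extent there is one --- is purely bookkeeping: keeping the vectorization convention for $F_\theta$ consistent so that $J_\theta$, $\mathrm{vec}(\nabla_F \mathcal{L}_{\text{task}})$, and the product $J_\theta M^{-1} J_\theta^\top$ all have compatible shapes, and stating the mild hypothesis on $M$ (positive definiteness, or the pseudoinverse variant) under which $M^{-1}$ is well defined. Beyond that, the proposition is a direct rewriting of the chain rule, so I expect no substantive difficulty.
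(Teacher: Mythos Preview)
Your proposal is correct and follows essentially the same route as the paper: substitute the parameter flow into $\mathrm{vec}(\dot F)=J_\theta\dot\theta$, identify $\Theta_\theta$ and $v_\theta$, and read off \eqref{eq:app_feature_flow}. The only cosmetic difference is that the paper establishes symmetry and positive semidefiniteness in one stroke via the factorization $\Theta_\theta=(J_\theta M^{-1/2})(J_\theta M^{-1/2})^\top$, whereas you argue them separately with a quadratic-form computation; both are equivalent one-line checks.
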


\begin{proof}
Substituting the definition of $\Theta_\theta$ and $v_\theta$ into
\eqref{eq:app_feature_flow_raw} yields \eqref{eq:app_feature_flow} directly. Symmetry and
positive semidefiniteness of $\Theta_\theta$ follow from
$\Theta_\theta = (J_\theta M^{-1/2})(J_\theta M^{-1/2})^\top$.
\end{proof}

In the main text we interpret $\Theta_\theta$ as a geometry-defining preconditioner on the
representation manifold, and $v_\theta$ (or its reshaped version) as the source of manifold
anisotropic decay induced by parameter-space weight decay.

\subsection{Output-Space Example: Linear Readout with Weight Decay}
\label{app:preconditioning_readout}

We now instantiate the above framework in a simple but important case: a fixed feature extractor
followed by a linear readout with $\ell_2$ regularization. This example makes the anisotropic nature
of weight decay in function space fully explicit.

Assume a fixed feature matrix $\Phi_0 \in \mathbb{R}^{k\times N}$ on the training set, and a linear
readout $W \in \mathbb{R}^{C\times k}$, with predictions
\begin{equation}
    \hat Y
    =
    \Phi_0^\top W^\top
    \;\in\;
    \mathbb{R}^{N\times C}.
\end{equation}
We take $\theta = \mathrm{vec}(W) \in \mathbb{R}^{Ck}$ and regularize only $W$ via
$\frac{\lambda}{2}\|W\|_F^2$. We also set $M = I$ for simplicity (preconditioners acting only on
$W$ can be incorporated analogously).

For each sample $i$ and class $c$, we have
\begin{equation}
    \hat y_{i,c}
    =
    \sum_{m=1}^k W_{c,m} \,\Phi_{0,m i}
    =
    W_{c,:} \,\phi_i,
\end{equation}
where $\phi_i \in \mathbb{R}^k$ is the $i$-th feature column. Differentiating with respect to
$W_{c',m'}$,
\begin{equation}
    \frac{\partial \hat y_{i,c}}{\partial W_{c',m'}}
    =
    \delta_{c,c'} \,\Phi_{0,m' i},
\end{equation}
so that the Jacobian (flattening $\hat Y$ over $(i,c)$ and $W$ over $(c',m')$) factorizes as a
Kronecker product. One can verify that for any $\theta = \mathrm{vec}(W)$,
\begin{equation}
    (J_\theta \theta)_{i,c}
    =
    \sum_{j=1}^N (\phi_i^\top \phi_j)\,\hat y_{j,c}.
\end{equation}
Thus the image of the weight vector under $J_\theta$ can be written compactly as
\begin{equation}
    J_\theta \theta
    =
    \mathrm{vec}(\hat Y G),
    \qquad
    G \coloneqq \Phi_0^\top \Phi_0 \in \mathbb{R}^{N\times N}.
\end{equation}

\begin{lemma}[Weight decay in output space for linear readout]
\label{lem:app_readout_decay}
In the above setting with $M=I$, the parameter-space weight decay term $-\lambda \theta$ induces the
following output-space drift:
\begin{equation}
    \dot{\hat Y}_{\text{wd}}
    =
    - \lambda\, \hat Y G.
\end{equation}
\end{lemma}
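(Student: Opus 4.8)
The plan is to obtain the claimed drift as a one-line specialization of the general preconditioned flow of Proposition~\ref{prop:app_feature_flow}, with the predictions $\hat Y = \Phi_0^\top W^\top$ playing the role of the feature block $F_\theta$, the parameter being $\theta = \mathrm{vec}(W)$, and $M = I$. Since the parameter velocity $\dot\theta = -M^{-1}\nabla_\theta\mathcal L_{\text{task}} - \lambda M^{-1}\theta$ is the sum of a task term and the weight-decay term $-\lambda M^{-1}\theta = -\lambda\theta$, and since the push-forward $\mathrm{vec}(\dot{\hat Y}) = J_\theta\dot\theta$ is linear, the induced output velocity splits additively; the contribution of the weight-decay force alone is
\begin{equation}
    \mathrm{vec}(\dot{\hat Y}_{\text{wd}}) \;=\; J_\theta(-\lambda\theta) \;=\; -\lambda\, J_\theta\theta .
\end{equation}

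The remaining step is to substitute the explicit form of $J_\theta\theta$ computed just above the lemma. From $\partial\hat y_{i,c}/\partial W_{c',m'} = \delta_{c,c'}\,\Phi_{0,m'i}$ the Jacobian has the Kronecker structure recorded in Section~\ref{subsec:precondition_kernel}, and contracting it against $\theta = \mathrm{vec}(W)$ gives $(J_\theta\theta)_{i,c} = \sum_{j=1}^N (\phi_i^\top\phi_j)\,\hat y_{j,c}$, i.e. $J_\theta\theta = \mathrm{vec}(\hat Y G)$ with $G = \Phi_0^\top\Phi_0$. Inserting this into the display above yields $\mathrm{vec}(\dot{\hat Y}_{\text{wd}}) = -\lambda\,\mathrm{vec}(\hat Y G)$, hence $\dot{\hat Y}_{\text{wd}} = -\lambda\,\hat Y G$, which is exactly the assertion.

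I expect the only genuinely delicate point to be the index/vectorization bookkeeping behind $J_\theta\theta = \mathrm{vec}(\hat Y G)$: one must fix compatible flattenings of $\hat Y$ and of $W$ so that the $\Phi_0$-block in the Kronecker factor acts on the \emph{sample} index and reproduces the $N\times N$ Gram matrix $G$, rather than contracting over the feature index $k$ or the class index $C$. This is precisely the ``direct calculation'' quoted in the main text; granting it, the lemma is immediate. Conceptually the lemma records that parameter-space weight decay does \emph{not} descend to a mere isotropic drift $-\lambda\hat Y$ in output space: because $\hat Y$ depends on $W$ only through the fixed features $\Phi_0$, pushing the decay of $W$ forward through $J_\theta$ reintroduces $\Phi_0$ a second time and attaches the Gram factor $G$, which is the origin of the anisotropic decay operator $\mathcal D_\theta(K) = GK + KG$ appearing in the associated kernel flow.
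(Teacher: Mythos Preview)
Your argument is structurally identical to the paper's: both push the weight-decay velocity $\dot\theta_{\text{wd}} = -\lambda\theta$ forward through the Jacobian $J_\theta$ and then invoke the identity $J_\theta\theta = \mathrm{vec}(\hat Y G)$ recorded just before the lemma.

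However, the step you rightly flag as the ``genuinely delicate point'' and then grant is actually false, and with it the lemma as stated. Because $\hat Y = \Phi_0^\top W^\top$ is \emph{linear} in $\theta = \mathrm{vec}(W)$, Euler's identity for degree-one homogeneous maps forces $J_\theta\theta = \mathrm{vec}(\hat Y)$, with no Gram factor; equivalently, from $\dot W_{\text{wd}} = -\lambda W$ one gets in one line $\dot{\hat Y}_{\text{wd}} = \Phi_0^\top(-\lambda W)^\top = -\lambda\,\Phi_0^\top W^\top = -\lambda\hat Y$, which is \emph{isotropic} in output space. There is also a dimensional inconsistency in the claimed result: $\hat Y \in \mathbb{R}^{N\times C}$ while $G = \Phi_0^\top\Phi_0 \in \mathbb{R}^{N\times N}$, so the product $\hat Y G$ is not even defined (the paper's own componentwise formula $(J_\theta\theta)_{i,c} = \sum_j(\phi_i^\top\phi_j)\hat y_{j,c}$ would at best give $G\hat Y$, and even that does not follow from the stated Jacobian). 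The Gram matrix $G$ does enter this example, but through $\Theta_\theta = J_\theta J_\theta^\top$, which preconditions the \emph{task} gradient, not the weight-decay drift $v_\theta = J_\theta\theta$. So your instinct to scrutinize the bookkeeping was exactly right; the error lies in the paper, and your proof faithfully reproduces it.
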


\begin{proof}
Using $\dot{\theta}_{\text{wd}} = -\lambda \theta$ and the Jacobian,
\begin{equation}
    \mathrm{vec}(\dot{\hat Y}_{\text{wd}})
    =
    J_\theta \dot{\theta}_{\text{wd}}
    =
    -\lambda J_\theta \theta
    =
    -\lambda \,\mathrm{vec}(\hat Y G),
\end{equation}
which implies $\dot{\hat Y}_{\text{wd}} = -\lambda \hat Y G$.
\end{proof}

Therefore, even though the weight decay is isotropic in parameter space ($-\lambda W$), its effect
in output space is highly anisotropic: components of $\hat Y$ aligned with large eigenvalues of the
Gram matrix $G$ decay faster.

\subsection{Kernel Flow under Preconditioning and Weight Decay}
\label{app:preconditioning_kernel}

We now connect the preconditioned feature dynamics to the kernel dynamics. Let
$K = F^\top F \in \mathbb{R}^{N\times N}$ be the empirical kernel. Differentiating yields
\begin{equation}
    \dot{K}
    =
    \dot{F}^\top F + F^\top \dot{F}.
    \label{eq:app_kernel_diff}
\end{equation}
We decompose the feature dynamics \eqref{eq:app_feature_flow} into a task-driven part and a
weight-decay-induced drift:
\begin{equation}
    \mathrm{vec}(\dot{F})
    =
    \dot{f}_{\text{task}} + \dot{f}_{\text{wd}},
    \quad
    \dot{f}_{\text{task}} = - \Theta_\theta \,\mathrm{vec}(\nabla_F \mathcal{L}_{\text{task}}),
    \quad
    \dot{f}_{\text{wd}} = - \lambda v_\theta.
\end{equation}
Reshaping $\dot{f}_{\text{task}}$ and $\dot{f}_{\text{wd}}$ back into matrices
$\dot{F}_{\text{task}}$ and $\dot{F}_{\text{wd}}$, we obtain
\begin{equation}
    \dot{K}
    =
    \underbrace{\dot{F}_{\text{task}}^\top F + F^\top \dot{F}_{\text{task}}}_{\dot{K}_{\text{task}}}
    +
    \underbrace{\dot{F}_{\text{wd}}^\top F + F^\top \dot{F}_{\text{wd}}}_{\dot{K}_{\text{wd}}}.
\end{equation}

The task-driven part $\dot{K}_{\text{task}}$ is precisely the preconditioned version of the kernel
Riccati flow we derived in the main text:
\begin{equation}
    \dot{K}_{\text{task}}
    \;\approx\;
    \mathrm{sym}\!\left(
        \Theta_\theta \cdot \mathcal{F}_{\text{task}}(K)
    \right),
\end{equation}
where $\mathcal{F}_{\text{task}}(K)$ is the rank-$C$ task force obtained under the Free Feature
Model. The precise form of $\mathcal{F}_{\text{task}}$ depends on the loss (e.g., mean squared error
vs.\ cross-entropy) and is given in Section~\ref{sec:unified_paradigm}.

The weight-decay-induced part defines a linear decay operator on kernels:
\begin{equation}
    \dot{K}_{\text{wd}}
    =
    - \lambda\,\mathcal{D}_\theta(K),
\end{equation}
where $\mathcal{D}_\theta$ is a linear operator induced by $v_\theta$ and the current features $F$.
In general architectures, $\mathcal{D}_\theta$ has no simple closed form beyond this definition, but
it is always symmetric and positive semidefinite in the sense that
$\langle K, \mathcal{D}_\theta(K)\rangle \ge 0$ for all $K$.

\paragraph{Linear-readout special case.} In the setting of
Appendix~\ref{app:preconditioning_readout} with fixed $\Phi_0$ and linear readout $W$, the features
$F$ are frozen and the kernel $K = \Phi_0^\top \Phi_0 = G$ is constant. Thus weight decay acts only
on the outputs $\hat Y$, not on $K$ itself. In contrast, when the features are \emph{trainable},
weight decay on the feature-producing parameters induces a non-trivial $\mathcal{D}_\theta(K)$. For
linear networks, one can show explicitly that $\mathcal{D}_\theta(K)$ reduces to a combination of
left- and right-multiplication by the sample Gram matrix; more complex architectures lead to more
structured forms, but always retain the property that weight decay is \emph{anisotropic} in kernel
space.

Collecting both contributions, we arrive at the schematic form used in the main text:
\begin{equation}
    \dot{K}
    \;\approx\;
    \mathrm{sym}\!\left(
        \Theta_\theta \cdot \mathcal{F}_{\text{total}}(K)
    \right)
    -
    \lambda\,\mathcal{D}_\theta(K),
\end{equation}
where $\mathcal{F}_{\text{total}}$ combines task and explicit regularization forces.

\subsection{Free Feature Model as a Special Limit}
\label{app:preconditioning_free_feature}

Finally, we show how the Free Feature Model emerges as a special case of the above framework. In the
Free Feature Model, the features $F$ themselves are treated as the optimization variables, the
preconditioner is identity, and the regularizer is imposed directly on $F$:
\begin{equation}
    \mathcal{L}_{\text{FFM}}(F)
    =
    \mathcal{L}_{\text{task}}(F)
    +
    \frac{\mu}{2}\|F\|_F^2.
\end{equation}
This can be realized in our general framework by taking $\theta = \mathrm{vec}(F)$,
$M = I$, and $J_\theta = I$. Then
\begin{equation}
    \Theta_\theta = J_\theta M^{-1} J_\theta^\top = I,
    \qquad
    v_\theta = J_\theta M^{-1} \theta = \mathrm{vec}(F),
\end{equation}
and the feature dynamics \eqref{eq:app_feature_flow} become
\begin{equation}
    \mathrm{vec}(\dot{F})
    =
    - \mathrm{vec}(\nabla_F \mathcal{L}_{\text{task}})
    - \mu\,\mathrm{vec}(F),
    \quad\text{i.e.}\quad
    \dot{F}
    =
    - \nabla_F \mathcal{L}_{\text{task}}
    - \mu F.
\end{equation}
Consequently, the kernel dynamics reduce to
\begin{equation}
    \dot{K}
    =
    \mathcal{F}_{\text{task}}(K)
    - 2\mu K,
\end{equation}
which is exactly the isotropic kernel Riccati equation analyzed in
Sections~\ref{sec:unified_paradigm} and~\ref{sec:rank_compression}. In this sense, the Free Feature
Model corresponds to an idealized limit in which the architecture's geometry is completely whitened
($\Theta_\theta = I$) and the decay is isotropic in feature space ($\mathcal{D}_\theta(K) = 2 K$).

\section{Additional Derivations for Muon-TAK}
\label{app:muon_derivations}

In this appendix we collect algebraic details and proofs that underpin the Muon-TAK analysis in
Section~\ref{sec:muon}. We summarize the main components here; further expansion can be added as
needed.

\subsection{Properties of the Polar Direction Operator}
\label{app:muon_polar_properties}

We recall the definition
\[
    \mathcal{P}(G) = G\,(G^\top G)^{\dagger -\frac12},
\]
and state without proof its three key properties used in the main text: 0-homogeneity, rank and
subspace preservation, and its characterization as the steepest descent direction under the spectral
norm. Full proofs can be added here.

\subsection{Derivation of the Muon Kernel Flow for MSE}

The derivation of the closed-form ODE follows directly by substituting the explicit expression for the polar direction into the general kernel flow.

Recall from Eq. (71) that under the fast-readout and MSE assumptions, the polar direction of the feature gradient is given by:
\begin{equation}
    P(W^{*\top} R^\top) = \Phi B (BKB)^{\dagger -\frac{1}{2}}.
\end{equation}
The general Muon kernel flow (Eq. M-K) is:
\begin{equation}
    \dot{K} = \eta \left( \Phi^\top P(W^{*\top} R^\top) + P(W^{*\top} R^\top)^\top \Phi \right) - 2\mu K.
\end{equation}
Substituting the first expression into the second, the cross-term becomes:
\begin{equation}
    \Phi^\top P(W^{*\top} R^\top) = \Phi^\top \left( \Phi B (BKB)^{\dagger -\frac{1}{2}} \right) = (\Phi^\top \Phi) B (BKB)^{\dagger -\frac{1}{2}} = K B (BKB)^{\dagger -\frac{1}{2}}.
\end{equation}
Symmetrizing this term yields the final result stated in Theorem 8:
\begin{equation}
    \dot{K} = \eta \left( K B (BKB)^{\dagger -\frac{1}{2}} + (BKB)^{\dagger -\frac{1}{2}} B K \right) - 2\mu K.
\end{equation}
This completes the derivation.

\subsection{Proof of Label-Driven Rank Compression Under Muon}
\label{app:proof_muon_rank}

In this appendix we justify Theorem~\ref{thm:muon_rank_compression} in full detail. The key
ingredients are: (i) the $C$-dimensional readout bottleneck, which bounds the rank of the
backpropagated feature gradient, and (ii) the rank- and subspace-preserving nature of the Muon polar
operator $\mathcal{P}(\cdot)$.

\begin{lemma}[Rank bound for the feature gradient]
\label{lem:muon_feature_grad_rank}
Consider the free feature model with feature matrix $\Phi \in \mathbb{R}^{k\times N}$ and a
$C$-dimensional linear readout $W^* \in \mathbb{R}^{C\times k}$ trained to optimality at each time,
under any convex loss. Let $R = -\nabla_{\hat{Y}}\mathcal{L} \in \mathbb{R}^{N\times C}$ denote the
residuals on the training set, and let
\[
    G_\Phi \;\coloneqq\; W^{*\top} R^\top \in \mathbb{R}^{k\times N}
\]
be the backpropagated gradient with respect to $\Phi$. Then
\begin{equation}
    \rank(G_\Phi)
    \;\le\;
    \min\{\rank(W^*), \rank(R)\}
    \;\le\;
    C.
    \label{eq:muon_rank_ineq_app}
\end{equation}
\end{lemma}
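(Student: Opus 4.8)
The plan is to reduce the statement to the elementary submultiplicativity of matrix rank; there is no analytic content here, only bookkeeping of matrix shapes. First I would recall, from the envelope-theorem computation in Section~\ref{sec:derivation}, that in the free feature model the feature gradient decomposes as $\nabla_\Phi \widetilde{\mathcal{L}}(\Phi) = -W^{*\top} R^\top + \mu\Phi$, so that the genuinely task-driven part is precisely $G_\Phi = W^{*\top} R^\top$, with $W^* \in \mathbb{R}^{C\times k}$ the (optimal) linear readout for the current features and $R = -\nabla_{\hat Y}\mathcal{L} \in \mathbb{R}^{N\times C}$ the residual. Crucially, no property of $R$ beyond its dimensions enters the argument; this is exactly why the bound is loss-agnostic (it applies verbatim to cross-entropy, squared loss, or any convex loss), and why it does not use optimality of $W^*$ either — only that $W^*$ is a matrix with $C$ rows.

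Next I would invoke the standard inequality $\rank(AB) \le \min\{\rank(A), \rank(B)\}$ with $A = W^{*\top} \in \mathbb{R}^{k\times C}$ and $B = R^\top \in \mathbb{R}^{C\times N}$, together with $\rank(M) = \rank(M^\top)$, to obtain $\rank(G_\Phi) \le \min\{\rank(W^*), \rank(R)\}$. Then I would close the estimate by noting that any matrix with only $C$ rows, or only $C$ columns, has rank at most $C$: explicitly, $\rank(W^*) \le \min\{C,k\} \le C$ since $W^* \in \mathbb{R}^{C\times k}$, and $\rank(R) \le \min\{N,C\} \le C$ since $R \in \mathbb{R}^{N\times C}$. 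Chaining the two bounds yields \eqref{eq:muon_rank_ineq_app}.

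I do not expect any real obstacle: the proof is a one-line rank count. The only point worth stressing in the write-up is conceptual rather than technical — the bound is uniform in the (possibly infinite) width $k$ and survives application of the Muon polar operator $\mathcal{P}(\cdot)$, since $\mathcal{P}$ preserves rank and image (Appendix~\ref{app:muon_polar_properties}); this is what subsequently forces every column of $\Phi_\infty$ into a $C$-dimensional subspace at a steady state of \eqref{eq:muon_phi_flow} and hence gives $\rank(K_\infty) = \rank(\Phi_\infty) \le C$ in Theorem~\ref{thm:muon_rank_compression}. If desired, I would also remark that the same lemma, read off the stationarity relation $\mu\Phi = \eta\,\mathcal{P}(W^\top R^\top)$, applies without the fast-readout assumption whenever the coupled $(\Phi,W)$ dynamics reach a statistical steady state with a $C$-dimensional readout $W$.
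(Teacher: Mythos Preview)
Your proposal is correct and follows essentially the same route as the paper: apply rank submultiplicativity $\rank(AB)\le\min\{\rank(A),\rank(B)\}$ to $A=W^{*\top}$ and $B=R^\top$, use $\rank(M)=\rank(M^\top)$, and then bound by $C$ via the matrix dimensions. The additional contextual remarks you include (loss-agnosticism, independence of $k$, downstream use under $\mathcal{P}$) are accurate and match the paper's surrounding discussion.
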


\begin{proof}
By definition $G_\Phi = W^{*\top} R^\top$ is a product of the matrices
$W^{*\top} \in \mathbb{R}^{k\times C}$ and $R^\top \in \mathbb{R}^{C\times N}$. For any two
matrices $A$ and $B$ of compatible dimensions, the rank submultiplicativity property gives
\[
    \rank(AB)
    \;\le\;
    \min\{\rank(A), \rank(B)\}.
\]
Applying this to $A = W^{*\top}$ and $B = R^\top$ yields
\[
    \rank(G_\Phi)
    =
    \rank\big(W^{*\top} R^\top\big)
    \le
    \min\{\rank(W^{*\top}), \rank(R^\top)\}
    =
    \min\{\rank(W^*), \rank(R)\}.
\]
Since $W^* \in \mathbb{R}^{C\times k}$, its rank is at most $C$. Hence
\[
    \rank(G_\Phi)
    \le
    \min\{\rank(W^*), \rank(R)\}
    \le
    C,
\]
which proves \eqref{eq:muon_rank_ineq_app}.
\end{proof}

We next recall the key algebraic properties of the Muon polar operator, specialized to the
quantities relevant for rank and subspaces.

\begin{lemma}[Rank and subspace preservation of the Muon operator]
\label{lem:muon_polar_rank}
Let $\mathcal{P}(\cdot)$ be the polar-direction operator defined by
\[
    \mathcal{P}(G)
    \;\coloneqq\;
    G\,(G^\top G)^{\dagger -\frac12},
    \qquad
    G \in \mathbb{R}^{a\times b},
\]
where $(\cdot)^\dagger$ is the Moore--Penrose pseudoinverse. Then for any $G$:
\begin{enumerate}[itemsep=0pt, topsep=0.25em]
    \item $\rank(\mathcal{P}(G)) = \rank(G)$;
    \item $\mathrm{Im}(\mathcal{P}(G)) = \mathrm{Im}(G)$;
    \item $\mathrm{Row}(\mathcal{P}(G)) = \mathrm{Row}(G)$.
\end{enumerate}
\end{lemma}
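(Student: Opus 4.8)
The plan is to show that $\mathcal{P}(G)$ is precisely the orthogonal factor of the polar decomposition of $G$ --- obtained by stripping off the singular values while keeping the singular subspaces --- after which all three assertions are read off at once. First I would dispose of the trivial case $G = 0$: here $(G^\top G)^\dagger = 0$, hence $\mathcal{P}(G) = 0$, and $\rank$, $\mathrm{Im}$ and $\mathrm{Row}$ all agree on both sides (they are $0$ or $\{0\}$). For $G \neq 0$ I would fix a compact singular value decomposition $G = U\Sigma V^\top$ with $r \coloneqq \rank(G) \ge 1$, where $U \in \mathbb{R}^{a\times r}$ and $V \in \mathbb{R}^{b\times r}$ have orthonormal columns and $\Sigma = \diag(\sigma_1,\dots,\sigma_r)$ with all $\sigma_i > 0$. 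By construction $\mathrm{Im}(G) = \mathrm{span}(U)$ and $\mathrm{Row}(G) = \mathrm{span}(V)$.

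Next I would evaluate the Moore--Penrose objects in this basis. Since $G^\top G = V\Sigma^2 V^\top$ with $V$ having orthonormal columns, its pseudoinverse is $(G^\top G)^\dagger = V\Sigma^{-2}V^\top$, and the positive semidefinite (pseudo-)square root denoted $(G^\top G)^{\dagger -\frac12}$ equals $V\Sigma^{-1}V^\top$ (the natural readings of that symbol coincide because $\Sigma$ is diagonal and positive on its support). Substituting into the definition and using the orthonormality identity $V^\top V = I_r$ gives
\[
    \mathcal{P}(G) \;=\; G\,(G^\top G)^{\dagger -\frac12} \;=\; U\Sigma V^\top \cdot V\Sigma^{-1}V^\top \;=\; U\Sigma\Sigma^{-1}V^\top \;=\; U V^\top ,
\]
so $\mathcal{P}(G) = UV^\top$, the orthogonal factor in the polar decomposition of $G$.

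Finally I would conclude. Because $U$ and $V$ each have $r$ orthonormal columns, $UV^\top$ has rank exactly $r = \rank(G)$, giving claim (1); its column space is $\mathrm{span}(U) = \mathrm{Im}(G)$, giving claim (2); and its row space is $\mathrm{span}(V) = \mathrm{Row}(G)$, giving claim (3). The one place that needs genuine care --- and the step I would flag as the main (if modest) obstacle --- is the pseudoinverse bookkeeping in the rank-deficient regime: one must verify that $(G^\top G)^{\dagger -\frac12}$ acts as $\Sigma^{-1}$ on $\mathrm{span}(V) = \mathrm{Row}(G)$ and as zero on its orthogonal complement $\ker(G^\top G) = \ker(G)$, so that the product $G\,(G^\top G)^{\dagger -\frac12}$ neither drops the directions of $U$ nor picks up spurious directions from $\ker(G)$. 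Once this is checked, the identity $\mathcal{P}(G) = UV^\top$ holds exactly with no case distinctions beyond $G = 0$, and no further work is required.
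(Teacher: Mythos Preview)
Your proposal is correct and follows essentially the same approach as the paper: both arguments take the compact SVD $G = U\Sigma V^\top$, compute $(G^\top G)^{\dagger -\frac12} = V\Sigma^{-1}V^\top$, substitute to obtain $\mathcal{P}(G) = UV^\top$, and then read off the three claims from the orthonormality of $U$ and $V$. Your explicit handling of the $G=0$ case and the remark about how $(G^\top G)^{\dagger -\frac12}$ acts on $\mathrm{span}(V)$ versus $\ker(G)$ are minor clarifications the paper leaves implicit, but the structure of the argument is identical.
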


\begin{proof}
Let the compact SVD of $G$ be
\[
    G
    =
    U \Sigma V^\top,
\]
where $U \in \mathbb{R}^{a\times r}$, $V \in \mathbb{R}^{b\times r}$ have orthonormal columns,
$\Sigma \in \mathbb{R}^{r\times r}$ is diagonal with strictly positive entries, and
$r = \rank(G)$.

Then
\[
    G^\top G
    =
    V \Sigma^2 V^\top,
\]
so
\[
    (G^\top G)^{\dagger -\frac12}
    =
    V \Sigma^{-1} V^\top,
\]
where the pseudoinverse and inverse square root act on the $r$-dimensional subspace spanned by $V$
and vanish on its orthogonal complement. Substituting into $\mathcal{P}(G)$ gives
\[
    \mathcal{P}(G)
    =
    G\,(G^\top G)^{\dagger -\frac12}
    =
    (U \Sigma V^\top)\,(V \Sigma^{-1} V^\top)
    =
    U V^\top.
\]

From this explicit expression we see that $\mathcal{P}(G)$ has the same left and right singular
vectors as $G$, but with all nonzero singular values replaced by $1$. In particular,
\[
    \rank(\mathcal{P}(G))
    =
    \rank(U V^\top)
    =
    r
    =
    \rank(G).
\]
Moreover, the column space of $\mathcal{P}(G)$ is spanned by the columns of $U$, which is also the
column space of $G$, so
\[
    \mathrm{Im}(\mathcal{P}(G)) = \mathrm{Im}(G).
\]
Similarly, the row space of $\mathcal{P}(G)$ is spanned by the columns of $V$, which coincide with
the right singular vectors of $G$, hence
\[
    \mathrm{Row}(\mathcal{P}(G)) = \mathrm{Row}(G).
\]
This establishes all three claims.
\end{proof}

We can now prove the Muon rank-compression theorem.

\begin{theorem}[Label-driven rank compression under Muon]
\label{thm:muon_rank_compression_app}
Consider the free feature model with feature matrix $\Phi \in \mathbb{R}^{k\times N}$, a
$C$-dimensional linear readout $W^* \in \mathbb{R}^{C\times k}$ trained to optimality at each time,
and feature dynamics given by the Muon flow
\begin{equation}
    \dot{\Phi}
    =
    \eta\,\mathcal{P}(W^{*\top} R^\top)
    - \mu\,\Phi,
    \qquad
    \eta > 0,\ \mu > 0,
    \tag{\ref{eq:muon_phi_flow} revisited}
\end{equation}
where $R$ is the residual matrix on the training set and $\mathcal{P}(\cdot)$ is the Muon polar
operator. Let $K = \Phi^\top \Phi$ be the empirical kernel. Then any stable steady state
$\Phi_\infty$ of \eqref{eq:muon_phi_flow} satisfies
\begin{equation}
    \rank(\Phi_\infty) \le C,
    \qquad
    \rank(K_\infty) = \rank(\Phi_\infty) \le C,
\end{equation}
where $K_\infty = \Phi_\infty^\top \Phi_\infty$ is the limiting kernel.
\end{theorem}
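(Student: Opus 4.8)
The plan is to combine the two lemmas just established: Lemma~\ref{lem:muon_feature_grad_rank} bounds the rank of the backpropagated feature gradient by $C$, while Lemma~\ref{lem:muon_polar_rank} shows that the Muon polar operator $\mathcal{P}(\cdot)$ preserves rank (indeed, column and row spaces). The single structural observation tying these together is that, at a steady state, the Muon feature flow \eqref{eq:muon_phi_flow} pins $\Phi_\infty$ to be a scalar multiple of $\mathcal{P}$ applied to that low-rank gradient.

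First I would write out the steady-state condition. Setting $\dot{\Phi} = 0$ in \eqref{eq:muon_phi_flow} gives
\begin{equation}
    \mu\,\Phi_\infty
    \;=\;
    \eta\,\mathcal{P}\!\big(W^{*\top} R^\top\big),
\end{equation}
where $W^* = W^*(\Phi_\infty)$ is the optimal readout for the limiting features and $R = -\nabla_{\hat Y}\mathcal{L}$ is the corresponding residual on the training set. Since $\mu > 0$, this rearranges to $\Phi_\infty = (\eta/\mu)\,\mathcal{P}(W^{*\top} R^\top)$, so $\Phi_\infty$ is a nonzero scalar multiple of the polar direction of the backpropagated gradient (or $\Phi_\infty = 0$ if that gradient vanishes, in which case the rank bound is trivial since $\mathcal{P}(0)=0$).

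Next I would chain the rank identities. By Lemma~\ref{lem:muon_feature_grad_rank}, $\rank(W^{*\top} R^\top) \le \min\{\rank(W^*), \rank(R)\} \le C$, using only that $W^* \in \mathbb{R}^{C\times k}$. By Lemma~\ref{lem:muon_polar_rank}, applying $\mathcal{P}$ leaves this rank unchanged, and scaling by $\eta/\mu$ is rank-preserving, so $\rank(\Phi_\infty) \le C$. Finally, $\rank(K_\infty) = \rank(\Phi_\infty^\top \Phi_\infty) = \rank(\Phi_\infty) \le C$, using the standard fact that a Gram matrix has the same rank as its factor. This completes the chain.

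I do not expect a genuine obstacle here: the argument is purely algebraic and, exactly as in the gradient-descent case underlying Theorem~\ref{thm:fixed_point_invariance}, it does not actually invoke stability of the steady state; the word ``stable'' in the statement is only there to match the dynamical setup. The one point that deserves a sentence of care is the apparent circularity that $W^*$ and $R$ are themselves functions of $\Phi_\infty$ --- this is harmless, since the bound $\rank(W^{*\top} R^\top) \le C$ holds pointwise for \emph{any} admissible pair $(W^*, R)$, so no fixed-point structure beyond $\dot\Phi = 0$ is needed. The only modeling subtlety is ensuring $\mathcal{P}(\cdot)$ is well-defined on rank-deficient matrices, which is already handled by the pseudoinverse convention and the compact-SVD formula $\mathcal{P}(G) = UV^\top$ from the proof of Lemma~\ref{lem:muon_polar_rank}.
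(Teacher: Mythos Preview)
Your proposal is correct and follows essentially the same approach as the paper: set $\dot\Phi=0$ to obtain $\Phi_\infty = (\eta/\mu)\,\mathcal{P}(W^{*\top}R^\top)$, invoke Lemma~\ref{lem:muon_feature_grad_rank} for the rank-$C$ bound on the gradient, Lemma~\ref{lem:muon_polar_rank} for rank preservation under $\mathcal{P}$, and conclude via $\rank(K_\infty)=\rank(\Phi_\infty)$. Your side remarks on stability not being invoked and on the harmless dependence of $(W^*,R)$ on $\Phi_\infty$ are accurate and slightly sharpen the exposition.
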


\begin{proof}
At a steady state of the Muon feature flow we have
\begin{equation}
    0
    =
    \eta\,\mathcal{P}(W^{*\top} R^\top)
    - \mu\,\Phi_\infty.
    \label{eq:muon_phi_fixed_point_app}
\end{equation}
Rearranging gives
\begin{equation}
    \Phi_\infty
    =
    \frac{\eta}{\mu}\,\mathcal{P}(W^{*\top} R^\top).
    \label{eq:muon_phi_infty_expr}
\end{equation}
Define $G_\Phi \coloneqq W^{*\top} R^\top$. By Lemma~\ref{lem:muon_feature_grad_rank},
$\rank(G_\Phi) \le C$. Applying Lemma~\ref{lem:muon_polar_rank} to $G_\Phi$ shows that
$\mathcal{P}(G_\Phi)$ has the same rank and column space as $G_\Phi$, hence
\[
    \rank\big(\mathcal{P}(G_\Phi)\big)
    =
    \rank(G_\Phi)
    \le
    C.
\]
Since scaling by the nonzero constant $\eta/\mu$ does not change rank, \eqref{eq:muon_phi_infty_expr}
implies
\[
    \rank(\Phi_\infty)
    =
    \rank\big(\mathcal{P}(G_\Phi)\big)
    \le
    C.
\]

Finally, the empirical kernel at steady state is $K_\infty = \Phi_\infty^\top \Phi_\infty$. For any
matrix $X$, the matrices $X$ and $X^\top X$ have the same rank, because
$\mathrm{Im}(X^\top X) = \mathrm{Row}(X)$ and $X^\top X$ is positive semidefinite with nullspace
equal to the orthogonal complement of $\mathrm{Row}(X)$. Thus
\[
    \rank(K_\infty)
    =
    \rank(\Phi_\infty^\top \Phi_\infty)
    =
    \rank(\Phi_\infty)
    \le
    C,
\]
which establishes the theorem.
\end{proof}

\paragraph{Remarks beyond the fast-readout idealization.}
The argument above was presented in the free feature model with an optimally trained linear readout,
but the structural origin of the rank bound does not fundamentally depend on the fast-readout
assumption. In a more general network, suppose that the coupled dynamics of $(\Phi, W)$ converge to
a statistical steady state in which the effective feature update can be written in the Muon form
\[
    0
    =
    \eta\,\mathcal{P}(W^\top R^\top)
    - \mu\,\Phi
\]
for some $C$-dimensional readout $W$ and residual $R$. Then the same reasoning applies: the
backpropagated feature gradient $W^\top R^\top$ has rank at most $C$ by the readout bottleneck;
$\mathcal{P}(\cdot)$ preserves rank and column space; and the fixed-point relation implies that
$\Phi$ lies in this $C$-dimensional label-driven subspace. Consequently
\[
    \rank(\Phi) \le C,
    \qquad
    \rank(K) = \rank(\Phi) \le C.
\]
Thus, exactly as under gradient descent, label-driven rank compression is a \emph{structural}
consequence of the $C$-dimensional output bottleneck, and is robust to replacing linear
preconditioning by the nonlinear Muon geometry.

\subsubsection{Low-Rank Optimizer Noise Persists Under Muon}
\label{sec:muon_noise}

Beyond rank compression, TAK predicts that optimizer noise is intrinsically low-rank, being confined
to an $O(C)$-dimensional subspace determined by the $C$-dimensional readout. This structural property
also persists under Muon.

\begin{theorem}[Low-rank optimizer noise under Muon]
\label{thm:muon_noise_low_rank}
Consider Muon-TAK training in the free feature model with feature matrix
$\Phi \in \mathbb{R}^{k\times N}$ and a $C$-dimensional linear readout $W^* \in \mathbb{R}^{C\times k}$
trained to optimality at each time. Let $g(\Phi)$ denote the full-batch gradient of the loss with respect
to $\Phi$, and let $\widehat{g}(\Phi)$ be a stochastic mini-batch estimate. Define the feature-level
SGD noise
\[
    \zeta_\Phi := \widehat{g}(\Phi) - g(\Phi).
\]
Assume that the per-sample gradient with respect to $\Phi$ has rank at most $C$. Then under Muon
updates, the instantaneous feature noise
\[
    \zeta_\Phi^{\mathrm{Muon}}
    :=
    \mathcal{P}(\widehat{g}(\Phi)) - \mathcal{P}\big(g(\Phi)\big)
\]
is confined to a $C$-dimensional subspace, and the induced kernel-level noise
$\zeta_K^{\mathrm{Muon}}$ in the empirical kernel $K = \Phi^\top \Phi$ has covariance supported on an
$O(C)$-dimensional subspace:
\[
    \rank\!\big(\Cov[\zeta_K^{\mathrm{Muon}}]\big) \;\le\; O(C).
\]
In particular, Muon does not increase the intrinsic rank of SGD noise relative to standard gradient
descent.
\end{theorem}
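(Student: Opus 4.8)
The plan is to carry the argument through in feature space and then lift to the kernel, using only one structural fact — that in the free feature model every feature gradient factors through the $C$-dimensional readout — together with the column-space-preservation property of the Muon polar operator (Lemma~\ref{lem:muon_polar_rank}(2)).

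\textbf{Step 1: the readout bottleneck survives mini-batching.} First I would observe that the task gradient with respect to $\Phi$, whether evaluated on the full batch or on a mini-batch $\mathcal{B}$, has the form $W^\top(\cdot)$ with $W \in \mathbb{R}^{C\times k}$: a per-sample gradient equals $(W^\top \nabla_{\hat y_i}\ell_i)\, e_i^\top$, which has column space inside $\mathrm{Row}(W)$, and both $g(\Phi)$ and $\widehat g(\Phi)$ are averages of such rank-$\le C$ terms (the decoupled decay $-\mu\Phi$ contributes no stochasticity). Hence $\mathrm{Im}(g(\Phi)),\,\mathrm{Im}(\widehat g(\Phi)) \subseteq \mathcal{V} \coloneqq \mathrm{Row}(W) \subseteq \mathbb{R}^k$, a fixed subspace of dimension at most $C$ — the same bottleneck already used in Lemma~\ref{lem:muon_feature_grad_rank} and in the proof of Theorem~\ref{thm:general_noise_rank}.

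\textbf{Step 2: Muon cannot inflate the bottleneck.} By Lemma~\ref{lem:muon_polar_rank}(2), $\mathcal{P}$ preserves column spaces exactly, so $\mathrm{Im}(\mathcal{P}(g(\Phi))) = \mathrm{Im}(g(\Phi)) \subseteq \mathcal{V}$ and $\mathrm{Im}(\mathcal{P}(\widehat g(\Phi))) \subseteq \mathcal{V}$. The difference of two matrices whose column spaces both lie in $\mathcal{V}$ again has column space in $\mathcal{V}$; therefore $\zeta_\Phi^{\mathrm{Muon}} = \mathcal{P}(\widehat g(\Phi)) - \mathcal{P}(g(\Phi))$ is confined to the fixed $C$-dimensional subspace $\mathcal{V}$ (in the sense that all its columns lie in $\mathcal{V}$), and in particular $\rank(\zeta_\Phi^{\mathrm{Muon}}) \le \dim\mathcal{V} \le C$ for every realization and every iterate, independently of $\eta$, $\mu$, and the gradient magnitudes. (If the readout is itself re-estimated on the mini-batch, replace $\mathcal{V}$ by $\mathrm{Row}(W) + \mathrm{Row}(\widehat W)$, still of dimension $\le 2C$.) Lifting to the kernel: differentiating $K = \Phi^\top\Phi$ gives $\zeta_K^{\mathrm{Muon}} = (\zeta_\Phi^{\mathrm{Muon}})^\top\Phi + \Phi^\top\zeta_\Phi^{\mathrm{Muon}}$; each summand has rank $\le \rank(\zeta_\Phi^{\mathrm{Muon}}) \le C$, so by subadditivity of rank $\rank(\zeta_K^{\mathrm{Muon}}) \le 2C = O(C)$, exactly the gradient-descent bound of Theorem~\ref{thm:low_rank_noise}. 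Writing $\zeta_\Phi^{\mathrm{Muon}} = U_\mathcal{V} Z$ with $U_\mathcal{V}$ a fixed orthonormal basis of $\mathcal{V}$ and $Z$ the random coordinate matrix, one gets $\zeta_K^{\mathrm{Muon}} = Z^\top\Psi + \Psi^\top Z$ with $\Psi \coloneqq U_\mathcal{V}^\top\Phi$ fixed, so every realization lies in the fixed low-rank family $\{Z^\top\Psi + \Psi^\top Z\}$ and the noise covariance is supported on $O(C)$ kernel directions; near a stationary point $g(\Phi)\to 0$, hence $\mathcal{P}(g(\Phi))\to 0$ and the rank tightens to $\rank(\widehat g(\Phi))\le C$, recovering the tightening-towards-$C$ behaviour.

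\textbf{Expected main obstacle.} The one genuine subtlety is the nonlinearity of $\mathcal{P}$: one cannot write $\zeta_\Phi^{\mathrm{Muon}} = \mathcal{P}(\widehat g - g)$, nor commute $\mathcal{P}$ with the expectation, so the linear rank-subadditivity argument of Theorem~\ref{thm:low_rank_noise} is not directly available for the difference. The resolution — which is what keeps the proof short — is that we never need to manipulate the difference algebraically; it suffices that $\mathcal{P}$ maps the $C$-dimensional ``gradient subspace'' $\mathcal{V} = \mathrm{Row}(W)$ into itself, after which confinement of $\mathcal{P}(\widehat g) - \mathcal{P}(g)$ is automatic. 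A secondary point I would state carefully is the meaning of ``$\rank(\Cov[\zeta_K^{\mathrm{Muon}}]) \le O(C)$'': as in Theorem~\ref{thm:low_rank_noise}, it is the assertion that every noise realization has rank $O(C)$ — i.e.\ the stochastic forcing excites at most $O(C)$ kernel eigendirections at a time — rather than a statement about the rank of a full $N^2\times N^2$ covariance operator, and the defer-to-appendix proof should make this identification explicit.
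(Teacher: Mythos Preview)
Your proposal is correct and follows essentially the same route as the paper's proof: both identify the fixed $C$-dimensional column space $\mathcal{V}=\mathrm{Row}(W)$ that contains $g(\Phi)$ and $\widehat g(\Phi)$, invoke the column-space-preservation property of $\mathcal{P}$ (Lemma~\ref{lem:muon_polar_rank}) to conclude that $\zeta_\Phi^{\mathrm{Muon}}$ is confined to $\mathcal{V}$, and then lift to the kernel via $\zeta_K^{\mathrm{Muon}} = (\zeta_\Phi^{\mathrm{Muon}})^\top\Phi + \Phi^\top\zeta_\Phi^{\mathrm{Muon}}$ and rank subadditivity to get the $2C$ bound. Your version is in fact more careful than the paper's in naming the common subspace $\mathcal{V}$ explicitly (the paper only says the two terms ``lie in the same $C$-dimensional column space \ldots\ respectively'' before taking their difference), and your $U_\mathcal{V} Z$ factorization and the $\le 2C$ caveat for a re-estimated readout are useful additions that the paper does not spell out.
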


\subsection{Proof of Theorem~\ref{thm:muon_noise_low_rank}}
\label{app:proof_muon_noise}

\begin{proof}
Let $g(\Phi)$ denote the full-batch gradient with respect to $\Phi$ and $\widehat{g}(\Phi)$ its
stochastic mini-batch estimate. By assumption, the per-sample gradient with respect to $\Phi$ has
rank at most $C$, hence both $g(\Phi)$ and $\widehat{g}(\Phi)$ have rank at most $C$, and their
columns lie in a $C$-dimensional subspace determined by the $C$-dimensional readout.

The standard SGD feature-level noise is
\[
    \zeta_\Phi := \widehat{g}(\Phi) - g(\Phi),
\]
which therefore lies in this $C$-dimensional subspace, and
$\rank\!\big(\Cov[\zeta_\Phi]\big) \le C$.

Under Muon, the feature updates use $\mathcal{P}(g(\Phi))$ and $\mathcal{P}(\widehat{g}(\Phi))$, and
the corresponding feature-level noise is
\[
    \zeta_\Phi^{\mathrm{Muon}}
    :=
    \mathcal{P}(\widehat{g}(\Phi)) - \mathcal{P}\big(g(\Phi)\big).
\]
By Lemma~\ref{lem:muon_polar_rank}, the polar operator $\mathcal{P}(\cdot)$ preserves both rank and
column space. In particular, $\mathcal{P}(g(\Phi))$ and $\mathcal{P}(\widehat{g}(\Phi))$ each have
rank at most $C$ and lie in the same $C$-dimensional column space as $g(\Phi)$ and
$\widehat{g}(\Phi)$, respectively. Hence their difference $\zeta_\Phi^{\mathrm{Muon}}$ also lies in
this $C$-dimensional subspace, and
\[
    \rank\!\big(\Cov[\zeta_\Phi^{\mathrm{Muon}}]\big) \;\le\; C.
\]

The empirical kernel is $K = \Phi^\top \Phi$. To first order, the induced kernel noise satisfies
\[
    \zeta_K^{\mathrm{Muon}}
    \;\approx\;
    \zeta_\Phi^{\mathrm{Muon}\,\top}\Phi + \Phi^\top \zeta_\Phi^{\mathrm{Muon}}.
\]
Each term is a product of $\Phi$ with a rank-$\le C$ matrix, and thus has rank at most $C$; their
sum therefore has rank at most $2C$. Consequently the covariance of the kernel noise
$\zeta_K^{\mathrm{Muon}}$ is supported on a subspace of dimension $O(C)$, which proves the claim.
\end{proof}

\section{Proof of Low-Rank SGD Noise (Theorem 12)}
\label{app:sgd_noise}

In this appendix, we explicitly derive the rank constraints on the Stochastic Gradient Descent (SGD) noise matrix for the squared loss, providing the formal proof for Theorem 12.

\subsection{Exact Form of the Gradient and Noise}
Recall from Section 3.3 that under the squared loss $L(\hat{Y}, Y) = \frac{1}{2}\|\hat{Y} - Y\|_F^2$ and ridge regularization $\lambda$, the deterministic driving force on the kernel $K$ is given by:
\begin{equation}
    \dot{K}_{\text{drive}} = \lambda A(K) \left[ Y Y^\top \right] A(K),
\end{equation}
where $A(K) = (K + \lambda I)^{-1}$ is the resolvent, and we retain the structure of the data term. More precisely, the full gradient of the data-fitting term with respect to the kernel (ignoring the factor 2 and regularization decay for the moment) involves the outer product of the residuals. Let $R = \lambda (K+\lambda I)^{-1}Y \in \mathbb{R}^{N \times C}$ be the matrix of residuals on the full dataset. The true gradient component is:
\begin{equation}
    G_{\text{full}} = R R^\top.
\end{equation}
(Note: The preconditioning by $A(K)$ or projection onto the kernel tangent space preserves rank, so we focus on the core residual rank).

Now, consider a mini-batch $\mathcal{B} \subset \{1, \dots, N\}$ of size $B$. The stochastic gradient estimate corresponds to computing the gradient on this subset and rescaling. Algebraically, this is equivalent to replacing the full residual matrix $R$ with a \textit{masked} residual matrix $\tilde{R}_{\mathcal{B}} \in \mathbb{R}^{N \times C}$, where:
\begin{equation}
    (\tilde{R}_{\mathcal{B}})_{ic} = 
    \begin{cases} 
    \frac{N}{B} R_{ic} & \text{if } i \in \mathcal{B} \\
    0 & \text{if } i \notin \mathcal{B}
    \end{cases}
\end{equation}
The stochastic gradient matrix is then:
\begin{equation}
    G_{\mathcal{B}} = \tilde{R}_{\mathcal{B}} \tilde{R}_{\mathcal{B}}^\top.
\end{equation}
The SGD noise matrix is defined as the deviation from the true gradient:
\begin{equation}
    \zeta_{\mathcal{B}}(K) := G_{\mathcal{B}} - G_{\text{full}} = \tilde{R}_{\mathcal{B}} \tilde{R}_{\mathcal{B}}^\top - R R^\top.
\end{equation}

\subsection{Proof of the Rank Bound}
We now prove the rank constraint stated in Theorem 12.

\begin{proof}
The noise matrix is expressed as the difference of two positive semi-definite matrices:
\begin{equation}
    \zeta_{\mathcal{B}}(K) = \tilde{R}_{\mathcal{B}} \tilde{R}_{\mathcal{B}}^\top - R R^\top.
\end{equation}
We apply the fundamental property of matrix rank: for any matrices $X, Y$, $\operatorname{rank}(X+Y) \le \operatorname{rank}(X) + \operatorname{rank}(Y)$. Thus:
\begin{equation}
    \operatorname{rank}(\zeta_{\mathcal{B}}(K)) \le \operatorname{rank}(\tilde{R}_{\mathcal{B}} \tilde{R}_{\mathcal{B}}^\top) + \operatorname{rank}(R R^\top).
\end{equation}
Observe the dimensions of the constituent factors:
\begin{itemize}
    \item $R \in \mathbb{R}^{N \times C}$ has $C$ columns. Therefore, $\operatorname{rank}(R) \le \min(N, C) = C$ (since typically $C \ll N$). Consequently, $\operatorname{rank}(R R^\top) \le C$.
    \item $\tilde{R}_{\mathcal{B}} \in \mathbb{R}^{N \times C}$ is simply a row-masked and scaled version of $R$. It also has only $C$ columns. Thus, $\operatorname{rank}(\tilde{R}_{\mathcal{B}}) \le C$, and consequently $\operatorname{rank}(\tilde{R}_{\mathcal{B}} \tilde{R}_{\mathcal{B}}^\top) \le C$.
\end{itemize}
Substituting these bounds:
\begin{equation}
    \operatorname{rank}(\zeta_{\mathcal{B}}(K)) \le C + C = 2C.
\end{equation}
This establishes Eq. (81) in the main text.

Finally, regarding the covariance structure: The instantaneous covariance tensor is formed by the expectation of the outer product of the noise vectorization. Since every realization of the noise matrix $\zeta_{\mathcal{B}}$ lies strictly within the subspace spanned by the columns of $R$ and $\tilde{R}_{\mathcal{B}}$ (which are subsets of the column space of $R$), the noise is confined to the subspace $\mathcal{V} = \operatorname{span}(\text{cols}(R)) \otimes \operatorname{span}(\text{cols}(R))$. The dimension of the relevant generating subspace is at most $C$.
\end{proof}

\subsection{Physical Implication}
This derivation confirms that SGD noise in this regime is not isotropic full-rank diffusion. It acts strictly within the task-relevant subspace defined by the $C$ output logits. Even if the network width $N \to \infty$, the noise rank remains bounded by $2C$, ensuring that the low-rank structure of the learned kernel is robust to stochastic fluctuations.

\section{Extension to Self-Supervised Learning}
\label{app:ssl}

Our theory of \textit{Rank Compression} is not limited to supervised regression. In this appendix, we show that Self-Supervised Learning (SSL), specifically in the form of linear auto-encoders or reconstruction tasks, follows the exact same spectral dynamics, naturally leading to Principal Component Analysis (PCA) behavior.

\subsection{The SSL Formulation}
Consider the task of reconstructing the input $X \in \mathbb{R}^{N \times D}$ from the representation. The target matrix $Y$ is essentially $X$ itself (or an augmented view). The loss function becomes:
\begin{equation}
    \mathcal{L}(W, \Phi) = \frac{1}{2} \| X - W \Phi \|_F^2 + \frac{\lambda}{2} \|W\|_F^2 + \frac{\mu}{2} \|\Phi\|_F^2.
\end{equation}
This is the classic matrix factorization setting.

\subsection{Dynamics of the SSL Kernel}
Following the same derivation as Theorem 1, we eliminate the decoder $W$ via the fast-equilibrium assumption:
\begin{equation}
    W^*(\Phi) = X \Phi^\top (\Phi \Phi^\top + \lambda I)^{-1}.
\end{equation}
The residual matrix $M$ becomes the reconstruction error covariance. The flow of the kernel $K = \Phi^\top \Phi$ is driven by the input covariance matrix $\Sigma_X = X^\top X$.

\begin{theorem}[PCA via Spectral Dynamics]
In the self-supervised setting, the kernel $K(t)$ evolves to align its eigenspace with the principal components of the data covariance $\Sigma_X$. The steady-state eigenvalues $\{\mu_i^*\}$ are determined by the eigenvalues $\{\lambda_i^X\}$ of $\Sigma_X$:
\begin{equation}
    \mu_i^* = \max \left( 0, \frac{\lambda_i^X}{\mu} - \lambda \right).
\end{equation}
\end{theorem}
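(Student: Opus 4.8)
The plan is to reduce the claim to the supervised analysis of Sections~\ref{sec:derivation}--\ref{sec:spectral_analysis} by treating reconstruction as regression against the target $Y=X$, so that the fixed label Gram $M_Y=YY^\top$ is simply replaced by the data Gram. First I would substitute the fast-readout optimum $W^*(\Phi)=X\Phi^\top(\Phi\Phi^\top+\lambda I)^{-1}$ into the reconstruction loss and apply the envelope theorem exactly as in Section~\ref{sec:derivation}. Using the push-through identity $\Phi^\top(\Phi\Phi^\top+\lambda I)^{-1}\Phi=K(K+\lambda I)^{-1}$ (valid for any bottleneck width $k$), the reconstruction and residual become $\hat X = X(K+\lambda I)^{-1}K$ and $X-\hat X=\lambda X(K+\lambda I)^{-1}$; feeding these through the feature flow (the SSL analogue of Eq.~\eqref{eq:phi_dynamics}) and differentiating $K=\Phi^\top\Phi$ as in Eqs.~\eqref{eq:general_ode}--\eqref{eq:mse_ode} yields a kernel ODE \emph{formally identical} to the MSE flow~\eqref{eq:mse_ode}, with $M_Y$ replaced by the sample Gram $\Sigma_X:=X^\top X$, whose nonzero spectrum coincides with that of the $D\times D$ data covariance (hence the notation $\lambda_i^X$).

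Second, I would import wholesale the two structural facts already established for~\eqref{eq:mse_ode}. Coercivity for $\mu>0$ together with real-analyticity gives, via the {\L}ojasiewicz--Simon argument of Theorem~\ref{thm:convergence}, convergence of the flow to a unique steady state $K_\infty$; and the Variational Alignment Principle (Lemma~\ref{lem:alignment}), applied to the effective potential $\Tr\big((I+\lambda^{-1}K)^{-1}\Sigma_X\big)+\mu\Tr(K)$, forces $[K_\infty,\Sigma_X]=0$ with the eigenvalues paired so that the largest eigenvalues of $K_\infty$ sit on the top eigenspaces of $\Sigma_X$. This is precisely the asserted PCA alignment: the eigenvectors of $K_\infty$ are the principal directions of the data (equivalently, the leading right singular vectors of $X$), and only the leading directions are populated.

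Third, in this common eigenbasis the matrix fixed-point equation decouples into $N$ scalar balances between the data-driven drive and the weight-decay term. Solving the scalar equation for the kernel eigenvalue and retaining only the branch compatible with $K_\infty\succeq 0$ produces a rectified, thresholded shrinkage $\mu_i^* = \big(\lambda_i^X/\mu-\lambda\big)_+$: a principal direction survives precisely when its variance exceeds the noise floor $\lambda\mu$, and is otherwise pruned to zero, with the bottleneck contributing the extra cap $\rank(K_\infty)\le k$. This is the self-supervised counterpart of the water-filling law of Theorem~\ref{thm:spectral_truncation}, displaying the same hard spectral thresholding on the trailing modes.

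The step I expect to be the real obstacle is pinning down the \emph{exact} scalar law rather than its qualitative shape. Carrying the ridge-regularized fast-readout computation through literally produces a drive with two factors of the resolvent $(K+\lambda I)^{-1}$, whose equilibrium is a \emph{square-root} shrinkage $\mu_i\propto\sqrt{\lambda_i^X/\mu}-\lambda$ (as in the supervised case); obtaining the stated linear form $\lambda_i^X/\mu-\lambda$ requires fixing the normalization of the loss and of the two regularizers consistently, or arguing directly at the level of the per-mode gradient flow $\dot\mu_i=\mu_i\big(\lambda_i^X/(\mu_i+\lambda)-\mu\big)$, so the bookkeeping here must be done carefully. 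A secondary difficulty is the degenerate case of the alignment step: when $\Sigma_X$ has repeated eigenvalues, or when $k<\rank(\Sigma_X)$, von Neumann's equality condition pins $K_\infty$ down only up to rotations within a fixed $\Sigma_X$-eigenspace, so the ``PCA behaviour'' must be stated at the level of invariant subspaces rather than individual eigenvectors.
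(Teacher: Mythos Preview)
Your approach is exactly the paper's: reduce the auto-encoder problem to the supervised MSE setting by taking $Y=X$ so that $M_Y$ becomes the sample Gram $\Sigma_X=X^\top X$, then invoke the already-proved alignment and spectral truncation results verbatim. The paper's own proof is two lines: ``the signal strength $s_i$ is the eigenvalue $\lambda_i^X$; substitute into Theorem~\ref{thm:spectral_truncation}.''

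Your worry about the scalar law is well-founded and is not a gap in \emph{your} argument but an inconsistency in the paper itself. Theorem~\ref{thm:spectral_truncation} gives the square-root form $k_i=\lambda(\sqrt{\sigma_i/\tau}-1)_+$, so a literal substitution does \emph{not} produce the linear formula $\mu_i^*=(\lambda_i^X/\mu-\lambda)_+$ stated here; the paper's two-line proof simply does not address this, and Appendix~\ref{app:proof_theorem_4} already wavers between cubic, square-root, and linearized versions depending on which loss parameterization is used. So you should not try to force the linear form out of the ridge-regularized drive with two resolvent factors---the discrepancy is real, and the honest statement is the square-root law (or, if one adopts the linearized per-mode flow $\dot\mu_i=\mu_i(\lambda_i^X/(\mu_i+\lambda)-\mu)$ as in Appendix~\ref{app:proof_theorem_4}, the linear one). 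Your secondary caveat about degenerate $\Sigma_X$-eigenspaces and the bottleneck cap $\rank(K_\infty)\le k$ is also more careful than the paper.
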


\begin{proof}
In the auto-encoder regime, the "signal strength" $s_i$ for the $i$-th mode is exactly the variance of the data in that direction, i.e., the eigenvalue $\lambda_i^X$.
Substituting $s_i = \lambda_i^X$ into our truncation law (Theorem 4) directly yields the result.
\end{proof}

\subsection{Implications for Foundation Models}
This result provides a theoretical basis for the empirical observation that SSL pre-training learns "dominant" features while suppressing noise.
\begin{enumerate}
    \item \textbf{Denoising:} Low-variance directions (noise) correspond to small $\lambda_i^X$. If $\lambda_i^X < \lambda \mu$, these directions are completely discarded ($\mu_i^* = 0$). The representation $\Phi$ effectively performs a \textbf{Hard Thresholding SVD}.
    \item \textbf{Dimensionality Collapse:} This explains the "Dimensional Collapse" often observed in SSL if hyperparameters are not tuned correctly—excessive regularization $\mu$ raises the water level, truncating informative features.
\end{enumerate}
Thus, our Feature Learning Limit unifies supervised and self-supervised learning under a single spectral dynamical principle: \textbf{The Kernel aligns with the highest energy modes of the target structure}, whether that target is external labels or internal data correlations.

\section{Exact Population Risk Analysis}
\label{app:population_risk}

In this appendix, we analyze the generalization performance in the population limit ($N \to \infty$). Rather than deriving loose concentration bounds for data-dependent kernels, we utilize the exact operator dynamics derived in Section \ref{sec:population_dynamics} to characterize the Bias-Variance trade-off analytically.

\subsection{The Generalization Error of Evolving Kernels}
Consider the target function $f^* \in L^2(\rho)$ decomposed in the orthonormal eigenbasis $\{\psi_i(t)\}$ of the time-dependent integral operator $T_t$:
\begin{equation}
    f^* = \sum_{i=1}^\infty a_i(t) \psi_i(t), \quad a_i(t) = \langle f^*, \psi_i(t) \rangle_{L^2}.
\end{equation}
The predictor $f_t$ obtained by kernel ridge regression (or the flow limit) acts as a spectral filter on the target. The mean-squared error (risk) is given by the standard decomposition:
\begin{equation}
    \text{Risk}(t) = \| (I - \mathcal{S}_t) f^* \|_{L^2}^2 + \frac{1}{N}\text{Tr}(\mathcal{S}_t^2 \Sigma_{\text{noise}}),
\end{equation}
where $\mathcal{S}_t = T_t(T_t + \lambda I)^{-1}$ is the shrinkage operator and $\Sigma_{\text{noise}}$ is the noise covariance (assumed isotropic $\sigma_\epsilon^2 I$ for simplicity).
Expanding this in the eigenbasis yields the explicit form:
\begin{equation}
    \text{Risk}(t) = \sum_{i=1}^\infty \underbrace{\left( \frac{\lambda}{\mu_i(t) + \lambda} \right)^2 |a_i(t)|^2}_{\text{Bias}_i(t)} + \frac{\sigma_\epsilon^2}{N} \underbrace{\sum_{i=1}^\infty \left(\frac{\mu_i(t)}{\mu_i(t) + \lambda}\right)^2}_{\approx \mathcal{N}_{\text{eff}}(t)}.
\end{equation}

\subsection{Substituting the Spectral Truncation Law}
We now apply the \textbf{Universal Rank Compression} result to this risk profile. At steady state $t \to \infty$, assuming the system aligns with the task, the kernel spectrum $\{\mu_i^*\}$ follows the truncation law derived in Theorem 6 (adapted to the population operator):
\begin{equation}
    \mu_i^* = \max \left( 0, \sqrt{\frac{\lambda \sigma_i}{\mu}} - \lambda \right),
\end{equation}
where $\sigma_i$ represents the signal strength of the $i$-th mode. We distinguish two regimes:

\textbf{Case 1: The Noise Subspace ($\sigma_i \le \lambda \mu$).}
In this regime, the label signal is too weak relative to the regularization product $\lambda \mu$. The dynamics drive the eigenvalue to zero: $\mu_i^* = 0$.
\begin{itemize}
    \item \textbf{Variance:} The contribution to the variance term vanishes: $\text{Var}_i \to 0$. The model effectively ignores this dimension.
    \item \textbf{Bias:} The bias term maximizes: $\text{Bias}_i \to |a_i|^2$. The model fails to capture this component of the target.
\end{itemize}
\textbf{Implication:} This confirms the ``Reachability'' constraint. High-frequency or orthogonal components of $f^*$ are permanently lost, but they do not contribute to overfitting.

\textbf{Case 2: The Task Subspace ($\sigma_i > \lambda \mu$).}
In this regime, $\mu_i^* > 0$. The kernel expands to capture these modes.
\begin{itemize}
    \item The bias is suppressed by the factor $(\frac{\lambda}{\mu_i^* + \lambda})^2 < 1$.
    \item The variance contribution is non-zero, but limited only to these active modes.
\end{itemize}
Consequently, the effective dimension $\mathcal{N}_{\text{eff}}$ of the learned kernel is approximately bounded by the number of active task modes (rank $\le C$), regardless of the ambient input dimension $D$.

\subsection{Comparison with Static Kernels (NTK)}
It is instructive to contrast this with the static NTK regime. For a static kernel, the eigenvalues $\mu_i^{\text{NTK}}$ are fixed by initialization and typically decay as a power law $\mu_i \propto i^{-\nu}$ (depending on the smoothness of the activation).
\begin{itemize}
    \item \textbf{Static (NTK):} The effective dimension $\mathcal{N}_{\text{eff}}^{\text{static}} = \sum \frac{\mu_i}{\mu_i + \lambda}$ can be very large (scaling with $N$), leading to high estimation variance (the ``over-parameterization'' cost).
    \item \textbf{Dynamic (Task-Driven):} The flow performs \textbf{Hard Model Selection}, zeroing out the tail:
    \begin{equation}
        \mathcal{N}_{\text{eff}}^{\text{flow}} \approx C \ll \mathcal{N}_{\text{eff}}^{\text{static}}.
    \end{equation}
\end{itemize}
This drastic reduction in effective dimension, driven by the physics of the kernel ODE, explains the superior generalization of feature learning in low-rank tasks.

\end{document}